
\documentclass{article}

\usepackage{microtype}
\usepackage{graphicx}
\usepackage{booktabs} %

\usepackage{hyperref}

\usepackage[utf8]{inputenc}
\usepackage[ruled,linesnumbered]{algorithm2e}
\usepackage[noend]{algpseudocode}
\usepackage{amsmath,amsthm,amsfonts,amssymb,mathtools}
\usepackage{color}
\usepackage{mathrsfs}
\usepackage{enumitem}
\usepackage{multirow}
\usepackage{makecell}
\usepackage{caption}
\usepackage{thmtools}
\usepackage{thm-restate}
\usepackage{hhline}
\usepackage{cite}
\usepackage[table]{xcolor}
\usepackage{rotating} 
\usepackage{natbib}
\usepackage{bm}
\usepackage{diagbox}
\usepackage{cancel}
\usepackage{subcaption}

\usepackage[accepted]{icml2025}

\usepackage{amsmath}
\usepackage{amssymb}
\usepackage{mathtools}
\usepackage{amsthm}

\usepackage[capitalize,noabbrev]{cleveref}

\theoremstyle{plain}
\newtheorem{theorem}{Theorem}[section]

\newtheorem{lemma}[theorem]{Lemma}
\newtheorem{corollary}[theorem]{Corollary}
\theoremstyle{definition}
\newtheorem{definition}[theorem]{Definition}
\newtheorem{assumption}[theorem]{Assumption}
\theoremstyle{remark}
\newtheorem{remark}[theorem]{Remark}

\usepackage[textsize=tiny]{todonotes}

\icmltitlerunning{Can RLHF be More Efficient with Imperfect Reward Models?  A Policy Coverage Perspective}

\renewcommand{\epsilon}{\varepsilon}

\newcommand{\cA}{\mathcal{A}}

\newcommand{\cC}{\mathcal{C}}
\newcommand{\cD}{\mathcal{D}}
\newcommand{\cE}{\mathcal{E}}
\newcommand{\cF}{\mathcal{F}}

\newcommand{\cI}{\mathcal{I}}

\newcommand{\cL}{\mathcal{L}}

\newcommand{\cR}{\mathcal{R}}
\newcommand{\cS}{\mathcal{S}}

\newcommand{\cW}{\mathcal{W}}

\newcommand{\EE}{\mathbb{E}}

\newcommand{\bpi}{\bar{\pi}}

\newcommand{\hV}{{\hat{V}}}

\newcommand{\mP}{{\mathbb{P}}}
\newcommand{\mR}{{\mathbb{R}}}
\newcommand{\mI}{{\mathbb{I}}}
\newcommand{\mN}{{\mathbb{N}}}

\newcommand{\cov}{{\texttt{Cov}}}
\newcommand{\tT}{{\tilde{T}}}

\newcommand\numberthis{\addtocounter{equation}{1}\tag{\theequation}}

\let\hat\widehat
\let\tilde\widetilde

\theoremstyle{definition}
\newtheorem{condition}[theorem]{Condition}

\newcount\Comments  %
\Comments=0 %
\newcommand{\kibitz}[2]{\ifnum\Comments=0\textcolor{#1}{#2}\fi}

\newcount\Discussions  %
\Discussions=1 %

\newcommand{\blue}[1]{{\color{blue} #1}}
\newcommand{\red}[1]{{\color{red} #1}}

\newcommand{\argmax}{\arg\max}
\newcommand{\argmin}{\arg\min}

\newcommand{\textref}{{\texttt{ref}}}

\newcommand{\KL}{\text{KL}}

\newcommand{\hpi}{\hat{\pi}}
\newcommand{\tpi}{\tilde{\pi}}
\newcommand{\tPi}{\tilde{\Pi}}

\newcommand{\transfer}{{\text{Tsfr}}}

\newcommand{\mix}{{\text{mix}}}

\newcommand{\XPOReg}{\texttt{XPOReg}}

\newcommand{\TV}{\mathbb{TV}}
\newcommand{\mH}{\mathbb{H}}

\newcommand{\hr}{{\hat{r}}}

\newcommand{\ta}{\tilde{a}}

\newcommand{\sorted}{\text{sorted}}

\newcommand{\Offline}{\texttt{OFF}}
\newcommand{\Online}{\texttt{OL}}
\newcommand{\Transfer}{\texttt{Trf}}
\newcommand{\AlgOnline}{\texttt{Alg}_{\Online}}
\newcommand{\bonus}{\texttt{bonus}}
\newcommand{\Complexity}{\mathcal{C}}

\newcommand{\MLE}{\texttt{MLE}}

\newcommand{\base}{\Online}

\newcommand{\tK}{\tilde{K}}
\newcommand{\tN}{\tilde{N}}

\newcommand{\Coeff}{\Gamma}

\newcommand{\Tt}{t}

\newcommand{\Regret}{\text{Reg}}

\newcommand{{\Rmax}}{R}

\newcommand{\conv}{\text{conv}}

\newcommand{\TPO}{{\text{TPO}}}

\newcommand{\TPS}{{\text{TPS}}}

\newcommand{\XPO}{{\text{XPO}}}
\newcommand{\RPO}{{\text{RPO}}}
\newcommand{\DPO}{{\text{DPO}}}

\newcommand{\SELF}{{\text{Dstl}}}

\algnewcommand{\IfThenElse}[3]{\algorithmicif\ #1\ \algorithmicthen\ #2\ \algorithmicelse\ #3\\}

\begin{document}

\twocolumn[
\icmltitle{Can RLHF be More Efficient with Imperfect Reward Models? \\ A Policy Coverage Perspective}

\begin{icmlauthorlist}
\icmlauthor{Jiawei Huang}{ethz}
\icmlauthor{Bingcong Li}{ethz}
\icmlauthor{Christoph Dann}{google}
\icmlauthor{Niao He}{ethz}
\end{icmlauthorlist}

\icmlaffiliation{ethz}{ETH Zurich}
\icmlaffiliation{google}{Google Research}

\icmlcorrespondingauthor{Jiawei Huang}{jiawei.huang@inf.ethz.ch}
\icmlkeywords{Machine Learning, ICML}

\vskip 0.3in
]

\printAffiliationsAndNotice{}  %

\allowdisplaybreaks

\begin{abstract}
    Sample efficiency is critical for online Reinforcement Learning from Human Feedback (RLHF). While existing works investigate sample-efficient online exploration strategies, the potential of utilizing misspecified yet relevant reward models to accelerate learning remains underexplored. This paper studies how to transfer knowledge from those imperfect reward models in online RLHF. We start by identifying a novel property due to KL-regularization in the RLHF objective: \emph{a policy's coverability of the optimal policy is captured by its sub-optimality}. Building on this insight, we propose novel transfer learning principles and a theoretical algorithm---\emph{\textbf{T}ransfer \textbf{P}olicy \textbf{O}ptimization (\textbf{TPO})}---with provable benefits compared to standard online learning.
    Empirically, inspired by our theoretical findings, we develop a win-rate-based transfer policy selection strategy with improved computational efficiency. Moreover, our empirical transfer learning technique is modular and can be integrated with various policy optimization methods, such as DPO, IPO and XPO, to further enhance their performance. We validate the effectiveness of our method through experiments on summarization tasks.

\end{abstract}

\section{Introduction}\label{sec:intro}
Reinforcement Learning from Human Feedback (RLHF) has achieved remarkable success in fine-tuning Large-Language Models (LLMs) to align with human preferences \citep{christiano2017deep,bai2022training,ouyang2022training}.
Using datasets annotated with human preferences reflecting human intrinsic reward model, RLHF optimizes LLM policies with reinforcement learning (RL) techniques.
Due to the high cost of collecting large amounts of human preference labels, there has been significant attention in reducing the \emph{sample complexity}---the amount of data required for training---of online RLHF through efficient exploration strategies \citep{wang2023rlhf,xie2024exploratory,cen2024value,zhang2024self}.
However, a largely overlooked opportunity is to additionally leverage existing reward models for annotation, which have already aligned partially with the human preferences.
The growing number of available open-source and high-quality reward models trained on diverse tasks provide a rich pool of candidates for transfer learning.
Harnessing guidance embedded in such reward models holds great potential for improving sample efficiency.

There are a variety of practical scenarios where such source reward models can be effectively utilized.
Firstly, reward models trained on relevant tasks often prove to be valuable in similar tasks.
A notable example is cross-lingual reward transfer \citep{wu2024reuse, hong2024cross}, where reward models in one language can provide effective guidance for tasks in another.
Secondly, informative evaluation can also be obtained from well-trained LLMs, such as GPT, LLaMA and Gemini \citep{achiam2023gpt,dubey2024llama,team2024gemini}. For certain tasks, such models can provide evaluation closely aligned with human preferences; see e.g., \citet{lee2023rlaif,ji2023ai}.
Lastly, there are scenarios where rule-based or heuristic reward functions---built upon experts knowledge and accumulated experience---are inexpensive to obtain and instructive in evaluating the LLM.
Taking summarization tasks as an example, expert summaries are available on datasets such as XSum \citep{Narayan2018DontGM} and TL-DR \citep{volske-etal-2017-tl}. Similarity with those expert solutions can be measured through metrics such as ROUGE \citep{lin2004rouge} and BERTScore \citep{zhang2019bertscore}, and be employed for scoring the LLM generations.

Motivated by these considerations, this paper studies how to leverage imperfect reward models to learn a near-optimal policy with fewer human annotations. We consider the case where several source reward models are available, yet their quality, i.e., the similarity to human rewards, is \emph{unknown a priori}.
Our contributions are summarized as follows.
\begin{enumerate}[leftmargin=0.2cm, itemsep=0.3pt]
\item[\textbullet] In Sec.~\ref{sec:transfer_coverage_perspective}, we identify a distinctive property of RLHF arising from its KL regularization: \emph{for any prospective policy candidates, its coverability for the optimal policy improves as its policy value increases}.
Enlightened by this, we propose two new principles for transfer learning in the context of RLHF (illustrated in Fig.~\ref{fig:outline}).

Firstly, policy value serves as a crucial criterion to select the transfer policy, since exploiting policies with high values does not conflict with exploration.
Secondly, combining insights from offline RL theory, we prove that the policy distilled by offline learning techniques from the online data generated by any no-regret online algorithm, converges to the optimal one at a rate of $\tilde{O}(T^{-\frac{1}{2}})$ after a finite time.
Such a bound improves existing sample complexity results and regret bounds in online RLHF by eliminating the dependencies on the size of the state and action space, or the complexity of the policy class (up to log-covering number).
This suggests the offline policy computed with the data from the online learning process is a promising candidate to transfer from, which we term as \emph{``self-transfer learning''}.

\item[\textbullet] In Sec.~\ref{sec:main_theory}, following above principles, we design a transfer learning algorithm named \textbf{T}ransfer \textbf{P}olicy \textbf{O}ptimization ($\TPO$; Alg.~\ref{alg:main_algorithm}) with provable benefits.
At the core of $\TPO$ is the self-transfer learning and an adaptive policy selection strategy that picks transfer policies based on estimated value gaps.
In the early stage, the cumulative online regret of $\TPO$ can be significantly reduced, as long as one of the source rewards is of high quality.
After finite time, the self-transfer mechanism ensures the regret of $\TPO$ grows only at a rate of $\tilde{O}(\sqrt{T})$, independent of the standard structural complexity measures. 
Compared with transfer learning in the pure-reward maximization RL, our result is novel in that it exploits the policy coverage property induced by the regularization term in RLHF.

\item[\textbullet] To reduce computational overheads and improve scalablity to practical RLHF scenarios, in Sec.~\ref{sec:empirical_alg}, we propose an empirical version of TPO that selects transfer policy based on the win rates competing with the learning policy, rather than relying on the estimated policy values.
On the one hand, win rates can be estimated much more efficiently than the policy values. On the other hand, as justified by our theory, win rates help to identify \emph{lower bounds} for the coverability of the optimal policy.
Notably, our empirical TPO is general: its core transfer learning technique can be modularized and combined with various reward-model-free policy optimization methods, such as DPO~\citep{rafailov2024direct}, IPO~\citep{azar2024general}, XPO~\citep{xie2024exploratory}, to boost their performance.
The effectiveness of our approach is demonstrated in fine-tuning T5 models on summarization tasks in Sec.~\ref{sec:experiment}.

\end{enumerate}

\subsection{Related Work}

We summarize the most related ones on sample complexity and transfer RL here and defer the others to Appx.~\ref{appx:related_workds}.

\begin{figure}[t]
    \centering
    \includegraphics[scale=0.55]{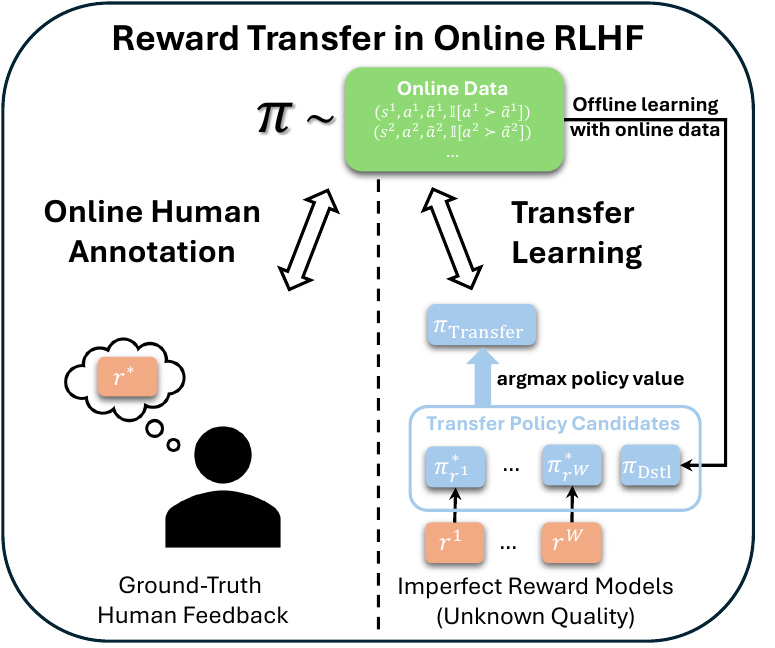}
    \caption{The standard online RLHF pipeline only involves learning from online human feedback (left).
    Our setting \emph{additionally} leverages available imperfect reward models via transfer learning (right).
    Inspired by the structure induced by KL regularization, we propose novel principles for transfer learning in online RLHF: (1) selecting transfer policy $\pi_\text{Transfer}$ with the highest policy value; (2) self-transfer learning---involving as a candidate the policy $\pi_\SELF$ \emph{distilled} from online collected data by offline learning techniques.
    }\label{fig:outline}
\end{figure}
\textbf{Sample Complexity in RLHF}~
Online RLHF emphasizes strategic exploration for sample-efficient learning in tabular and linear settings \citep{xu2020preference, novoseller2020dueling, pacchiano2021dueling, du2024exploration}, as well as more general function approximation cases \citep{ye2024theoretical, chen2022human, wang2023rlhf,xie2024exploratory,cen2024value,zhang2024self,xiong2024iterative}.
Our work further improves sample efficiency by leveraging imperfect reward models that are readily available in a variety of practical scenarios. 
As an alternative, offline RLHF \citep{zhan2023provable, liu2024provably, huang2024correcting} focuses on exploiting pre-collected datasets without exploration.
What lies in between online/offline RL is hybrid RL \citep{chang2024dataset, gao2024rebel}.
These methods harness online feedback, while assuming the reference policy provides good coverage and only engaging in passive exploration.

\textbf{Transfer Learning in RL and RLHF}~
Transfer learning in pure-reward maximization RL has been extensively investigated in previous literature \citep{taylor2009transfer, zhu2023transfer}, and theoretical guarantees have been established under various conditions \citep{mann2013directed, huang2022tiered,huang2023robust, golowich2022can}. Unlike previous works, this paper unveils new insights for transfer learning enabled by the KL regularization in RLHF. In particular, it enables us to design a policy-value-based transfer policy selection strategy, and identify a unique regime, i.e., ``self-transfer learning'', that can significantly improve sample efficiency.
We defer more discussions to Sec.~\ref{sec:new_insights}.

Most works on transfer learning in RLHF focus on empirical approaches.
For example, \citep{wu2024reuse, hong2024cross} investigate the cross-lingual reward transfer.
To our knowledge, our empirical algorithm (Alg.~\ref{alg:empirical}) is novel in that it studies active transfer policy selection, which is still underexplored in existing literature.
We further distinguish our work from RLAIF~\citep{lee2023rlaif,ji2023ai} or reward model selection literature \citep{nguyen2024laser}. Their goal is to align LLM policies with surrogate reward models, while we study how to leverage those surrogates to accelerate the alignment with ground-truth human rewards.

\section{Preliminary}
In this section, we review the mathematical formulation of RLHF for LLMs and introduce our reward transfer setting.
\subsection{Mathematical Formulation of RLHF}
We adopt the contextual bandits framework \citep{lattimore2020bandit, ouyang2022training}, where we treat the prompt space as the state space $\cS$ and the response space as the action space $\cA$.
Without loss of generality, we assume that both $|\cS|$ and $|\cA|$ are finite.
We denote $\rho \in \Delta(\cS)$ as the prompt distribution.
An LLM can be modeled by a policy $\pi : \cS\rightarrow \Delta(\cA)$, where, given a prompt $s\in\cS$, the responses are generated from the conditional distribution $a\sim\pi(\cdot|s)$.
Throughout this paper, we assume that all considered LLM policy $\pi$ have positive support over the entire state-action space, that is, $\min_{s,a}\pi(a|s) > 0$. This is ensured in practice by the softmax layer in LLM architectures.

\textbf{Reward Model and Preference Model}~
A reward model is a function $r:\cS\times\cA\rightarrow[0, {\Rmax}]$, where $\Rmax$ is a constant indicating the largest possible reward value.
In the RLHF setting, the reward $r$ is unobservable and we can only access its induced preference model, denoted by $\mP_{r}$. $\mP_{r}(y|s,a,\ta)$ denotes the probability that $a$ is preferable to $\ta$ given $s$ ($y=1$) or not ($y=0$), by reward model $r$.
Following previous works, we consider the Bradley-Terry (BT) model \citep{bradley1952rank}:
    $
    \mP_{r}(y=1|s,a,\ta) = \sigma(r(s,a) - r(s,a')),
    $
where $\sigma(x) := 1/(1+e^{-x})$ is the sigmoid function.

\textbf{RLHF Learning Setting}~
Given a reward model $r$, in the context of RLHF for LLM fine-tuning, we are interested in optimizing the following the KL-regularized objective:
\begin{align}
    \pi^*_r \gets \argmax_{\pi} & J_\beta(\pi;r) \nonumber\\
    \text{with~}J_\beta(\pi;r):=& \EE_{s\sim \rho, a\sim\pi}[r(s,a)] - \beta \KL(\pi \| \pi_\textref),\label{eq:rlhf_obj}
\end{align}
where we use $\pi_\textref$ to denote the pretrained reference policy and $\KL(\pi\|\pi_\textref) := \EE_{s\sim\rho,a\sim\pi(\cdot|s)}[\log\frac{\pi(a|s)}{\pi_\textref(a|s)}]$.
The above optimization problem yields a closed-form solution:
\begin{align}
    \pi^*_r(a|s) \propto \pi_\textref(a|s)e^{\frac{r(s,a)}{\beta} }. \label{eq:closed_form}
\end{align}
Here $\beta>0$ is a moderate constant and is critical for RLHF in practice, for detailed reasons discussed in Appx.~\ref{appx:extend_prelim}.

We use $r^*:\cS\times\cA\rightarrow[0, {\Rmax}]$ to denote the unknown true reward function determining the human's preference $\mP_{r^*}$. 
For simplicity, we omit $r^*$ and use $J_\beta(\pi)$ as a short note for $J_\beta(\pi;r^*)$.
Following previous works \citep{xie2024exploratory,zhang2024self}, we consider the function approximation setting with access to a policy candidates class $\Pi$ ($|\Pi| < +\infty$) satisfying standard assumptions as follows:
\begin{assumption}\label{assump:policy} The policy class $\Pi$ satisfies:
        \textbf{(I) Realizability}: The optimal policy $\pi^*_{r^*} \in \Pi$.
        \textbf{(II) Bounded Policy Ratio}: 
            For any $\pi\in\Pi$, $
            \max_{s,a} |\log\frac
            {\pi(a|s)}{\pi_\textref(a|s)}| \leq \frac{{\Rmax}}{\beta}.
            $
\end{assumption}
Note that $\max_{s,a}|\log\frac{\pi^*_{r}(a|s)}{\pi_\textref(a|s)}| \leq \frac{R}{\beta}$ holds as long as $r\in[0, \Rmax]$ (see Lem.~\ref{lem:bounded_ratio} in Appx.~\ref{appx:extend_prelim}).
Thus, Assump.~\ref{assump:policy}-(II) is \emph{not} actually an assumption, but can be interpreted as an additional filtering step by leveraging the boundary of $r^*$.

\textbf{Additional Notation}
We use the standard big-O notations and use $\tilde{(\cdot)}$ (e.g. $\tilde{O}$) to suppress logarithmic terms, \textbf{\emph{including log-covering numbers}}.
Given $\Pi$ satisfying Assump.~\ref{assump:policy}, $\conv(\Pi)$ denotes its convex hull, and $\cR^{\Pi}$ denotes the reward class induced by $\Pi$ via Eq.~\eqref{eq:closed_form}, s.t., (1) $\forall r\in\cR^\Pi, r\in[0, R]$; (2) $\exists r\in\cR^{\Pi}, \pi_r^* = \pi^*_{r^*}$.
We defer to Appx.~\ref{appx:extend_prelim} for detailed converting process.
Besides, we denote $[n] := \{1,2,...,n\}$ and $a\wedge b := \min\{a,b\}$.
We refer the reader to Appx.~\ref{appx:freq_notations} for a table of commonly used notation in this paper.

\subsection{Reward Transfer Setting}\label{sec:transfer_setting}
We assume there are $W$ source reward models available, denoted by $\{r^w\}_{w=1}^W$, s.t. $\forall w, s,a$, $r^w(s,a) \in [0, {\Rmax}]$.
As motivated in Sec.~\ref{sec:intro}, those reward models are accessible in many scenes.
Given a source reward $r^w$, let $\pi^*_{r^w}$ be the corresponding source policy and denote $\Delta(w) := J_\beta(\pi^*_{r^*}) - J_\beta(\pi^*_{r^w})$ as its policy value gap.
We define $\Delta_{\min} := \min_{w\in[W]} \Delta(w)$ to be the minimal gap for all source policies.
Note that $\Delta_{\min} \geq 0$ and $\Delta_{\min} = 0$ implies $r^* \in \{r^w\}_{w\in[W]}$.
We \textbf{\emph{do not assume prior knowledge on}} $\{\Delta(w)\}_{w\in[W]}$.

\textbf{LLM Policy as Reward Model}~
Eq.~\eqref{eq:closed_form} implies that there is a way to convert a given LLM policy $\pi$ to a reward model. Concretely, by choosing an arbitrary distribution $\pi_0$, we can interpret $\beta\log\frac{\pi(\cdot|\cdot)}{\pi_0(\cdot|\cdot)}$ as a reward function that $\pi$ aligns with given $\pi_0$ as the reference policy \citep{rosset2024direct}.
Therefore, while we consistently use the term ``reward transfer'' throughout the paper for clarity, our framework is general to handle transfer learning from any LLM policy through the underlying reward function it aligns with.

\subsection{Background on Policy Coverability}\label{sec:background_policy_coverage}
We first introduce the formal definition of the policy coverage coefficient, which measures how well a given policy distribution covers the other.
\begin{definition}\label{def:cov_between_policies}
    Given any $\pi,\tpi$, the coverage coefficient for $\tpi$ by $\pi$ is defined by
    $
        \cov^{\tpi|\pi} := \EE_{s\sim\rho,a\sim\tpi(\cdot|s)}[\frac{\tpi(a|s)}{\pi(a|s)}].
    $
\end{definition}
The concept of policy coverage originally emerged from offline RL \citep{chen2019information, yang2020off, zhan2022offline}, where one aims to find a good policy given a fixed dataset. The coverage of the optimal policy by the data-generating policy naturally governs the size of the dataset required to find the optimal policy.
Policy coverage was later extended to online RL, inducing novel complexity measures to characterize intrinsic statistical difficulty of the underlying MDP \citep{xie2022role, amortila2024scalable}.
Compared to alternative complexity measures, policy coverage is particularly suited for studying sample complexity in RLHF, since optimizing or exploring directly on the policy (LLM) space is preferred for its computational tractability.
We use coverage as an analytical tool for reward transfer, and our results are based on $\RPO$ \citep{liu2024provably} and $\XPO$ \citep{xie2024exploratory} for offline and online RLHF, respectively:
\begin{lemma}[Offline RLHF; Thm.~5.3 in \citep{liu2024provably}; Informal]\label{lem:offline_RLHF}
    Given a dataset $\cD$ generated by a policy $\pi^\cD$, running $\RPO$ with any $\cR$ including $r^*$ and $\tPi$ yields $\hpi$, s.t.,
    $
        \forall \pi\in\tPi,J_\beta(\pi) - J_\beta(\hpi) = \tilde{O}( e^{2{\Rmax}} \cov^{\pi|\pi^\cD} |\cD|^{-\frac{1}{2}}).
    $
\end{lemma}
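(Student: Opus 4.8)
The plan is to follow the standard pessimism recipe for offline learning, instantiated for the KL-regularized Bradley--Terry preference model. First, I would view $\RPO$ as (a Lagrangian relaxation of) the maximin program $\hat\pi \in \argmax_{\pi\in\tPi}\min_{r\in\cR_\cD} J_\beta(\pi;r)$, where $\cR_\cD = \{r\in\cR : \cL_\cD(r) \le \cL_\cD(\hat r_{\MLE}) + O(\log(|\cR|/\delta))\}$ is the maximum-likelihood confidence set and $\cL_\cD$ is the negative BT log-likelihood on $\cD$. A routine MLE/martingale concentration argument then yields, with probability $1-\delta$: (i) $r^*\in\cR_\cD$; and (ii) every $r\in\cR_\cD$ is close to $r^*$ \emph{on the data distribution},
\[
\EE_{s\sim\rho,\ a,a'\sim\pi^\cD}\!\big[\big((r(s,a) - r(s,a')) - (r^*(s,a) - r^*(s,a'))\big)^2\big] \;\le\; \tilde O\!\Big(\tfrac{e^{2\Rmax}\log|\cR|}{|\cD|}\Big).
\]
The factor $e^{2\Rmax}$ enters when converting the log-loss (equivalently a squared-Hellinger bound between BT models) into a squared error on reward \emph{differences}: on $[-2\Rmax,2\Rmax]$ the sigmoid's slope is at least $\Theta(e^{-\Rmax})$, so its inverse inflates the error accordingly.

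Second, I would carry out the value-difference decomposition. Fix any comparator $\pi\in\tPi$. Using $r^*\in\cR_\cD$ and then the maximin optimality of $\hat\pi$,
\[
J_\beta(\pi) - J_\beta(\hat\pi) \;\le\; J_\beta(\pi;r^*) - \min_{r\in\cR_\cD}J_\beta(\hat\pi;r) \;\le\; \max_{r\in\cR_\cD}\big[J_\beta(\pi;r^*) - J_\beta(\pi;r)\big].
\]
Crucially, both objectives in the last expression use the \emph{same} policy $\pi$, so the $\KL(\pi\|\pi_\textref)$ terms cancel exactly and we are left with $\max_{r\in\cR_\cD}\EE_{s\sim\rho,\,a\sim\pi}[r^*(s,a) - r(s,a)]$ --- only the reward estimation error along $\pi$'s occupancy survives. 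This cancellation is precisely where the KL-regularization of the RLHF objective makes the analysis clean.

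Third, change of measure. For $r\in\cR_\cD$ write $g_r(s,a) := r^*(s,a) - r(s,a)$ and split $\EE_{a\sim\pi}[g_r(s,a)] = \EE_{a\sim\pi,\,a'\sim\pi^\cD}[g_r(s,a) - g_r(s,a')] + \EE_{a'\sim\pi^\cD}[g_r(s,a')]$. The first term involves a reward \emph{difference}; reweighting the $a\sim\pi$ marginal by $\pi/\pi^\cD$ and applying Cauchy--Schwarz bounds it by $\sqrt{\cov^{\pi|\pi^\cD}}$ times the square root of the on-data error from (ii). The second ("anchor") term is immaterial for the \emph{value gap}: since a state-dependent shift of $r$ leaves $J_\beta(\pi) - J_\beta(\hat\pi)$ unchanged, one may choose the representatives of $\cR$ so that each reward has zero $\pi^\cD$-mean per state, killing this term. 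Collecting the pieces gives $J_\beta(\pi) - J_\beta(\hat\pi) = \tilde O\big(e^{2\Rmax}\,\cov^{\pi|\pi^\cD}\,|\cD|^{-1/2}\big)$, absorbing residual sigmoid-curvature and union-bound factors into $e^{2\Rmax}$ and $\tilde O(\cdot)$ (the exact powers of $e^{\Rmax}$ and of the coverage coefficient depend on the concentration and change-of-measure inequalities used; cf.\ \citet{liu2024provably}).

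The step I expect to be the main obstacle is the \emph{anchoring / single-reward identifiability} issue in the third step: MLE on preferences only pins down reward \emph{differences} between the action pairs appearing in $\cD$, yet the decomposition produces an expectation of a single reward under $\pi$'s actions, so one must carefully exploit the shift-invariance of the value \emph{gap} (not of $J_\beta$ itself), ensure no uncontrolled $\pi^\cD$-mean term is left behind, and verify that the coverage factor that appears is exactly $\cov^{\pi|\pi^\cD}$ rather than something larger. Tracking the $e^{2\Rmax}$ dependence cleanly through the sigmoid's curvature bound is a secondary technical point.
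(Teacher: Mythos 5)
Your proposal is essentially the same pessimism-based analysis that underlies the cited result (and the paper's own adaptation of it in Appx.~C, Lem.~\ref{lem:offline_learning}): MLE/Hellinger concentration for the Bradley--Terry likelihood, a value-difference decomposition in which the KL terms cancel because the same policy appears in both objectives, a change of measure via Cauchy--Schwarz producing $\sqrt{\cov^{\pi|\pi^\cD}}$, and the sigmoid-curvature factor $e^{\Theta(\Rmax)}$. Two points differ from how the paper actually argues. First, you replace the Lagrangian form of $\RPO$ (Eq.~\eqref{eq:RPO_objective}) by a constrained confidence-set maximin; the paper works directly with the Lagrangian and obtains pessimism "for free" by balancing the linear estimation-error term against the negative squared-Hellinger term through the choice of $\eta$ (the $ax-bx^2\le a^2/(4b)$ step), so no separate argument that the Lagrangian solution inherits the confidence-set guarantee is needed — if you keep the confidence-set view you would have to supply that equivalence or re-run your steps on the Lagrangian. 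Second, your anchoring device (re-centering rewards to have zero $\pi^\cD$-mean per state) is valid because the BT likelihood and the value gap are invariant to per-state shifts, but it is unnecessary in the actual setup: the $\RPO$ objective and the dataset both compare against a fixed reference action $\ta\sim\pi_\textref$, so the decomposition directly yields differences $\EE_{\pi\times\pi_\textref}[(r^*-r)(s,a)-(r^*-r)(s,\ta)]$ matching the data pair distribution $\pi^\cD\times\pi_\textref$, and only the coverage of $\pi$ by $\pi^\cD$ in the first marginal is needed. Relatedly, be careful that your sketch implicitly assumes both actions in each preference pair are drawn from $\pi^\cD$; with the paper's pair distribution $\pi^\cD\times\pi_\textref$ your variance identity $\EE_{a,a'\sim\pi^\cD}[(g_r(a)-g_r(a'))^2]=2\EE_{\pi^\cD}[g_r^2]$ no longer applies verbatim, and the cleaner route is the paper's, which never isolates a single-reward expectation. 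With that adjustment your argument goes through and recovers the stated $\tilde O\big(e^{2\Rmax}\cov^{\pi|\pi^\cD}|\cD|^{-1/2}\big)$ bound (indeed with $\sqrt{\cov^{\pi|\pi^\cD}}$, which is stronger since the coefficient is at least one).
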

\begin{lemma}[Online RLHF; Thm.~3.1 in \citep{xie2024exploratory}; Informal]\label{lem:online_RLHF}
    Running $\XPO$ with $\Pi$ satisfying Assump.~\ref{assump:policy} for $T$ steps yields a policy sequence $\{\pi^t\}_{t=1}^T$, s.t.
    $
        \sum_{t=1}^T J_\beta(\pi^*_{r^*}) - J_\beta(\pi^t) = \tilde{O}({\Rmax}\cdot e^{2{\Rmax}} \cdot\sqrt{\cov_\infty(\Pi) T}).
    $
\end{lemma}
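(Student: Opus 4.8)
Since this statement restates Thm.~3.1 in \citep{xie2024exploratory}, the plan is to recall the structure of the $\XPO$ analysis; I only sketch the main steps. $\XPO$ runs an optimistic, iterative variant of $\DPO$: at round $t$ it sets $\pi^t$ to the minimizer over $\Pi$ of the $\DPO$ log-loss on the preferences collected in rounds $1,\dots,t-1$, \emph{plus} a small ``optimism'' penalty that is linear in $\log\pi(a|s)$---so the objective stays amenable to $\DPO$-style optimization---and that biases the implicit reward $\hr^t:=\beta\log\frac{\pi^t}{\pi_\textref}$ toward overestimating $r^*$ on insufficiently explored prompt--response pairs. New preferences are then queried between $a\sim\pi^t(\cdot|s)$ and $a'\sim\pi_\textref(\cdot|s)$, $s\sim\rho$, and appended to the dataset. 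The two ingredients to combine are \emph{optimism}, so that per-round regret is dominated by an on-policy reward-estimation error, and a \emph{coverability pigeonhole}, so that the accumulated estimation error is controlled by $\cov_\infty(\Pi)$ rather than by $|\cS\times\cA|$ or $|\Pi|$.

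First I would establish optimism. Realizability ($\pi^*_{r^*}\in\Pi$, equivalently $r^*\in\cR^\Pi$ under the reparametrization) together with a sequential MLE/martingale concentration bound for the $\DPO$ log-loss shows that the in-sample preference-prediction error of $\hr^t$ is $\tilde{O}(1)$, and the penalty is calibrated so that, with high probability and simultaneously over all $t$, $J_\beta(\pi^*_{r^*};\hr^t)\ge J_\beta(\pi^*_{r^*};r^*)$ up to a $\tilde{O}(\cdot)$ slack. Since, by the $\DPO$ reparametrization, $\pi^t$ is exactly $\pi^*_{\hr^t}$ and hence maximizes $J_\beta(\cdot;\hr^t)$, and since the KL term cancels when the same policy is scored under two rewards,
\begin{align*}
J_\beta(\pi^*_{r^*}) - J_\beta(\pi^t) &\le \big[J_\beta(\pi^t;\hr^t) - J_\beta(\pi^t;r^*)\big] + \text{slack}\\
&\le \EE_{s\sim\rho,\,a\sim\pi^t(\cdot|s)}\big[\,|\hr^t(s,a)-r^*(s,a)|\,\big] + \text{slack},
\end{align*}
so the cumulative regret is bounded by $\sum_{t=1}^T\EE_{a\sim\pi^t}[\,|\hr^t-r^*|\,]$ plus lower-order terms.

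The statistical core is to control this sum. Since the Bradley--Terry link $\sigma$ has derivative at least $e^{-2\Rmax}$ over the range of implicit-reward differences permitted by Assump.~\ref{assump:policy}-(II), the in-sample preference loss of $\hr^t$ upgrades to a bound on the \emph{squared} reward error over the historical mixture, $\sum_{i<t}\EE_{a\sim\pi^i}[(\hr^t-r^*)^2]\lesssim e^{2\Rmax}\cdot(\text{log-covering number})$; this is where the $e^{2\Rmax}$ factor and the bounded-policy-ratio property enter. A change-of-measure/decoupling step converts this historical error into the on-policy error at round $t$ at the price of a factor $\cov^{\pi^t|\mu}$ for a suitable reference $\mu$, and the coverability pigeonhole argument \citep{xie2022role}---trading the per-step squared-error budget against $\cov_\infty(\Pi)$ and $T$, then a Cauchy--Schwarz passing $\sum_t\sqrt{(\cdot)}$ to $\sqrt{T\sum_t(\cdot)}$---yields the claimed $\tilde{O}(\Rmax\,e^{2\Rmax}\sqrt{\cov_\infty(\Pi)\,T})$. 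I expect the main obstacle to be making the nonlinear link and the KL regularization cooperate: one must (i) translate preference-prediction error into reward error, (ii) tune the optimism penalty to be simultaneously valid and not regret-dominating, and (iii) keep every change of measure inside $\conv(\Pi)$ so that coverability of the \emph{policy class}---not of the raw action space---governs the rate. Getting the constants in these three pieces to line up is the delicate part; the rest is bookkeeping.
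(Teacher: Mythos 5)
Your sketch is correct and follows essentially the same route as the paper, which simply imports Theorem~3.1 of \citet{xie2024exploratory} and, in its Appendix~D verification (Prop.~\ref{example:online_oracle}), reuses their analysis: optimism plus sequential MLE concentration gives regret $\tilde O(\Rmax e^{2\Rmax}\sqrt{\mathrm{SEC}\cdot T})$, and the SEC is then bounded by the $L_\infty$ coverability $\cov_\infty(\Pi)$ via exactly the change-of-measure/pigeonhole argument you describe. Your unpacking of the SEC machinery into its ingredients is faithful to that proof (modulo the minor imprecision that XPO's optimism term is a global bias rather than a per-pair, count-based bonus), so there is no substantive gap.
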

Lem.~\ref{lem:offline_RLHF} states that it is possible to compute an offline policy competitive with any policy well-covered by the dataset.
Lem.~\ref{lem:online_RLHF} suggests that the sample efficiency of online RLHF can be characterized by the $L_\infty$ coverability $\cov_\infty(\Pi)$.
Here, $\cov_\infty(\Pi)$ by \citet{xie2022role} is a worst-case version of our Def.~\ref{def:cov_between_policies} as it takes the maximum over $s, a$ and $\tpi$. See Appx.~\ref{appx:online_oracle} for the formal definition.

\section{The Blessing of Regularization: A Policy Coverage Perspective}\label{sec:transfer_coverage_perspective}
Unlike classical pure-reward maximization RL, the RLHF objective in \eqref{eq:rlhf_obj} incorporates regularization with respect to $\pi_\textref$.
We start by identifying distinctive properties associated with such regularization in Sec.~\ref{sec:new_structure}, and discuss their implications on transfer learning in RLHF in Sec.~\ref{sec:new_insights}.

\subsection{Structural Property Induced by Regularization}\label{sec:new_structure}
\begin{restatable}{lemma}{LemCovValGap}\label{lem:coverage_and_value_gap}
    Under Assump.~\ref{assump:policy} and assume $r^w\in[0,R]$ for all $w\in[W]$, then, for any policy $\pi\in\conv(\Pi)\cup\{\pi^*_{r^w}\}_{w=1}^W$,
    \begin{align}
        \cov^{\pi^*_{r^*}|{\pi}} \leq 1 + \kappa(e^{\frac{2{\Rmax}}{\beta}}) \cdot \frac{J_\beta(\pi^*_{r^*}) - J_\beta({\pi})}{\beta},\label{eq:cov_value_gap}
    \end{align}
    where $\kappa(x) := \frac{(x-1)^2}{x-1- \log x} = O(x)$.
\end{restatable}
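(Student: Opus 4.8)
The plan is to relate the coverage coefficient $\cov^{\pi^*_{r^*}|\pi}$ to a KL-type divergence, and then to relate that divergence to the value gap $J_\beta(\pi^*_{r^*}) - J_\beta(\pi)$ using the strong concavity of the KL-regularized objective. First I would observe that since every policy $\pi$ under consideration has $\max_{s,a}|\log\frac{\pi(a|s)}{\pi_\textref(a|s)}| \le \frac{R}{\beta}$ (this holds on $\conv(\Pi)$ by convexity of $\log$-ratios being controlled, and on each $\pi^*_{r^w}$ by Lem.~\ref{lem:bounded_ratio} since $r^w\in[0,R]$), and similarly $\pi^*_{r^*}$ satisfies the same bound, the ratio $\frac{\pi^*_{r^*}(a|s)}{\pi(a|s)}$ lies in $[e^{-2R/\beta}, e^{2R/\beta}]$ pointwise. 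Writing $x = e^{2R/\beta}$, I would then use the elementary inequality that for $t \in [x^{-1}, x]$, $t - 1 \le \kappa(x)\,(t - 1 - \log t) \cdot$ wait — more precisely, the inequality $t - 1 \le \log t + \kappa(x)\cdot(t-1-\log t)$ ... let me restate: for $t$ in this bounded range one has $(t-1) - \log t \le \frac{(x-1)^2}{x-1-\log x}\cdot(\text{something})$. The clean way: one shows $\EE_{s,a\sim\pi^*_{r^*}}\big[\frac{\pi^*_{r^*}}{\pi} - 1\big] \le \EE\big[\log\frac{\pi^*_{r^*}}{\pi}\big] + \kappa(x)\cdot\EE\big[\frac{\pi^*_{r^*}}{\pi} - 1 - \log\frac{\pi^*_{r^*}}{\pi}\big]$, using the pointwise scalar inequality $t-1 \le \log t + \kappa(x)(t - 1 - \log t)$ valid for $t\in[x^{-1},x]$ — which follows because $\kappa(x)$ is exactly the tight constant making $g(t) := \kappa(x)(t-1-\log t) - (t-1-\log t) \ge 0$... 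I would verify the constant $\kappa(x) = \frac{(x-1)^2}{x-1-\log x}$ is correct by checking the worst case $t = x^{-1}$ or $t=x$.

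Next, I would identify $\EE_{s,a\sim\pi^*_{r^*}}[\log\frac{\pi^*_{r^*}(a|s)}{\pi(a|s)}] = \KL(\pi^*_{r^*}\|\pi)$ and note that $\cov^{\pi^*_{r^*}|\pi} = 1 + \EE_{s,a\sim\pi^*_{r^*}}[\frac{\pi^*_{r^*}}{\pi} - 1]$ and $\EE_{s,a\sim\pi^*_{r^*}}[\frac{\pi^*_{r^*}}{\pi} - 1 - \log\frac{\pi^*_{r^*}}{\pi}]$ is a nonnegative quantity — it is in fact $\chi^2$-like but more usefully it equals $\KL(\pi^*_{r^*}\|\pi) + \EE[\frac{\pi^*_{r^*}}{\pi}] - 1 - 2\KL$... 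I should just keep it as the "reversed" divergence $D(\pi^*_{r^*},\pi) := \EE_{s,a\sim\pi^*_{r^*}}[\frac{\pi^*_{r^*}}{\pi} - 1] - \KL(\pi^*_{r^*}\|\pi) \ge 0$. Then the scalar inequality gives $\cov^{\pi^*_{r^*}|\pi} - 1 \le \KL(\pi^*_{r^*}\|\pi) + \kappa(x)\, D(\pi^*_{r^*},\pi)$, but actually I want everything in terms of one divergence. The key realization is that BOTH $\KL(\pi^*_{r^*}\|\pi)$ and $D(\pi^*_{r^*},\pi)$ should be bounded by the value gap; alternatively, a cleaner route uses only the inequality $t - 1 \le \kappa(x)(t-1-\log t)$ when $\log t \ge 0$ is split off — I will choose whichever makes the final bound come out as stated, namely $\cov^{\pi^*_{r^*}|\pi} - 1 \le \kappa(x)\cdot \frac{J_\beta(\pi^*_{r^*})-J_\beta(\pi)}{\beta}$.

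The central analytic fact I would invoke is that the KL-regularized objective is $\beta$-strongly concave in the appropriate geometry: specifically, since $\pi^*_{r^*}$ is the maximizer of $J_\beta(\cdot;r^*)$, a direct computation (expanding $J_\beta$ and using $\pi^*_{r^*}(a|s)\propto\pi_\textref(a|s)e^{r^*(s,a)/\beta}$) yields the exact identity $J_\beta(\pi^*_{r^*}) - J_\beta(\pi) = \beta\,\KL(\pi\|\pi^*_{r^*})$ for any $\pi$ — this is a standard and clean consequence of the closed form. Then I need to compare $\cov^{\pi^*_{r^*}|\pi} - 1$, which involves the $\chi^2$-type / reversed direction, against $\KL(\pi\|\pi^*_{r^*})$. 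Using the pointwise bound on the density ratio ($t\in[x^{-1},x]$), the scalar inequality $\frac{1}{t} - 1 - \log\frac{1}{t}$ vs $t - 1$ type bounds converts $\KL(\pi\|\pi^*_{r^*}) = \EE_{s,a\sim\pi}[\log\frac{\pi}{\pi^*_{r^*}}]$ into a lower bound for $\EE_{s,a\sim\pi^*_{r^*}}[\frac{\pi^*_{r^*}}{\pi}-1]$ — indeed $\EE_{s,a\sim\pi^*_{r^*}}[\frac{\pi^*_{r^*}}{\pi} - 1] = \EE_{s,a\sim\pi}[(\frac{\pi^*_{r^*}}{\pi})^2 - \frac{\pi^*_{r^*}}{\pi}] = \EE_{s,a\sim\pi}[\frac{\pi^*_{r^*}}{\pi}(\frac{\pi^*_{r^*}}{\pi}-1)]$, and with $u := \pi^*_{r^*}/\pi \in [x^{-1},x]$ one has $u(u-1) \le \kappa(x)\cdot(-\log u) \cdot$ ... precisely the scalar inequality $u(u-1) \le \kappa(x)(u - 1 - \log u)$ combined with $\EE_\pi[u-1]=0$, i.e. $u(u-1) \le \kappa(x)\cdot(-\log u) + \kappa(x)(u-1)$, so taking $\EE_\pi$: $\EE_\pi[u(u-1)] \le \kappa(x)\EE_\pi[-\log u] = \kappa(x)\KL(\pi\|\pi^*_{r^*})$. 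Substituting the identity for $\KL(\pi\|\pi^*_{r^*})$ gives exactly \eqref{eq:cov_value_gap}. The main obstacle is verifying the scalar inequality $u(u-1) \le \kappa(x)\,(u-1-\log u)$ for all $u\in[x^{-1},x]$ with $\kappa(x)=\frac{(x-1)^2}{x-1-\log x}$ — I expect this requires checking that $h(u) := \frac{u(u-1)}{u-1-\log u}$ is maximized on $[x^{-1},x]$ at the endpoint $u = x$ (monotonicity of $h$), which is a one-variable calculus check but is the crux that pins down the constant; the bounded-ratio property from Assump.~\ref{assump:policy}(II) and Lem.~\ref{lem:bounded_ratio} is exactly what guarantees $u$ stays in the range where this holds.
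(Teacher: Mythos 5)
Your scaffolding is sound and matches the paper's ingredients: the bounded ratio $\pi^*_{r^*}(a|s)/\pi(a|s)\le e^{2\Rmax/\beta}$ (via Assump.~\ref{assump:policy}-(II), Lem.~\ref{lem:bounded_ratio}, and the fact that ratio bounds survive convex mixtures), the identity $J_\beta(\pi^*_{r^*})-J_\beta(\pi)=\beta\,\EE_{s\sim\rho}[\KL(\pi(\cdot|s)\|\pi^*_{r^*}(\cdot|s))]$ (the paper's Lem.~\ref{lem:KL_as_value_gap}), and the rewriting $\cov^{\pi^*_{r^*}|\pi}-1=\EE_{s\sim\rho}\EE_{a\sim\pi}[u(u-1)]$ with $u=\pi^*_{r^*}/\pi$. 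But the step you yourself flag as the crux fails: the scalar inequality $u(u-1)\le \kappa(x)\,(u-1-\log u)$ is \emph{false} near the right endpoint. At $u=x$ it reads $x(x-1)\le \kappa(x)(x-1-\log x)=(x-1)^2$, i.e.\ $x\le x-1$, which is false for every $x>1$; indeed $h(u)=\frac{u(u-1)}{u-1-\log u}$ is maximized at $u=x$ with value $\frac{x(x-1)}{x-1-\log x}>\kappa(x)$. So the proof as written does not establish the stated constant.

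The gap is fixable precisely because any multiple of $(u-1)$ vanishes under $\EE_{a\sim\pi}$: the correct pointwise inequality is $u(u-1)\le \kappa(x)(-\log u)+(1+\kappa(x))(u-1)$ for all $u\in(0,x]$ (a one-variable check: the difference vanishes at $u=1$ and $u=x$, has critical points only at $u=1$ and $u=\kappa(x)/2\in[1,x]$, and is nonnegative throughout), and taking $\EE_{a\sim\pi}$ then yields $\EE_{a\sim\pi}[u(u-1)]\le\kappa(x)\,\KL(\pi(\cdot|s)\|\pi^*_{r^*}(\cdot|s))$, which combined with Lem.~\ref{lem:KL_as_value_gap} gives Eq.~\eqref{eq:cov_value_gap}. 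The paper avoids this calculus altogether by citing two $f$-divergence comparison results of Sason--Verd\'u (its Lem.~\ref{lem:KL_reverse_KL} and Lem.~\ref{lem:chi_KL}): $\chi^2(P\|Q)\le \KL(P\|Q)/\kappa_2(\zeta)$ and $\KL(P\|Q)\le\kappa_1(\zeta)\KL(Q\|P)$ with $\zeta=e^{D_{+\infty}(P\|Q)}\le e^{2\Rmax/\beta}$, whose ratio $\kappa_1/\kappa_2$ is exactly $\kappa$, together with monotonicity of $\kappa$; note that route only needs the upper bound on the density ratio. Your approach, once repaired with the $(1+\kappa)$ tangent-type correction, is a legitimate elementary alternative, but as submitted the key inequality is wrong and the argument does not go through.
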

The key insight of Lem.~\ref{lem:coverage_and_value_gap} is that: for any prospective candidates of the optimal policy (i.e., $\conv(\Pi)$\footnote{Here we consider the convex hull in order to incorporate all possible uniform mixture policies induced by $\Pi$.}) or any transfer candidates (i.e., $\{\pi^*_{r^w}\}_{w=1}^W$), \emph{its coverability of $\pi^*_{r^*}$ is controlled by its policy value gap}.
Intuitively, $\cov^{\tpi|\pi}$ becomes extremely large or even unbounded if there is a significant distribution shift between $\pi$ and $\tpi$.
However, in the presence of regularization ($\beta > 0$), we should only consider policies with bounded policy ratio relative to $\pi_\textref$ (see Assump.~\ref{assump:policy}-(II)), and exclude those (near-)deterministic ones from our policy candidate class $\Pi$, because none of them can be (near-)optimal.
In other words, regularization leverages prior knowledge from $\pi_\textref$ and enables a free policy filtration step before learning begins, ensuring that the remaining policies exhibit a favorable structure (Lem.~\ref{lem:coverage_and_value_gap}).

To understand why such property is uniquely arising from regularization, consider a bandit instance with a single optimal arm and multiple suboptimal arms yields rewards $\Rmax$ and $\Rmax - 2\epsilon$, respectively.
In pure reward maximization RL ($\beta = 0$), the optimal policy $\pi^*_{r^*}$ is deterministic.
A policy class $\Pi$ satisfying Assump.~\ref{assump:policy} may include several suboptimal deterministic policies.
The coverage coefficient between any of them and $\pi^*_{r^*}$ is infinity, while their suboptimal gaps are $2\epsilon$ and can be arbitrarily small.

\subsection{New Insights for Transfer Learning in RLHF}\label{sec:new_insights}
In the online RLHF, the primary goal of exploration is to discover high-reward regions, i.e., the states and actions covered by the optimal policy.
Therefore, in our reward transfer setup, we propose to \textbf{\emph{transfer from the policy with the best coverage of $\pi^*_{r^*}$}}.
Inspired by Lem.~\ref{lem:coverage_and_value_gap}, we identify two novel principles for transfer learning for RLHF objective, which we will further explore in later sections.

\textbf{Principle 1: Select Transfer Policies with High Policy Value}~By Lem.~\ref{lem:coverage_and_value_gap}, exploiting a policy with high value for data collection could ``help'' exploration, because such a policy inherently provides good coverage for $\pi^*_{r^*}$.
In other words, regularization reconciles the trade-off between exploration and exploitation.
This insight allows us to use policy value as a criterion and transfer from the policy achieving the highest value among all candidates.
This strategy is also practical given that policy values can be estimated well.

We emphasize that this principle is unique in the regularized setting.
As exemplified by the bandit instance before, near-optimality does not imply good coverage for $\pi^*_{r^*}$ in the absence of regularization.
To avoid negative transfer in pure reward maximization setting, previous algorithms typically rely on additional assumptions about task similarity and employ sophisticated strategies to balance exploiting good source tasks with exploration~\citep{golowich2022can,huang2023robust}, which can be challenging to generalize beyond the tabular setting.
In contrast, regularization enables us to filter transfer policies directly with their policy value, facilitating the applicablity beyond the tabular setup.

\textbf{Principle 2: Transfer from the Policy Distilled from Online Data---the ``Self-Transfer Learning''}~
We first introduce a key result by combining Lem.~\ref{lem:coverage_and_value_gap} and offline RLHF result in Lem.~\ref{lem:offline_RLHF}.
\begin{restatable}{theorem}{ThmOnlineOffline}\label{thm:general_val_gap}
    Under Assump.~\ref{assump:policy}, w.p. $1-\delta$, given an online dataset $\cD$ generated\footnote{See Cond.~\ref{cond:seq_data} for the definition of data generation process.} by a policy series $\{\pi^t\}_{t=1}^T \in \conv(\Pi)$, running $\RPO$ with $\conv(\Pi)$ and $\cR^\Pi$ on $\cD$ yields a distilled policy $\pi_\SELF$, such that,
    \begin{align*}
        & J_\beta(\pi^*_{r^*}) - J_\beta(\pi_\SELF) \leq \numberthis\label{eq:offline_policy_covergence} \\
        &\tilde{O}\Big(e^{2{\Rmax}} \Big(1 + \kappa(e^{\frac{2{\Rmax}}{\beta}})  \sum_{t=1}^T\frac{J_\beta(\pi^*_{r^*}) - J_\beta(\pi^t)}{\beta T}\Big)\sqrt{\frac{1}{T}}\Big).
    \end{align*}
\end{restatable}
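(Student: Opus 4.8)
The plan is to combine the offline RLHF guarantee (Lem.~\ref{lem:offline_RLHF}) with the coverage-value relationship (Lem.~\ref{lem:coverage_and_value_gap}) in a direct two-step chain. First I would invoke Lem.~\ref{lem:offline_RLHF} on the online dataset $\cD$ with $|\cD| = T$, the reward class $\cR^\Pi$ (which contains $r^*$ by construction) and policy class $\conv(\Pi)$: this yields the distilled policy $\pi_\SELF$ satisfying $J_\beta(\pi^*_{r^*}) - J_\beta(\pi_\SELF) = \tilde{O}(e^{2R}\,\cov^{\pi^*_{r^*}\mid \pi^\cD}\,T^{-1/2})$. The key subtlety here is identifying the effective data-generating policy $\pi^\cD$: since $\cD$ is produced by the sequence $\{\pi^t\}_{t=1}^T$, the relevant coverage is that of the \emph{mixture} policy $\bar\pi := \frac{1}{T}\sum_{t=1}^T \pi^t$, because each sample in $\cD$ is drawn from one of the $\pi^t$'s and the empirical coverage averages accordingly. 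I would need to check (via Cond.~\ref{cond:seq_data}) that Lem.~\ref{lem:offline_RLHF} applies with $\pi^\cD = \bar\pi$, which should follow since the concentration argument underlying $\RPO$'s guarantee only depends on $\cD$ through its empirical distribution. Note $\bar\pi \in \conv(\Pi)$ since each $\pi^t \in \conv(\Pi)$, so Lem.~\ref{lem:coverage_and_value_gap} is applicable to it.

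Second I would apply Lem.~\ref{lem:coverage_and_value_gap} to $\pi = \bar\pi$ to bound $\cov^{\pi^*_{r^*}\mid\bar\pi} \leq 1 + \kappa(e^{2R/\beta})\cdot \frac{J_\beta(\pi^*_{r^*}) - J_\beta(\bar\pi)}{\beta}$. The final ingredient is linearity of $J_\beta$'s reward term and convexity of the KL divergence: since $J_\beta(\pi) = \EE_{s,a\sim\pi}[r^*(s,a)] - \beta\KL(\pi\|\pi_\textref)$ and $\pi \mapsto \KL(\pi\|\pi_\textref)$ is convex, Jensen gives $J_\beta(\bar\pi) \geq \frac{1}{T}\sum_{t=1}^T J_\beta(\pi^t)$, hence $J_\beta(\pi^*_{r^*}) - J_\beta(\bar\pi) \leq \frac{1}{T}\sum_{t=1}^T \big(J_\beta(\pi^*_{r^*}) - J_\beta(\pi^t)\big)$. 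Substituting this into the coverage bound and then into the offline guarantee yields exactly Eq.~\eqref{eq:offline_policy_covergence}, absorbing the $\kappa(e^{2R/\beta})$ and $e^{2R}$ factors as written and the failure probability $\delta$ into the $\tilde{O}(\cdot)$ (its dependence is logarithmic).

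The main obstacle I anticipate is the bookkeeping around applying the offline lemma to a \emph{sequentially} (adaptively) generated dataset rather than an i.i.d.\ one: Lem.~\ref{lem:offline_RLHF} as stated in \citep{liu2024provably} presumes a dataset generated by a fixed policy $\pi^\cD$, whereas here the $\pi^t$ are produced by an online algorithm and may depend on earlier data. I would need to either cite a martingale/uniform-concentration version of the $\RPO$ analysis that accommodates adaptive data (the log-covering-number union bound over $\cR^\Pi$ is insensitive to adaptivity once one uses a Freedman-type inequality), or explicitly argue that the guarantee degrades only in logarithmic factors — which is consistent with the $\tilde{O}$ notation and with Cond.~\ref{cond:seq_data}'s role in the statement. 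A secondary, purely cosmetic point is making sure the convexity argument for the KL term is stated for $\conv(\Pi)$ and that $\pi^*_{r^*}$ remains the comparator in Lem.~\ref{lem:offline_RLHF}'s "$\forall \pi \in \tPi$" clause, which holds because $\pi^*_{r^*} \in \Pi \subseteq \conv(\Pi)$ under Assump.~\ref{assump:policy}-(I).
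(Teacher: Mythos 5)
Your proposal is correct and follows essentially the same route as the paper: apply the offline $\RPO$ guarantee with the uniform mixture $\pi_\mix^T=\frac{1}{T}\sum_t\pi^t$ as the effective data-generating policy, bound $\cov^{\pi^*_{r^*}|\pi_\mix^T}$ via Lem.~\ref{lem:coverage_and_value_gap}, and use convexity of the KL term (Jensen) to replace $J_\beta(\pi_\mix^T)$ by the average of the $J_\beta(\pi^t)$. The adaptivity issue you flag is exactly what the paper resolves in its Lem.~\ref{lem:offline_learning}, which re-derives the $\RPO$ bound for sequentially generated data (Cond.~\ref{cond:seq_data}) using a martingale-type MLE concentration, just as you anticipated.
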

To understand the significance of Thm.~\ref{thm:general_val_gap}, consider the case when $\{\pi^t\}_{t\in[T]}$ are produced by a no-regret online learning algorithm, such as $\XPO$ in Lem.~\ref{lem:online_RLHF}.
As a result, the term $\sum_{t=1}^T\frac{J_\beta(\pi^*_{r^*}) - J_\beta(\pi^t)}{\beta T}$ in Eq.~\eqref{eq:offline_policy_covergence} diminishes to 0 as $T$ increases. This implies that the policy $\pi_\SELF$ distilled from online data by offline learning techniques converges to $\pi^*_{r^*}$ at a rate of $O(T^{-\frac{1}{2}})$, which \textbf{\emph{does not depend on}} $|\cS|,|\cA|$ or other complexity measures such as $\cov_{\infty}(\Pi)$ in Lem.~\ref{lem:online_RLHF}.
This result not only strictly improves the sample complexity bounds\footnote{Beyond sample complexity, a regret bound improved to $\tilde{O}(\sqrt{T})$ for online RLHF can be established. We defer it to Coro.~\ref{coro:sqrtT_reg} after presenting our main results.} for existing online RLHF algorithms \citep{xiong2024iterative, xie2024exploratory,cen2024value,zhang2024self}, but also reveals a fundamental difference from the pure reward maximization setting, where lower bounds depending on those structural complexity factors have been established \citep{auer2002nonstochastic,dani2008stochastic}.
We defer detailed discussions to Appx.~\ref{appx:proof_offline_policy_gap}.

More importantly, the faster rate of the convergence of $\pi_\SELF$ to $\pi^*_{r^*}$ also indicates the potential of using $\pi_\SELF$ as a candidate for policy transfer.
We term this regime as ``\emph{self-transfer learning}'', and refer $\pi_\SELF$ as the ``\emph{self-transfer policy}''.
Notably, $\pi_\SELF$ continuously improves and converges to $\pi^*_{r^*}$ as the dataset grows, while the source policies $\{\pi^*_{r^w}\}_{w=1}^W$ retain fixed non-zero value gaps due to the imperfections in reward models $\{r^w\}_{w=1}^W$.
This reveals another benefit of self-transfer learning: it helps to avoid being restricted by suboptimal source reward models.

\section{Provably Efficient Transfer Learning}\label{sec:main_theory}
In this section, we develop provably efficient transfer learning algorithms based on the principles in Sec.~\ref{sec:new_insights}.

\textbf{Outline of Main Algorithm}~
Our main algorithm $\TPO$ is provided in Alg.~\ref{alg:main_algorithm}, which leverages Alg.~\ref{alg:transfer_policy_computing}---\textbf{T}ransfer \textbf{P}olicy \textbf{S}election (\textbf{$\TPS$})---as a subroutine to select source policies to transfer from.
$\TPO$ can be regarded as a mixture of standard online learning and transfer learning, balanced through a hyper-parameter $\alpha \in (0, 1)$.
Motivated by the implication of Thm.~\ref{thm:general_val_gap}, $\TPO$ returns the detailed policy computed with all the data collected.
For convenience, we divide the total number of iterations $T$ into $K=T/N$ blocks, each containing $N$ sub-iterations.
In each block, we first run $\alpha N$ iterations of an \textbf{{O}}n\textbf{{L}}ine learning algorithm $\AlgOnline$, followed by $(1-\alpha)N$ iterations of transfer learning with policy selected by Alg.~\ref{alg:transfer_policy_computing}.
Here, $\AlgOnline$ can be any online algorithm with per-step no-regret guarantees, for example, $\XPO$ in Lem.~\ref{lem:online_RLHF}.
To save space, we defer to Appx.~\ref{appx:online_oracle} the formal behavior assumption on $\AlgOnline$ (Def.~\ref{def:online_oracle}) and concrete examples with verifications.

\textbf{A Preview of Main Theorem}~
Before diving into the details, we first highlight the benefits of transfer learning by presenting an informal corollary regarding the regret bound of TPO, under concrete choices of $\AlgOnline$ and $\alpha$.
\begin{corollary}\label{coro:total_regret}
    Choosing XPO \citep{xie2024exploratory} as $\AlgOnline$ and $\alpha = e^{-\frac{R}{\beta}}$, 
    $\TPO$ achieves $\tilde{O}(W\sqrt{T})$ regret when $T$ is small and $\tilde{O}(\sqrt{T})$ regret after $T$ is large enough.
\end{corollary}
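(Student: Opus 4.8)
The plan is to decompose the total regret of $\TPO$ over the $K = T/N$ blocks into two contributions: the regret incurred during the $\alpha N$ online-learning sub-iterations of each block (run by $\AlgOnline = \XPO$), and the regret incurred during the $(1-\alpha)N$ transfer sub-iterations. For the online portion, I would invoke Lemma~\ref{lem:online_RLHF}: running $\XPO$ for a total of $\alpha T$ steps accumulates regret $\tilde O(R e^{2R}\sqrt{\cov_\infty(\Pi)\,\alpha T})$. The key structural fact I expect the main theorem to supply is that $\cov_\infty(\Pi) = \tilde O(e^{R/\beta})$ — this follows from Assumption~\ref{assump:policy}-(II), which bounds every policy ratio by $e^{R/\beta}$ — so with the choice $\alpha = e^{-R/\beta}$ the online contribution becomes $\tilde O(\sqrt{\alpha T \cdot \alpha^{-1}}) = \tilde O(\sqrt T)$ up to the $e^{2R}$-type prefactors that the paper absorbs.

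For the transfer portion, I would split into the "early" and "late" regimes named in the corollary. In the early regime, the point is that $\TPS$ (Alg.~\ref{alg:transfer_policy_computing}) selects, among the $W$ source policies $\{\pi^*_{r^w}\}$ and the self-transfer policy $\pi_\SELF$, one whose estimated value gap is near-minimal; since value estimates over $W+1$ candidates cost only $\tilde O(W)$ samples per block to be accurate to the relevant precision, the transfer sub-iterations contribute at most $\tilde O(\Delta_{\min}\, T) + \tilde O(W\sqrt T)$, and when $r^*$ is (nearly) among the sources this is $\tilde O(W\sqrt T)$. In the late regime, Theorem~\ref{thm:general_val_gap} is the workhorse: once enough data has accumulated, $\pi_\SELF$ has value gap $\tilde O(e^{2R}\sqrt{1/T'})$ where $T'$ is the data size so far, because the averaged online regret $\frac1{\alpha T'}\sum J_\beta(\pi^*_{r^*}) - J_\beta(\pi^t)$ appearing in the bound is itself $\tilde O(1/\sqrt{\alpha T'})$ and hence vanishes. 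So after $T$ is large enough that $\Delta_{\min} \geq$ (the $\pi_\SELF$ gap), $\TPS$ prefers $\pi_\SELF$, and the transfer regret is also $\tilde O(\sqrt T)$, independent of $\cov_\infty(\Pi)$ and of $|\cS|,|\cA|$. Summing the two contributions gives $\tilde O(W\sqrt T)$ early and $\tilde O(\sqrt T)$ late.

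The main obstacle I anticipate is the bookkeeping in the late regime: one must argue that $\pi_\SELF$'s guarantee from Theorem~\ref{thm:general_val_gap} is \emph{self-consistent} across blocks — the quality of $\pi_\SELF$ at block $k$ depends on the cumulative regret of $\{\pi^t\}$ up to block $k$, which in turn depends on whether earlier blocks were transferring to a good policy, which depends on $\pi_\SELF$ being good earlier, and so on. Breaking this circularity requires an induction over blocks: assume the per-block regret is controlled up to block $k$, use it to bound the averaged regret feeding into Theorem~\ref{thm:general_val_gap}, conclude $\pi_\SELF$ is good at block $k+1$, and hence the transfer sub-iterations of block $k+1$ are cheap. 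A secondary technical point is making precise the "$T$ small / $T$ large enough" threshold — it should be roughly the $T$ at which $e^{2R}/\sqrt{\alpha T} \lesssim \Delta_{\min}$, i.e. $T \gtrsim e^{4R+R/\beta}/\Delta_{\min}^2$ — and checking that the value-estimation error in $\TPS$ (which scales like $\sqrt{W/(\text{block size})}$) is small enough not to cause $\TPS$ to pick a bad candidate near this threshold. I would handle both by choosing $N$ (the block length) large enough as a function of the other parameters, which is permitted since $N$ is a free design parameter of the algorithm.
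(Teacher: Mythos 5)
Your overall route coincides with the paper's: the corollary is read off from the regret decomposition of Thm.~\ref{thm:regret_guarantees} (an online part and a transfer part); your observation that Assump.~\ref{assump:policy}-(II) forces $\cov_\infty(\Pi)\le e^{R/\beta}$ (take $\mu=\pi_\textref$ in Def.~\ref{def:l_inf_coverage}) is exactly what makes the choice $\alpha=e^{-R/\beta}$ cancel the complexity factor in $\Regret_\Online^{(t)}=\tilde O(Re^{2R}\sqrt{\alpha\,\Complexity(\Pi)\,t})$; and your late-stage argument ($\pi_\SELF$'s gap from Thm.~\ref{thm:general_val_gap}, certified through the online $\alpha$-fraction of the data) is the same self-bounding step the paper performs via Lem.~\ref{lem:coverage_and_value_gap} together with the $\wedge\,\sqrt{\Complexity(\Pi)}/\alpha$ fallback inside $\iota^{k,n}$. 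The paper breaks the circularity you flag by first establishing an always-valid crude bound and bootstrapping once (Remark~\ref{coro:total_regret_formal}), rather than by a block-by-block induction, but these are interchangeable in spirit.

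The genuine gap is in how you account for the transfer-policy \emph{selection} errors. Your early-regime step rests on the claim that value estimates over the $W+1$ candidates ``cost only $\tilde O(W)$ samples per block to be accurate to the relevant precision,'' and your proposed repair near the threshold is to enlarge the block length $N$ as a function of the problem parameters. Neither works as stated: certifying that a source $w$ is suboptimal at the relevant precision requires on the order of $e^{4R}/\Delta(w)^2$ samples drawn from that source, and the gaps $\Delta(w),\Delta_{\min}$ are explicitly unknown in this setting, so $N$ cannot be tuned to them. The paper never needs per-step correct selection: $\TPS$ uses count-based optimistic estimates for the sources (Lem.~\ref{lem:formal_optism_val_est_error}) and a pessimistic estimate for $\pi_\SELF$ (Lem.~\ref{lem:formal_val_est_error}), so mis-selections are self-correcting, and their total cost is bounded UCB-style by $\tilde O\big(e^{2R}\sqrt{(1-\alpha)Wt}\,\wedge\,\sum_{w:\Delta(w)>0} e^{4R}/\Delta(w)\big)$, the second term of Eq.~\eqref{eq:transfer_regret_bound_2}, while steps where the selected policy is good are charged $\Delta_{\min}\wedge\iota^{k,n}$. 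With that bookkeeping the early bound $\tilde O(W\sqrt T)$ holds for all $T\lesssim W^2/\Delta_{\min}^2$ simply because $\Delta_{\min}T\le W\sqrt T$ there --- no need for $r^*$ to be ``nearly among the sources,'' as you phrased it --- and the late bound is $\tilde O(\sqrt T)$ plus $T$-independent additive constants. Your sketch needs this gap-dependent, tuning-free treatment of the selection step (or an equivalent) to go through.
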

Coro.~\ref{coro:total_regret} is implied by our main result in Thm.~\ref{thm:regret_guarantees}.
We refer to Remark~\ref{coro:total_regret_formal} for a detailed quantification of ``small'' and ``large enough''.
Compared with previous online RLHF results without transfer learning, in the early stage, \TPO~improves the structural complexity measure coefficients (e.g. $\cov_\infty$ in \XPO) to the number of source tasks $W$, which is usually much smaller.
Besides, it even gets rid of such coefficient term and achieves $\tilde{O}(\sqrt{T})$ regret over time.

Next, we take a closer look at the transfer policy selection steps in Alg.~\ref{alg:transfer_policy_computing} in Sec.~\ref{sec:alg_explanation}, and provide detailed analyses and discussion of TPO in Sec.~\ref{sec:alg_main_results}.
\begin{algorithm*}[t]
    \textbf{Input}: Block size $N$; Number of blocks $K = T / N$; $\{r^w\}_{w=1}^W$; $\Pi$; $\alpha \in (0,1)$; $\delta\in(0,1)$ \\
    For all $(k,n)$, 
    $\cD^{k,n}$ denotes all the data collected up to $(k,n)$, and $\cD^{k,n}_\Online$ only includes those collected by $\AlgOnline$. See detailed definitions in Appx.~\ref{appx:main_alg_details}.\\
    \For{$k=1,2...,K$}{
        \For{$n=1,...,N$}{
            \IfThenElse{$n \leq \alpha N$}
            {
                $\pi^{k,n} \gets \AlgOnline(\alpha T,\Pi,\delta;\cD^{k,n}_\Online)$
            }
            {
                ~$\pi^{k,n} \gets \TPS(T,\Pi,\delta,\{r^w\}_{w=1}^W;\cD^{k,n})$
            }
            Collect data $(s^{k,n}, a^{k,n},\ta^{k,n}, y^{k,n}) \sim \rho\times\pi^{k,n}\times\pi_\textref\times\mP_{r^*}(\cdot|\cdot,\cdot,\cdot)$. 
        }
    }
    \Return $\hat{\pi}^*_{r^*}$ computed by $\RPO$ with $\cD^{K,N+1}$.
    \caption{\textbf{T}ransfer \textbf{P}olicy \textbf{O}ptimization (\TPO)}\label{alg:main_algorithm}
\end{algorithm*}

\subsection{Details for Alg.~\ref{alg:transfer_policy_computing}: The Transfer Policy Selection}\label{sec:alg_explanation}
The design of Alg.~\ref{alg:transfer_policy_computing} follows the two principles in Sec.~\ref{sec:new_insights}, which are: 
(1) transfer the policy with the highest (estimated) policy value, because higher policy value implies better coverage for $\pi^*_{r^*}$;
(2) include the self-transfer policy as a candidate, because it progressively converges to $\pi^*_{r^*}$ at a faster rate than the best-known ones for online policies.

We first clarify some notation.
Given a dataset $\cD:=\{(s^i,a^i,\ta^i,y^i,\pi^i)\}_{i=1}^{|\cD|}$, $L_{\cD}(r)$ denotes the average negative log-likelihood (NLL) loss regarding the reward $r$:
\begin{align*}
        L_{\cD}(r) := & \frac{1}{|\cD|}\sum_{i \leq |\cD|} -y^i \log \sigma\Big(r(s^i,a^i) - r(s^i,\ta^i)\Big) \\
        & - (1 - y^i) \log \sigma\Big(r(s^i,\ta^i) - r(s^i,a^i)\Big). \numberthis\label{eq:def_likelihood}
\end{align*}
We will use $\EE_{\rho, \pi}[r] := \EE_{s\sim\rho, a\sim\pi}[r(s,a)]$ as a short note.
In Line~\ref{line:counter_N} of Alg.~\ref{alg:transfer_policy_computing}, $N(w;\cD) := \sum_{i\leq|\cD|} \mI[\pi^i = \pi^*_{r^w}]$ denotes the number samples collected with $\pi^*_{r^w}$ in the dataset, following the convention that $1/N(\cdot,\cdot) = +\infty$ if $N(\cdot,\cdot)=0$.
In Line~\ref{line:RPO}, we leverage $\RPO$ \citep{liu2024provably} to compute the self-transfer policy $\pi_\SELF$ and a reward function $\hr_\SELF$.
To save space, we defer the details of $\RPO$ to Appx.~\ref{appx:adaption_offline}.

\begin{algorithm*}[t]
    \textbf{Input}: Source tasks $\{r^w\}_{w=1}^W$; Policy class $\Pi$; $T$, $\delta$; Dataset $\cD:=\{(s^i,a^i,\ta^i,y^i,\pi^i)\}_{i\leq|\cD|}$. \\
    // \blue{Optimistic estimation for $J_\beta(\pi^*_{r^w}) - J_\beta(\pi_\textref)$.} \\
    $\hr_\MLE \gets \argmin_{r\in\cR^\Pi} L_{\cD}(r)$. \label{line:MLE}\\
    $\forall w\in [W],~\hat{V}(\pi^*_{r^w};\cD) \gets \EE_{\rho,\pi^*_{r^w}}[\hr_\MLE] - \EE_{\rho,\pi_\textref}[\hr_\MLE] - \beta \KL(\pi^*_{r^w}\|\pi_\textref) + 16 e^{2{\Rmax}} \sqrt{\frac{1}{N(w;\cD)} \log\frac{|\Pi|WT}{\delta}}.$ \label{line:counter_N}\\
    // \blue{Pessimistic estimation for $J_\beta(\pi_\SELF) - J_\beta(\pi_\textref)$.} \\
    $\pi_\SELF, \hr_\SELF \gets \RPO(\conv(\Pi),\cR^{\Pi},\cD,\eta)$ with $\eta = c\cdot (1+e^{{\Rmax}})^{-2} \sqrt{\frac{1}{|\cD|}\log\frac{|\Pi|T}{\delta}}.$
    \label{line:RPO}\\
    $\hV(\pi_\SELF;\cD) \gets \EE_{\rho,\pi_\SELF}[\hr_\SELF] - \EE_{\rho,\pi_\textref}[\hr_\SELF] - \beta \KL(\pi_\SELF\|\pi_\textref) + \frac{1}{\eta} L_{\cD}(\hr_\SELF) - \frac{1}{\eta} L_{\cD}(\hr_\MLE) - 2c e^{2{\Rmax}} \sqrt{\frac{1}{|\cD|}\log\frac{|\Pi|T}{\delta}}.$ \label{line:LB_pi_value}\\
    \Return $\argmax_{\pi \in \{\pi^*_{r^w}\}_{w\in[W]} \cup \{\pi_\SELF\}} \hat{V}(\pi;\cD) $. // \blue{Selecting Transfer Policy by Estimated Value}
    \caption{\textbf{T}ransfer \textbf{P}olicy \textbf{S}election (\TPS)}\label{alg:transfer_policy_computing}
\end{algorithm*}

Next, we explain our value estimation strategy.
Note that in RLHF setting, we cannot access $r^*$ directly but only the preference comparison samples following the BT model.
Thus, we instead estimate the value gain relative to $J_\beta(\pi_\textref)$.

\textbf{Optimistic Estimation for} $J_\beta(\pi^*_{r^w}) - J_\beta(\pi_\textref)$~
For policies induced by imperfect source reward models, we adopt UCB-style optimistic policy evaluation to efficiently balance exploration and exploitation.
Intuitively, by utilizing the MLE reward estimator $\hr_\MLE$, the estimation error $\hr_\MLE - r^*$ under the distribution of $\pi^*_{r^w}$ is related to the number of samples from $\pi^*_{r^w}$ occurring in the dataset. Therefore, we can quantify the value estimation error as follows.
\begin{restatable}[Value Est Error for $\{\pi^*_{r^w}\}_{w\in[W]}$]{lemma}{LemOptismValErr}\label{lem:formal_optism_val_est_error}
    Under Assump.~\ref{assump:policy} and Def.~\ref{def:online_oracle}, w.p. $1-\delta$, in each call of Alg.~\ref{alg:transfer_policy_computing}:
    \begin{align*}
        \forall w\in[W],\quad & J_\beta(\pi^*_{r^w}) - J_\beta(\pi_\textref) \leq \hat{V}(\pi^*_{r^w};\cD) \\
        \leq & J_\beta(\pi^*_{r^w}) - J_\beta(\pi_\textref) + \tilde{O}(\frac{e^{2{\Rmax}}}{\sqrt{N(w;\cD)}}).
    \end{align*}
\end{restatable}

\textbf{Pessimistic Estimation for} $J_\beta(\pi_\SELF)-J_\beta(\pi_\textref)$~
The main challenge in estimating the value of $\pi_\SELF$ is that, $\pi_\SELF$ is not fixed but changing and improving.
The previous optimistic strategy is not applicable here, since the coverage of $\pi_\SELF$ by the dataset is unclear, making it difficult to quantify the uncertainty in estimation via count-based bonus term.
Fortunately, given that $\pi_\SELF$ is improving over time, it is more important when it surpasses all the other source policies. Therefore, it suffices to construct a tight lower bound for $J_\beta(\pi_\SELF)-J_\beta(\pi_\textref)$; see line~\ref{line:LB_pi_value}. 
By leveraging $\hat{r}_\MLE$ and the optimality of $(\pi_\SELF, \hr_\SELF)$ for the $\RPO$ loss, we can show:

\begin{restatable}[Value Est Error for $\pi_\SELF$]{lemma}{LemSelfTransErr}\label{lem:formal_val_est_error}
    Under Assump.~\ref{assump:policy} and Def.~\ref{def:online_oracle}, w.p. $1-\delta$, in each call of Alg.~\ref{alg:transfer_policy_computing}:
    \begin{align*}
        &J_\beta(\pi^*_{r^*}) - J_\beta(\pi_\textref) - \tilde{O}\Big(\frac{{\Rmax} e^{2{\Rmax}}}{\sqrt{|\cD|}} \cdot (\cov^{\pi^*_{r^*}|\pi_\mix^{\cD}} \wedge \frac{\sqrt{\Complexity(\Pi)}}{\alpha})\Big)\\
        & \leq \hat{V}(\pi_\SELF; \cD) \leq J_\beta(\pi_\SELF) - J_\beta(\pi_\textref) \numberthis\label{eq:offline_est_err}.
    \end{align*}
\end{restatable}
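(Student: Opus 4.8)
\textbf{Proof proposal for Lemma~\ref{lem:formal_val_est_error}.}

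The plan is to establish the upper bound (pessimism) and the lower bound (tightness) separately, both building on the structure of the $\RPO$ loss and the MLE concentration. For the \textbf{upper bound} $\hV(\pi_\SELF;\cD) \leq J_\beta(\pi_\SELF) - J_\beta(\pi_\textref)$, the key observation is that $\hV(\pi_\SELF;\cD)$ is designed as an empirical lower confidence bound. Writing it out, $\hV(\pi_\SELF;\cD) = \big(\EE_{\rho,\pi_\SELF}[\hr_\SELF] - \EE_{\rho,\pi_\textref}[\hr_\SELF] - \beta\KL(\pi_\SELF\|\pi_\textref)\big) + \tfrac{1}{\eta}(L_\cD(\hr_\SELF) - L_\cD(\hr_\MLE)) - 2ce^{2\Rmax}\sqrt{\tfrac{1}{|\cD|}\log\tfrac{|\Pi|T}{\delta}}$. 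The first parenthesized term equals $J_\beta(\pi_\SELF;\hr_\SELF) - J_\beta(\pi_\textref;\hr_\SELF)$, i.e.\ the value gap measured under the \emph{proxy} reward $\hr_\SELF$. I would bound its deviation from the true gap $J_\beta(\pi_\SELF) - J_\beta(\pi_\textref)$ by $\tfrac{1}{\eta}|L_\cD(\hr_\SELF) - L_\cD(r^*)|$ plus a uniform-convergence term, using the standard fact (from the $\RPO$ analysis / Lem.~\ref{lem:offline_RLHF} machinery) that differences in reward expectations are controlled by differences in NLL loss up to $e^{2\Rmax}$ curvature factors and a $\sqrt{\tfrac{1}{|\cD|}\log\tfrac{|\Pi|T}{\delta}}$ statistical term. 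Then I use $L_\cD(\hr_\MLE) \leq L_\cD(r^*)$ (optimality of the MLE over $\cR^\Pi$, and $r^*$-realizability gives some $r\in\cR^\Pi$ with $\pi_r^* = \pi^*_{r^*}$; care is needed whether $r^*\in\cR^\Pi$ exactly or only up to the induced-policy equivalence, so I'd work with the representative reward) to convert $\tfrac{1}{\eta}(L_\cD(\hr_\SELF) - L_\cD(\hr_\MLE))$ into something that absorbs the loss-difference slack, and choose the constant $c$ and $\eta$ exactly so the $-2ce^{2\Rmax}\sqrt{\cdots}$ term dominates the residual uniform-convergence error. This gives $\hV(\pi_\SELF;\cD) \leq J_\beta(\pi_\SELF) - J_\beta(\pi_\textref)$ deterministically on the good event.

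For the \textbf{lower bound}, the target is $\hV(\pi_\SELF;\cD) \geq J_\beta(\pi^*_{r^*}) - J_\beta(\pi_\textref) - \tilde{O}\big(\tfrac{\Rmax e^{2\Rmax}}{\sqrt{|\cD|}}(\cov^{\pi^*_{r^*}|\pi_\mix^\cD} \wedge \tfrac{\sqrt{\Complexity(\Pi)}}{\alpha})\big)$. Here I would invoke the offline guarantee of $\RPO$ (Lem.~\ref{lem:offline_RLHF}): since $(\pi_\SELF,\hr_\SELF)$ is the $\RPO$ output on $\cD$ with step size $\eta$, the suboptimality $J_\beta(\pi^*_{r^*}) - J_\beta(\pi_\SELF) \leq \tilde{O}(e^{2\Rmax}\cov^{\pi^*_{r^*}|\pi^\cD}|\cD|^{-1/2})$ where $\pi^\cD = \pi_\mix^\cD$ is the empirical mixture data-generating policy. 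Combined with the already-established upper bound $\hV(\pi_\SELF;\cD)\leq J_\beta(\pi_\SELF)-J_\beta(\pi_\textref)$ — wait, that goes the wrong direction; instead I need a matching \emph{lower} bound on $\hV(\pi_\SELF;\cD)$ in terms of $J_\beta(\pi_\SELF)-J_\beta(\pi_\textref)$, i.e.\ the confidence bonus is not too loose. For that I again use the reward-expectation-vs-NLL relationship in the reverse direction together with $L_\cD(\hr_\SELF)\geq$ the population minimizer up to concentration, showing $\hV(\pi_\SELF;\cD)\geq J_\beta(\pi_\SELF)-J_\beta(\pi_\textref) - \tilde{O}(e^{2\Rmax}\sqrt{\tfrac{1}{|\cD|}\log\tfrac{|\Pi|T}{\delta}})$, and this extra statistical term gets folded into the stated $\tilde{O}$ since $\cov^{\pi^*_{r^*}|\pi_\mix^\cD}\geq 1$ always. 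Finally, I chain $\hV(\pi_\SELF;\cD)\geq J_\beta(\pi_\SELF) - J_\beta(\pi_\textref) - (\text{stat}) \geq J_\beta(\pi^*_{r^*}) - J_\beta(\pi_\textref) - \tilde{O}(e^{2\Rmax}\cov^{\pi^*_{r^*}|\pi_\mix^\cD}|\cD|^{-1/2})$, which is the $\cov$ branch of the minimum. The $\tfrac{\sqrt{\Complexity(\Pi)}}{\alpha}$ branch comes from a different route: on the good event, at least an $\alpha$-fraction of $\cD$ is generated by $\AlgOnline$, whose per-step no-regret property (Def.~\ref{def:online_oracle}) plus Lem.~\ref{lem:coverage_and_value_gap} / Lem.~\ref{lem:online_RLHF} bound $\cov^{\pi^*_{r^*}|\pi_\mix^\cD}$ by $\tfrac{1}{\alpha}\sqrt{\Complexity(\Pi)}$ (roughly: the online portion alone already certifies coverage $\sqrt{\Complexity(\Pi)}$ up to the $1/\alpha$ density dilution); taking the min of the two bounds on $\cov^{\pi^*_{r^*}|\pi_\mix^\cD}$ yields the stated expression.

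The main obstacle will be the \textbf{lower-bound direction with the right curvature constants}: relating the KL-regularized value gap under the proxy reward $\hr_\SELF$ to the gap under $r^*$ requires carefully tracking how the sigmoid link's strong convexity/smoothness (the $e^{2\Rmax}$-type factors) interacts with the $\eta$-scaled NLL terms in the $\RPO$ objective, and ensuring the hand-picked $\eta = c(1+e^{\Rmax})^{-2}\sqrt{\tfrac{1}{|\cD|}\log\tfrac{|\Pi|T}{\delta}}$ makes all the slack terms collapse into a single $\tilde{O}(\Rmax e^{2\Rmax}/\sqrt{|\cD|})$ scale without losing the $\cov$ dependence. A secondary subtlety is justifying the $\wedge \tfrac{\sqrt{\Complexity(\Pi)}}{\alpha}$ bound on the data-mixture coverage, which needs the no-regret guarantee of $\AlgOnline$ to be invoked at the level of the sub-dataset $\cD_\Online$ and then transferred to the full $\cD$ via the $\alpha$-fraction accounting; I expect this to reuse the argument already behind Thm.~\ref{thm:general_val_gap}. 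I would prove the two branches as separate claims and then combine. $\hfill\square$
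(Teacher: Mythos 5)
Your proposal assembles the right ingredients (RPO optimality, MLE concentration, the reward-error-to-NLL machinery, the offline coverage guarantee, the $\alpha$-fraction role of $\AlgOnline$), but the core of your lower-bound argument has a genuine gap. You plan to first show $\hV(\pi_\SELF;\cD)\ge J_\beta(\pi_\SELF)-J_\beta(\pi_\textref)-\tilde O\big(e^{2{\Rmax}}\sqrt{\log(|\Pi|T/\delta)/|\cD|}\big)$ \emph{with no coverage factor}, and then chain with the offline RPO guarantee. That intermediate claim is not justified: relating $\EE_{\rho,\pi_\SELF,\pi_\textref}$-expectations of $\hr_\SELF-r^*$ to NLL/Hellinger quantities measured on $\cD$ requires a change of measure, i.e.\ a factor $\cov^{\pi_\SELF|\pi_\mix^\cD}$, which is precisely the quantity the paper points out is uncontrolled for $\pi_\SELF$ (this is why a pessimistic lower confidence bound is used rather than a UCB). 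Pessimistic estimates are only tight relative to \emph{well-covered comparators}; $\hV(\pi_\SELF;\cD)$ can legitimately sit far below $J_\beta(\pi_\SELF)-J_\beta(\pi_\textref)$ when the data covers $\pi_\SELF$ poorly. The missing idea is the comparator argument through the outer maximization in RPO: since $\EE_{\rho,\pi_\SELF}[\hr_\SELF]-\EE_{\rho,\pi_\textref}[\hr_\SELF]-\beta\KL(\pi_\SELF\|\pi_\textref)+\tfrac1\eta L_\cD(\hr_\SELF)$ equals $\tfrac1\eta$ times the max-min RPO value, it dominates the inner minimum evaluated at \emph{any} comparator $\pi\in\conv(\Pi)$. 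This lets you bound $J_\beta(\pi)-J_\beta(\pi_\textref)-\hV(\pi_\SELF;\cD)$ with the coverage factor $\cov^{\pi|\pi_\mix^\cD}$ attached to the comparator (via Lem.~\ref{lem:r_err_to_Hellinger}, Lem.~\ref{lem:MLE_Estimation}, and absorbing the linear Hellinger term into the negative quadratic one, $ax-bx^2\le a^2/(4b)$). Instantiating $\pi=\pi^*_{r^*}$ gives the $\cov^{\pi^*_{r^*}|\pi_\mix^\cD}$ branch directly, with no statement about tightness of $\hV$ relative to $J_\beta(\pi_\SELF)$ ever needed.

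Two further points. For the $\sqrt{\Complexity(\Pi)}/\alpha$ branch, your route of bounding $\cov^{\pi^*_{r^*}|\pi_\mix^\cD}$ itself via Lem.~\ref{lem:coverage_and_value_gap} and the no-regret property does not yield the stated form: $\pi_\mix^\cD$ contains the (possibly poor) transfer policies, and even restricting to the online portion with a $1/\alpha$ dilution produces $\kappa(e^{\frac{2{\Rmax}}{\beta}})/\beta$ factors that are not hidden by $\tilde O(\cdot)$ in the lemma. The paper instead instantiates the same comparator bound with $\pi$ equal to the uniform mixture of the $\AlgOnline$ policies: its value gap to $\pi^*_{r^*}$ is ${\Rmax}e^{2{\Rmax}}\sqrt{\Complexity(\Pi)/(\alpha|\cD|)}$ up to logs by Def.~\ref{def:online_oracle}, and its coverage by $\pi_\mix^\cD$ is at most $2/\alpha$ by the $\alpha$-fraction accounting; combining gives exactly the second branch. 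Finally, in the upper bound, the likelihood inequality you need is $L_\cD(r^*)-L_\cD(\hr_\MLE)\le \frac{2}{|\cD|}\log\frac{|\Pi|}{\delta}$, which follows from the MLE concentration lemma (nonnegativity of the Hellinger term in Lem.~\ref{lem:MLE_Estimation}), not from MLE optimality $L_\cD(\hr_\MLE)\le L_\cD(r^*)$, which is the reverse and unhelpful direction; and the clean route to pessimism is that $\hr_\SELF$ minimizes $\tfrac1\eta L_\cD(\cdot)+\EE_{\rho,\pi_\SELF,\pi_\textref}[\cdot]$, so replacing it by $r^*$ only increases that expression—again avoiding any proxy-vs-true value comparison for $\pi_\SELF$ and hence any coverage factor there.
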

Here $\pi_\mix^{\cD}:=\frac{1}{|\cD|}\sum_{i\leq|\cD|}\pi^i$ denotes the mixture policy.
In the LHS, the coefficient takes minimum over two factors $\cov^{\pi^*_{r^*}|\pi_\mix^{\cD}}$ and $\sqrt{\cC(\Pi)}/\alpha$, resulting from two different ways to estimate the value gap of $\pi_\SELF$.
According to offline RLHF theory (see Lem.~\ref{lem:offline_learning}), $\pi_\SELF$ is competitive with any $\pi \in \conv(\Pi)$ well-covered by the dataset distribution, or equivalently, $J_\beta(\pi^*_{r^*}) - J_\beta(\pi_\SELF) = J_\beta(\pi^*_{r^*}) - J_\beta(\pi) + \tilde{O}(|\cD|^{-\frac{1}{2}}\cov^{\pi|\pi_\mix^{\cD}})$.
By choosing $\pi = \pi^*_{r^*}$, we obtain the first bound with the factor $\cov^{\pi^*_{r^*}|\pi_\mix^D}$.
Next, considering $\pi = \frac{1}{\alpha kN}\sum_{i\leq k, j\leq \alpha N}\pi^{i,j}$, the uniform mixture of policies generated by $\AlgOnline$ so far, leads to the second bound involving $\sqrt{\cC(\Pi)}/\alpha$.
This also explains why we still involve normal online learning in $\TPO$---to provide another safeguard for the quality of the transfer policy.

\subsection{Main Theorem and Interpretation}\label{sec:alg_main_results}
We establish the per-step regret bound for TPO below.
\begin{restatable}[Total Regret]{theorem}{ThmMainReg}\label{thm:regret_guarantees}
    Suppose $\AlgOnline$ is a no-regret instance satisfying Def.~\ref{def:online_oracle}, whose regret grows as $\tilde{O}(\Rmax e^{2\Rmax} \sqrt{\cC(\Pi)\tilde{T}})$ for any intermediate step $\tilde{T}$ and some policy class complexity measure $\cC(\Pi)$.
    Then, w.p. $1-2\delta$, for any $T/K \leq t \leq T$, running $\TPO$ yields a regret bound:
    \begin{align*}
        \sum_{\tau \leq t} J_\beta(\pi^*_{r^*}) -&  J_\beta(\pi^{k(\tau),n(\tau)}) = \Regret_\Online^{(t)} + \Regret_\Transfer^{(t)}\\
        \Regret_\Online^{(t)} :=& \tilde{O}({\Rmax} e^{2{\Rmax}} \sqrt{\alpha\Complexity(\Pi)t}),\numberthis\label{eq:transfer_regret_bound_1}\\
        \Regret_\Transfer^{(t)} :=& \tilde{O}\Big(\sum_{\substack{\tau\leq t: \alpha N < n(\tau) \leq N}} \Delta_{\min} \wedge \iota^{k(\tau),n(\tau)}\numberthis\label{eq:transfer_regret_bound_2}\\
         &+ e^{2{\Rmax}} \sqrt{(1-\alpha)Wt} \wedge  \sum_{w:\Delta(w) > 0} \frac{e^{4{\Rmax}}}{\Delta(w)} \Big).
    \end{align*}
    Here we denote $k(\tau) := \lceil \frac{\tau}{N} \rceil$ and $n(\tau) := \tau~\text{mod}~N$ to be the block index and inner iteration index for step $\tau$;
    $\iota^{k(\tau),n(\tau)} := \tilde{O}({\Rmax} e^{2{\Rmax}} \Big(\cov^{\pi^*_{r^*}|\pi_\mix^{\tau}} \wedge \frac{\sqrt{\Complexity(\Pi)}}{\alpha}\Big) \sqrt{\frac{1}{\tau}})$, where $\pi_\mix^{\tau} := \frac{1}{\tau}\sum_{i\leq \tau} \pi^{k(i),n(i)}$ is the mixture policy up to $\tau$; $\Delta(w)$ and $\Delta_{\min}$ denote value gaps as defined in Sec.~\ref{sec:transfer_setting}.
\end{restatable}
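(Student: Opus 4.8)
The plan is to split the cumulative regret of $\TPO$ up to step $t$ into the contributions from the online-learning sub-iterations ($n(\tau) \leq \alpha N$) and the transfer sub-iterations ($\alpha N < n(\tau) \leq N$), bound each separately, and then paste the bounds together. The online part is immediate: in each block we run $\alpha N$ steps of $\AlgOnline$ with the data $\cD^{k,n}_\Online$ it has generated so far, so by the assumed no-regret property the cumulative online regret after $t$ total steps (hence at most $\alpha t$ online steps) is $\tilde{O}(\Rmax e^{2\Rmax}\sqrt{\cC(\Pi)\cdot \alpha t})$, giving $\Regret_\Online^{(t)}$ in \eqref{eq:transfer_regret_bound_1}. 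The main work is the transfer part.

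For a transfer sub-iteration $\tau$, the policy $\pi^{k(\tau),n(\tau)}$ is the $\argmax$ over $\{\pi^*_{r^w}\}_{w\in[W]}\cup\{\pi_\SELF\}$ of the estimated value $\hat V(\cdot;\cD^{\tau})$. I would first establish the regret bound \emph{per transfer step}, conditioning on the two one-sided error events of Lemmas~\ref{lem:formal_optism_val_est_error} and \ref{lem:formal_val_est_error} (which together hold w.p. $1-2\delta$). Because $\hat V(\pi^*_{r^w};\cD)$ is an upper confidence bound on $J_\beta(\pi^*_{r^w})-J_\beta(\pi_\textref)$ and $\hat V(\pi_\SELF;\cD)$ is a lower confidence bound on $J_\beta(\pi_\SELF)-J_\beta(\pi_\textref)$, a standard optimism argument shows the selected policy $\pi^\tau$ satisfies
\begin{align*}
J_\beta(\pi^*_{r^*}) - J_\beta(\pi^\tau) \leq \min\Big\{ \underbrace{\Delta_{\min} + \max_w \tilde O\big(\tfrac{e^{2\Rmax}}{\sqrt{N(w;\cD^\tau)}}\big)}_{\text{chasing the best source}},\ \underbrace{\iota^{\tau}}_{\text{from }\pi_\SELF}\Big\},
\end{align*}
since on one hand $\hat V(\pi^\tau)\ge \hat V(\pi^*_{r^{w^*}}) \ge J_\beta(\pi^*_{r^{w^*}})-J_\beta(\pi_\textref)$ for $w^*$ achieving $\Delta_{\min}$, and the gap between $\hat V(\pi^\tau)$ and the true value of $\pi^\tau$ is controlled by the width of whichever confidence interval it came from; on the other hand the $\pi_\SELF$ branch directly yields $J_\beta(\pi^*_{r^*}) - J_\beta(\pi^\tau) \le \iota^\tau$ via Lemma~\ref{lem:formal_val_est_error} and Theorem~\ref{thm:general_val_gap}. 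Summing the $\iota^\tau$ term over transfer steps (note $\iota^\tau$ already shrinks like $1/\sqrt{\tau}$) produces the $\sum \Delta_{\min}\wedge \iota^{k(\tau),n(\tau)}$ piece of \eqref{eq:transfer_regret_bound_2}.

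The remaining term — the $e^{2\Rmax}\sqrt{(1-\alpha)Wt}\ \wedge\ \sum_{w:\Delta(w)>0}\tfrac{e^{4\Rmax}}{\Delta(w)}$ piece — comes from accounting for how often a \emph{suboptimal} source policy is selected before its optimistic estimate has been deflated by enough samples. Here I would run the usual UCB elimination-style counting: whenever $\pi^*_{r^w}$ with $\Delta(w)>\Delta_{\min}$ is selected at a transfer step, its bonus $\tilde O(e^{2\Rmax}/\sqrt{N(w;\cD^\tau)})$ must exceed roughly $\Delta(w)-\Delta_{\min}$ (otherwise a better candidate would have larger $\hat V$), which can happen at most $\tilde O(e^{4\Rmax}/(\Delta(w)-\Delta_{\min})^2)$ times, and each such step costs at most $\Delta(w)$ in regret; summing over $w$ gives the $\sum_w e^{4\Rmax}/\Delta(w)$ bound (after a routine $\Delta(w)$ vs.\ $\Delta(w)-\Delta_{\min}$ reconciliation and absorbing the $w^*$ sampling). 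The alternative $e^{2\Rmax}\sqrt{(1-\alpha)Wt}$ bound is the crude worst-case estimate obtained by instead summing the per-step bonus $\tilde O(e^{2\Rmax}/\sqrt{N(w;\cD^\tau)})$ over all $(1-\alpha)t$ transfer steps and $W$ arms via $\sum_{j=1}^{m}1/\sqrt{j}=O(\sqrt m)$ and Cauchy–Schwarz; taking the minimum of the two is valid since both are simultaneously upper bounds. Finally I would union-bound the two failure events, re-index $\tau \mapsto (k(\tau),n(\tau))$, and assemble the two regret components.

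\textbf{Main obstacle.} I expect the delicate point to be the per-transfer-step bound: one must argue that the $\argmax$ over a \emph{set containing both the fixed source policies and the moving target} $\pi_\SELF$ behaves well under \emph{asymmetric} guarantees (UCB for the sources, LCB for $\pi_\SELF$). Specifically, the chained inequality $J_\beta(\pi^*_{r^*})-J_\beta(\pi^\tau) \le (\text{bonus of selected arm})$ needs care because $\pi_\SELF$ is only guaranteed a \emph{lower} bound, so if the selected policy is a source $\pi^*_{r^w}$ one must fall back on comparing $\hat V(\pi^*_{r^w})$ against $\hat V(\pi^*_{r^{w^*}})$ and separately invoke that $\pi^*_{r^{w^*}}$ already has sub-optimality $\Delta_{\min}$ — whereas if $\pi_\SELF$ is selected the bound is governed by $\iota^\tau$ and Theorem~\ref{thm:general_val_gap}. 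Keeping these cases cleanly separated, and ensuring the $N(w;\cD^\tau)$ counts (which only increment on transfer steps that pick $\pi^*_{r^w}$) are handled consistently with the $1/N(\cdot)=+\infty$ convention, is where the bookkeeping is most error-prone.
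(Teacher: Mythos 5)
Your skeleton is the same as the paper's: the online/transfer split, the asymmetric confidence bounds (UCB via Lem.~\ref{lem:formal_optism_val_est_error} for the sources, LCB via Lem.~\ref{lem:formal_val_est_error} for $\pi_\SELF$), the bound $\Delta_{\min}\wedge\iota^{\tau}$ on steps where $\pi_\SELF$ is chosen, gap-based counting of suboptimal source selections, a crude $\sqrt{Wt}$ alternative, and a final union bound. The genuine gap is in the step you wave off as ``a routine $\Delta(w)$ vs.\ $\Delta(w)-\Delta_{\min}$ reconciliation and absorbing the $w^*$ sampling.'' Your elimination count $\tilde O\big(e^{4\Rmax}/(\Delta(w)-\Delta_{\min})^2\big)$ is vacuous for every source whose gap is close to (or equal to) $\Delta_{\min}$ --- in particular for $w^*$ itself when $\Delta_{\min}>0$ --- yet each such selection still costs $\approx\Delta_{\min}$, and comparisons among the source UCBs alone can never rule these arms out, so naively they can be picked at every transfer step and contribute $\Delta_{\min}(1-\alpha)t$, which neither branch of your second term controls. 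The paper's fix is exactly why the first term in \eqref{eq:transfer_regret_bound_2} is summed over \emph{all} transfer steps rather than only the self-transfer ones: sources are split into $\Delta(w)>2\Delta_{\min}$ (where your counting works, since $\Delta(w)-\Delta_{\min}\ge\Delta(w)/2$) and $\Delta(w)\le 2\Delta_{\min}$, and for the latter one conditions on the event $\{2\iota^{k,n}<\Delta_{\min}\}$: if $\iota^{k,n}\ge\Delta_{\min}/2$ the step's cost satisfies $\Delta(w)\le 2\Delta_{\min}\le 4(\Delta_{\min}\wedge\iota^{k,n})$ and is charged to the first term, while if $\iota^{k,n}<\Delta_{\min}/2$ the selected source's UCB must beat $\hat V(\pi_\SELF;\cD)\ge J_\beta(\pi^*_{r^*})-J_\beta(\pi_\textref)-\iota^{k,n}$, which forces $N(w;\cD)\lesssim e^{4\Rmax}/\Delta_{\min}^2\lesssim e^{4\Rmax}/\Delta(w)^2$. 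Without this comparison against the self-transfer LCB, your argument cannot recover the stated bound.

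Two smaller inaccuracies point in the same direction. First, your displayed per-step inequality is not literally valid when a source is selected: the $\iota^{\tau}$ branch alone need not bound that step's regret; what one actually gets (by comparing the selected arm's UCB against both $\hat V(\pi^*_{r^{w^*}})$ and $\hat V(\pi_\SELF)$) is $\Delta(w)\le(\Delta_{\min}\wedge\iota^{\tau})+\mathrm{bonus}(w)$, where the bonus belongs to the selected arm. Second, deriving the $e^{2\Rmax}\sqrt{(1-\alpha)Wt}$ branch by ``summing the per-step bonus'' drags along the $\Delta_{\min}\wedge\iota$ part of every such step, so it only closes once those contributions are routed into the first term; the paper instead obtains this branch by a per-arm dichotomy between $e^{4\Rmax}/\Delta(w)$ and $e^{2\Rmax}\sqrt{\text{(number of pulls of }w)}$, followed by Cauchy--Schwarz over $w$.
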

We decompose the total regret into two parts depending on their origins. $\Regret_\Online^{(t)}$ comes from the regret by running the online algorithm $\AlgOnline$. It is weighted by $\alpha$ since we only allocate $\alpha$-proportion of the samples for $\AlgOnline$.
$\Regret_\Transfer^{(t)}$ represents the regret from transfer policies.
The first term in Eq.~\eqref{eq:transfer_regret_bound_2} reflects the benefits of utilizing transfer policy over online learning.
Here $\Delta_{\min}$ is contributed by source reward models $\{r^w\}_{w\in[W]}$, and the term $\iota^{k(\tau),n(\tau)}$ is due to the ``self-transfer policy'' $\pi_\SELF$, as we derived in Lem.~\ref{lem:formal_val_est_error}.
The second term in Eq.~\eqref{eq:transfer_regret_bound_2} results from the imperfection of source reward models: without prior knowledge on their quality, additional cost has to be paid during exploration.

Next, we elaborate the benefits of transfer learning by taking a closer look at $\Regret_\Transfer^{(t)}$ in Eq.~\eqref{eq:transfer_regret_bound_2}.
Note that the lower $\Regret_\Transfer^{(t)}$ is, the faster $\hat{\pi}^*_{r^*}$ in $\TPO$ converges to $\pi^*_{r^*}$.
When $\Delta_{\min} = 0$, i.e. $r^*$ is realizable in $\{r^w\}_{w\in[W]}$, we have $\Regret_\Transfer^{(t)} = \tilde{O}(\sqrt{Wt} \wedge \sum_{w:\Delta(w)>0} \frac{1}{\Delta(w)})$ and the benefit of transfer learning is clear.
Thus, in the following, we only focus on the case $\Delta_{\min} > 0$. We separately consider two scenarios, according to the relationship between $t$ and $\Delta_{\min}$. For clarity, we will omit the constant terms ${\Rmax}$ and $e^{{\Rmax}}$.
\textbf{Stage 1: $t<\frac{W^2}{\Delta^2_{\min}}$}~
This corresponds to the early learning stage, when $t$ is relatively small.
In this case, Thm.~\ref{thm:regret_guarantees} implies the following regret bound:
\begin{align*}
    \Regret_\Transfer^{(t)} = \tilde{O}(\sqrt{1-\alpha} (\sqrt{Wt} + \Delta_{\min}t)) = \tilde{O}(W\sqrt{t}),\numberthis\label{eq:case_1}
\end{align*}
which can be further improved to $\tilde{O}(\sqrt{Wt})$ if $t < \frac{W}{\Delta_{\min}^2}$.
This suggests at the earlier stage, the benefits of transfer is contributed mostly by the source reward models $\{r^w\}_{w\in[W]}$.
In general, we can expect the number of source tasks $W$ much lower than the policy class complexity measure $\Complexity(\Pi)$.
Therefore, Eq.~\eqref{eq:case_1} implies a significant improvement over the typical online learning regret bound without transfer.
\textbf{Stage 2: $\Tt\geq \frac{W^2}{\Delta^2_{\min}}$}~
In this case, the second term in Eq.~\eqref{eq:transfer_regret_bound_2} is controlled by $O(\sum_{w\in[W]}\frac{1}{\Delta(w)}) = O(\frac{W}{\Delta_{\min}}) = O(\sqrt{\Tt})$, and we have the following regret bound:
\begin{align}
    \textstyle \Regret_\Transfer^{(t)} = \tilde{O}\Big(\sqrt{\frac{\Complexity(\Pi)\Tt}{\alpha^2} \wedge \sum_{\tau\leq\Tt} (\cov^{\pi^*_{r^*}|\pi_\mix^{\tau}})^2}\Big).\numberthis\label{eq:case_2}
\end{align}
At the first glance, the RHS is controlled by $\tilde{O}(\sqrt{\Complexity(\Pi)\Tt}/\alpha)$, which implies transfer learning at most suffer a factor of $1/\alpha$ larger regrets than no transfer.
However, in fact, the term $\sqrt{\sum_{\tau\leq\Tt} (\cov^{\pi^*_{r^*}|\pi_\mix^{\tau}})^2}$ yields a much tighter bound, which only grows as $\tilde{O}(\sqrt{\Tt})$ after finite time, and is independent of $\Complexity(\Pi)$.
To see this, by Lem.~\ref{lem:coverage_and_value_gap} and the concavity of $J_\beta(\cdot)$, we have $\cov^{\pi^*_{r^*}|\pi_\mix^{t}} = 1 + \tilde{O}({\kappa(e^{\frac{2{\Rmax}}{\beta}}) }\cdot \frac{\Regret_\Online^{(t)} + \Regret_\Transfer^{(t)}}{\beta t})$.
Note that Eq.~\eqref{eq:case_1} and~\eqref{eq:case_2} already indicate a regret upper bound $\Regret_\Transfer^{(t)}=\tilde{O}(\Coeff\sqrt{t})$, where $\Coeff$ is a short note of a coefficient depending on $\alpha,~W,~\{\Delta(w)\}_{w\in[W]}$ and $\Complexity(\Pi)$, but not $t$.
This implies $\cov^{\pi^*_{r^*}|\pi_\mix^{t}}$ converges to 1 at the rate of $O(1/\sqrt{t})$, and $\Regret_\Transfer^{(t)} = \tilde{O}(\sqrt{t})$ after finite time.

\begin{algorithm*}[t]
    \textbf{Input}: $K$, $N$ and $\{r^w\}_{w\in[W]}$; \textbf{Initialize}: $\pi^1_\base \gets \pi_\textref$; \\

    For all $(k,n) \in [K]\times[N]$, and all $w\in[W]$,
    $\cD^{k,n} := \cup_{j=1}^{n-1}\{(s^{k,j}, a^{k,j},\ta^{k,j}, y^{k,j})\}$, $N^{k,n}(\cdot) := \sum_{j<n}\mathbb{I}[\cdot = \pi^{k,j}]$, and $\hat{\mP}_{r^*}^{k,n}(\cdot \succ \pi^k_\base) := \frac{1}{N^{k,n}(\cdot)} \sum_{j<n}\mathbb{I}[\cdot = \pi^{k,j}]y^{k,j}$. \\
    
    \For{$k=1,2...,K$}{
        \For{$n=1,... N$}{
            $\forall w\in[W],~\hat{\text{WR}}^{\pi^*_{r^w}} \gets \hat{\mP}_{r^*}^{k,n}(\pi^*_{r^w} \succ \pi^k_\base) + c \sqrt{\log\frac{1}{\delta}/N^{k,n}(\pi^*_{r^w})}$; \\
            $\hat{\text{WR}}^{\pi^k_\base} \gets \mP_{r^*}(\pi^k_\base\succ\pi^k_\base) = 0.5.$ \blue{// $\hat{\text{WR}}^{\pi^k_\base}$ can be treated as a hyperparameter taking value other than 0.5.} \\
            $\pi^{k,n} \gets \argmax_{\pi \in \{\pi^*_{r^w}\}_{w=1}^W \cup \{\pi_\base^k\}} \hat{\text{WR}}^\pi$. \label{line:UCB} \\ 
            Collect online data $(s^{k,n}, a^{k,n},\ta^{k,n}, y^{k,n}) \sim \rho\times\pi^{k,n}\times\pi^k_\Online\times\mP_{r^*}(\cdot|\cdot,\cdot,\cdot)$. \\
        }
        $\pi^{k+1}_{\base} \gets \text{Alg}_{\text{PO}}(\pi^{k}_{\base},\cD^{k,N+1})$; \\
    }
    \Return $\pi^{K+1}_{\base}$.
    \caption{Empirical $\TPO$}\label{alg:empirical}
\end{algorithm*}

Although the above provable benefits in Stage 2 result primarily from ``self-transfer learning'', high-quality source reward models also play an important role here.
According to Eq.~\eqref{eq:case_1}, small $\Delta_{\min}$ can lead to small $\Coeff$ and therefore, accelerate the convergence of $\cov^{\pi^*_{r^*}|\pi_\mix^{t}}$ towards $1$.

\textbf{Remarks on choice of $\alpha$}~
We treat $\alpha$ as a hyperparameter. Without prior knowledge, a simple choice is $\alpha = e^{-\frac{R}{\beta}}$ with guarantee in Coro.~\ref{coro:total_regret}.
Under prior beliefs that high-quality source reward models are available, we may prefer smaller $\alpha$.
Besides, due to the self-transfer learning, it is wise to gradually decay $\alpha$ to 0 as the iteration number grows.

\textbf{Improved regret bound for standard online RLHF as a implication}~Although our focus is transfer learning, the standard online RLHF setting can be recovered when no source tasks are present, i.e., $W=0$.
As stated below, TPO achieves $\tilde{O}(\sqrt{T})$ regret over time by purely utilizing ``self-transfer learning'', thereby strictly improving existing results \citep{xiong2024iterative, xie2024exploratory,cen2024value,zhang2024self}\footnote{While omitted in our result, the dependence on the log-covering number (e.g., $\log|\Pi|$) matches with those previous works.}.
\begin{corollary}\label{coro:sqrtT_reg}
    When $W = 0$, under the same condition of Thm.~\ref{thm:regret_guarantees}, $\TPO$ reduces to a standard online RLHF method with $\tilde{O}(\sqrt{T})$ regret after finite time.
\end{corollary}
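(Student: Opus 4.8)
The plan is to derive Corollary~\ref{coro:sqrtT_reg} as a direct specialization of Theorem~\ref{thm:regret_guarantees} to the degenerate case $W=0$, and then track what happens to each term in the regret decomposition. First I would observe that when there are no source reward models, Algorithm~\ref{alg:transfer_policy_computing} has an empty candidate set $\{\pi^*_{r^w}\}_{w\in[W]}$, so the transfer-policy selection always returns the self-transfer policy $\pi_\SELF$ computed by $\RPO$. Consequently $\Delta_{\min}$ is vacuous (or equivalently $\to\infty$), so in Eq.~\eqref{eq:transfer_regret_bound_2} the term $\Delta_{\min}\wedge\iota^{k(\tau),n(\tau)}$ collapses to $\iota^{k(\tau),n(\tau)}$, and the source-imperfection term $e^{2R}\sqrt{(1-\alpha)Wt}\wedge\sum_{w:\Delta(w)>0}e^{4R}/\Delta(w)$ vanishes entirely (empty sum / $W=0$). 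Hence $\Regret_\Transfer^{(t)} = \tilde{O}\big(\sum_{\tau\le t,\ \alpha N < n(\tau)\le N} \iota^{k(\tau),n(\tau)}\big)$.

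Next I would carry out the same bootstrapping argument sketched in Stage~2 of Section~\ref{sec:alg_main_results}, but now with $W=0$ so there is no $\Delta_{\min}$-dependent coefficient. Combining $\Regret_\Online^{(t)} = \tilde{O}(Re^{2R}\sqrt{\alpha\Complexity(\Pi)t})$ with the crude bound $\iota^{k(\tau),n(\tau)} = \tilde{O}(Re^{2R}(\sqrt{\Complexity(\Pi)}/\alpha)\sqrt{1/\tau})$ gives an a priori total-regret bound $\Regret^{(t)} = \tilde{O}(\Coeff\sqrt{t})$ with $\Coeff$ depending on $\alpha,\Complexity(\Pi),R,\beta$ but not on $t$. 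Feeding this into Lemma~\ref{lem:coverage_and_value_gap} together with the concavity of $J_\beta(\cdot)$ (so that $J_\beta(\pi_\mix^t)\ge \frac1t\sum_{\tau\le t} J_\beta(\pi^{k(\tau),n(\tau)})$) yields
\begin{align*}
    \cov^{\pi^*_{r^*}|\pi_\mix^{t}} \le 1 + \kappa(e^{\frac{2R}{\beta}})\cdot \frac{\Regret^{(t)}}{\beta t} = 1 + \tilde{O}\Big(\frac{\Coeff}{\beta\sqrt{t}}\Big).
\end{align*}
So for $t$ larger than some finite threshold $t_0$ (of order $\Coeff^2/\beta^2$), $\cov^{\pi^*_{r^*}|\pi_\mix^t} \le 2$, which makes $\iota^{k(\tau),n(\tau)} = \tilde{O}(Re^{2R}\sqrt{1/\tau})$ for $\tau\ge t_0$; summing over $\tau$ contributes only $\tilde{O}(Re^{2R}\sqrt{t})$, while the at most $t_0$ early terms contribute a constant (independent of $T$). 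Adding $\Regret_\Online^{(t)} = \tilde{O}(\sqrt{t})$ gives the claimed $\tilde{O}(\sqrt{T})$ bound after finite time. Finally I would note that the returned policy $\hat\pi^*_{r^*}$ in Algorithm~\ref{alg:main_algorithm} is exactly the $\RPO$-distilled policy, so the online-to-offline conversion of Theorem~\ref{thm:general_val_gap} applies and the output inherits the $\tilde{O}(T^{-1/2})$ convergence rate, confirming TPO reduces to a bona fide standard online RLHF method.

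I expect the main obstacle to be making the ``after finite time'' statement precise and self-consistent: the coverability bound depends on the regret, and the regret bound depends on the coverability, so one must break the circularity carefully. The clean way is the two-pass argument above—first use the loose $1/\alpha$-type bound (which holds unconditionally) to get an $O(\Coeff\sqrt t)$ regret, then plug that back to show $\cov^{\pi^*_{r^*}|\pi_\mix^t}\to 1$, and only then conclude the sharpened $\tilde{O}(\sqrt t)$ rate for $t\ge t_0$. A secondary subtlety is bookkeeping: confirming that with $W=0$ the subroutine in Algorithm~\ref{alg:transfer_policy_computing} is well-defined (the $\argmax$ is over the singleton $\{\pi_\SELF\}$) and that Lemma~\ref{lem:formal_val_est_error} still holds verbatim, since its proof never actually uses $W\ge1$. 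Given all of this, the corollary is essentially a corollary in the literal sense—no new technical ingredient beyond Theorem~\ref{thm:regret_guarantees} and Lemma~\ref{lem:coverage_and_value_gap}—and the writeup should mostly consist of specializing indices and invoking the finite-time threshold.
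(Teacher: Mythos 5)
Your proposal is correct and follows essentially the same route as the paper, which establishes this corollary via the Stage-2 bootstrapping argument of Sec.~4.2 and Remark~\ref{coro:total_regret_formal}: with $W=0$ the transfer regret reduces to the $\iota^{k,n}$ terms, a crude $\sqrt{\Complexity(\Pi)}/\alpha$ bound gives an a priori $\tilde{O}(\Coeff\sqrt{t})$ regret, and feeding this back through Lem.~\ref{lem:coverage_and_value_gap} (with concavity of $J_\beta$) shows $\cov^{\pi^*_{r^*}|\pi_\mix^t}\to 1$, yielding $\tilde{O}(\sqrt{T})$ growth after a finite threshold. The only cosmetic difference is that the paper additionally fixes $\alpha$ (e.g.\ $\alpha=e^{-R/\beta}$) so the online component's prefactor is also free of $\Complexity(\Pi)$, whereas you absorb it into a $t$-independent constant, which still suffices for the corollary as stated.
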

%
%
%
%
%
%
%
%
%

\iffalse
%
%
Our discussion for Case 1 enlightens the benefits when the source reward models have high quality, i.e. $\Delta_{\min}$ is small.
Note that the sub-optimality of $\pi^t_{\mix}$ depends on the accumulative regret up to step $t$. A lower $\Delta_{\min}$ implies a lower $\Coeff$ in the above analysis, which implies $\cov^{\pi^*_{r^*}|\pi_\mix^{t}}$ can have a faster convergence to 1.
%
%

%
%
\fi

%
%
%

%
%
%
%
%
%
%
%
%
%
%
%
%
%
%

\section{From Theory to an Empirical Algorithm}\label{sec:empirical_alg}
In terms of computational overheads, $\TPO$ requires solving multiple minimax optimization problems, which restricts its applicability to fine-tune LLMs in practice.
To address this, adhering to the design principles of $\TPO$, we introduce a more computationally efficient alternative in Alg.~\ref{alg:empirical}.

\textbf{Key Insight: Estimating Win Rates instead of Policy Values}~
As discussed in Sec.~\ref{sec:main_theory}, several optimization steps are designed to estimate policy values used for transfer policy selection, because they help to identify the policies' coverability for optimal policy (i.e. $\cov^{\pi^*_{r^*}|\cdot}$).
The key insight in our empirical algorithm design is to \emph{find a more accessible indicator to infer $\cov^{\pi^*_{r^*}|\cdot}$}.
This leads us to the policy win rates, i.e., the probability that human prefer the generation by one policy over another. Formally, given two policies $\pi, \tpi$, the win rate of $\tpi$ over $\pi$ is defined by:
$
    \mP_{r^*}(\tpi \succ \pi) := \EE_{s\sim\rho,a\sim\tpi,a'\sim\pi}[\mP_{r^*}(y=1|s,a,a')].
$

Win rates between two policies can be unbiasedly estimated by querying human preferences with their generated responses.
Moreover, win rates can be used to construct a lower bound for $\cov^{\pi^*_{r^*}|\cdot}$, as stated in Lem.~\ref{lem:BT_LB_coverage} below.
\begin{lemma}\label{lem:BT_LB_coverage}
    Under BT-model\footnote{
        Lem.~\ref{lem:BT_LB_coverage} can be generalized beyond BT-model.
        Besides, it is possible to construct a lower bound involving $\mP_{r^*}(\bpi\succ\pi)$ instead.
        See Lem.~\ref{lem:LB_coverage_formal} and Remark~\ref{remark:LB_coverage} in Appx.~\ref{appx:win_rate_and_coverage} for more details.
        }, for any $\pi$:
    $\displaystyle\cov^{\pi^*_{r^*}|\pi}\geq \!$$
    \displaystyle\max_{\gamma > 0, \bpi} $$({\sqrt{(\gamma\! +\! 2\mP_{r^*}(\pi\!\succ \!\bpi))  \log \frac{1+\gamma}{\gamma}} + \sqrt{\frac{J_\beta(\pi^*_{r^*}) - J_\beta(\bar{\pi})}{2\beta}}})^{-1}.$
\end{lemma}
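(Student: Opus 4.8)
The plan is to lower‑bound $\cov^{\pi^*_{r^*}|\pi}$ by a Cauchy--Schwarz / duality argument against a test function adapted to $\bpi$, feeding in the win rate and the value gap through two decoupled reductions. The starting observation is that, writing $\EE_{\rho,\pi^*_{r^*}}[h]=\EE_{\rho,\pi}[\tfrac{\pi^*_{r^*}}{\pi}h]$ and applying Cauchy--Schwarz (and $\EE_{\rho,\pi}[(\pi^*_{r^*}/\pi)^2]=\cov^{\pi^*_{r^*}|\pi}$), for every $h:\cS\times\cA\to[0,1]$,
\[
\cov^{\pi^*_{r^*}|\pi}\;\ge\;\frac{\big(\EE_{s\sim\rho,\,a\sim\pi^*_{r^*}}[h(s,a)]\big)^2}{\EE_{s\sim\rho,\,a\sim\pi}[h(s,a)^2]}\;\ge\;\frac{\big(\EE_{s\sim\rho,\,a\sim\pi^*_{r^*}}[h(s,a)]\big)^2}{\EE_{s\sim\rho,\,a\sim\pi}[h(s,a)]}.
\]
I would take $h(s,a)$ to be a soft version of the indicator that $a$ is no worse than ``the bottom‑$\gamma$ fraction'' of a draw from $\bpi(\cdot\mid s)$: a fresh draw from $\bpi$ then satisfies $h\approx 1$ up to a $\gamma$‑level slack, while any $(s,a)$ on which $h$ is non‑negligible has pointwise win probability $\EE_{a'\sim\bpi(\cdot\mid s)}[\sigma(r^*(s,a)-r^*(s,a'))]$ bounded below on a $\gamma$‑scale. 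The aim is then: (a) the numerator is close to $1$; (b) the denominator is controlled by $\mP_{r^*}(\pi\succ\bpi)$.

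For (a), convert the value gap into total variation: from the closed form $\pi^*_r(a\mid s)\propto\pi_\textref(a\mid s)e^{r(s,a)/\beta}$ one gets the exact identity $J_\beta(\pi^*_{r^*})-J_\beta(\bpi)=\beta\,\EE_{s\sim\rho}[\KL(\bpi(\cdot\mid s)\,\|\,\pi^*_{r^*}(\cdot\mid s))]$, whence by Pinsker and Jensen $\EE_{s\sim\rho}[\TV(\bpi(\cdot\mid s),\pi^*_{r^*}(\cdot\mid s))]\le\sqrt{(J_\beta(\pi^*_{r^*})-J_\beta(\bpi))/(2\beta)}=\sqrt B$. Since $h\in[0,1]$ this transfers expectations, $\EE_{\rho,\pi^*_{r^*}}[h]\ge\EE_{\rho,\bpi}[h]-\sqrt B$, and $\EE_{\rho,\bpi}[h]\ge 1-O(\gamma)$ by construction, so the numerator is at least $(1-O(\gamma)-\sqrt B)^2$. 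For (b), use $\sigma(x)\ge\tfrac12\mathbf{1}[x\ge 0]$ --- the source of the factor ``$2$'' in $2\mP_{r^*}(\pi\succ\bpi)$ --- to show $\EE_{\rho,\pi}[h]\le(\gamma+2\mP_{r^*}(\pi\succ\bpi))\log\tfrac{1+\gamma}{\gamma}$: the additive $\gamma$ is the quantile/threshold slack, and the factor $\log\tfrac{1+\gamma}{\gamma}$ is the price of calibrating the soft threshold at level $\gamma$; equivalently, $\log\tfrac{1+\gamma}{\gamma}$ is the Donsker--Varadhan parameter one would use when instead running the argument through $\log\cov^{\pi^*_{r^*}|\pi}\ge\KL(\rho\otimes\pi^*_{r^*}\,\|\,\rho\otimes\pi)$ together with the exponential moment, under $\pi$, of the pointwise win probability. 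Plugging (a) and (b) into the display, rearranging into the form $(\sqrt A+\sqrt B)^{-1}$, and taking the supremum over $\gamma>0$ and $\bpi$ gives the claim.

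The hard part is (b): $\mP_{r^*}(\pi\succ\bpi)$ only certifies that $\pi$ rarely produces actions \emph{competitive} with $\bpi$ in the soft Bradley--Terry sense, and turning this into a bound on $\EE_{\rho,\pi}[h]$ that is simultaneously tight enough to produce exactly the stated $(\gamma+2\mP_{r^*}(\pi\succ\bpi))\log\tfrac{1+\gamma}{\gamma}$ shape \emph{and} compatible with keeping the numerator near $1$ forces a careful joint choice of test function and threshold level; a crude hard‑indicator choice only yields $\EE_{\rho,\pi}[h]\lesssim\mP_{r^*}(\pi\succ\bpi)/\gamma$, so the stated form should be read as the robust, $\gamma$‑uniform version of this estimate. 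A secondary subtlety is that $\bpi$ is only near‑optimal, so every comparison routed through $\bpi$ must absorb the $\TV\le\sqrt B$ slack \emph{additively}; this is precisely why $\sqrt B$ appears as a separate summand and the bound has the two‑term shape $(\sqrt A+\sqrt B)^{-1}$, with the win‑rate and value‑gap reductions kept fully independent. Finally, the footnoted generalizations follow because only monotonicity of $\sigma$ and the symmetry $\sigma(x)+\sigma(-x)=1$ are used; replacing $\mP_{r^*}(\pi\succ\bpi)$ by $\mP_{r^*}(\bpi\succ\pi)=1-\mP_{r^*}(\pi\succ\bpi)$ yields the alternative statement.
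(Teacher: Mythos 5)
Your overall strategy is genuinely different from the paper's, but as written it has two gaps that prevent it from establishing the stated bound. The paper never uses a variational/test-function argument: it writes $\cov^{\pi^*_{r^*}|\pi}-1$ as a $\chi^2$-divergence, lower-bounds it by $e^{\KL(\pi^*_{r^*}\|\pi)}-1$ and then by $\tfrac{1}{2(1-\TV(\pi^*_{r^*}\|\pi))}-1$ via Bretagnolle--Huber, splits $1-\TV(\pi^*_{r^*}\|\pi)\leq\bigl(1-\TV(\bpi\|\pi)\bigr)+\TV(\bpi\|\pi^*_{r^*})$ by the reverse triangle inequality, controls the second term by Pinsker plus the identity $J_\beta(\pi^*_{r^*})-J_\beta(\bpi)=\beta\,\EE_\rho[\KL(\bpi\|\pi^*_{r^*})]$ (this is where the additive $\sqrt{B}$ in the denominator comes from), and controls the first term by $1-\TV\leq 1-\mH^2=\sum_a\sqrt{\bpi(a|s)\pi(a|s)}$ followed by a sorted-CDF Cauchy--Schwarz argument (Lem.~\ref{lem:diff_and_preference}) that produces exactly $\sqrt{(\gamma+2\mP_{r^*}(\pi\succ\bpi))\log\tfrac{1+\gamma}{\gamma}}$. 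The $\sqrt{A}$ therefore enters because the object being bounded is a Bhattacharyya coefficient, not because of any threshold calibration.

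The first gap in your proposal is the one you flag yourself: step (b) is never carried out. You concede that the natural hard-indicator choice only gives $\EE_{\rho,\pi}[h]\lesssim \mP_{r^*}(\pi\succ\bpi)/\gamma$, and the ``robust, $\gamma$-uniform version'' yielding precisely $(\gamma+2\mP_{r^*}(\pi\succ\bpi))\log\tfrac{1+\gamma}{\gamma}$ is asserted, not constructed; this is the crux of the lemma. The second gap is the final algebra: even granting (a) and (b), the bound you obtain is $\cov^{\pi^*_{r^*}|\pi}\geq\bigl(1-O(\gamma)-\sqrt{B}\bigr)^2/A$ with $A=(\gamma+2\mP_{r^*}(\pi\succ\bpi))\log\tfrac{1+\gamma}{\gamma}$ and $B=\tfrac{J_\beta(\pi^*_{r^*})-J_\beta(\bpi)}{2\beta}$, and this does not ``rearrange'' into $(\sqrt{A}+\sqrt{B})^{-1}$: the two expressions have incompatible shapes. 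Your form is multiplicative in $(1-\sqrt{B})$ and becomes vacuous as soon as $\sqrt{B}\geq 1$ (for every choice of $\bpi$ with a nonnegligible value gap), whereas the target bound remains a valid positive lower bound there; and your form scales as $1/A$ in the win-rate term rather than $1/\sqrt{A}$, so neither dominates the other uniformly. To recover the stated additive-denominator shape you would have to route the value-gap slack through the denominator (as the paper does via the reverse triangle inequality on total variation) rather than subtract it from the numerator. If instead your goal is the (different, and in some regimes stronger) bound $\cov^{\pi^*_{r^*}|\pi}\gtrsim \gamma(1-\gamma-\sqrt{B})^2/(2\mP_{r^*}(\pi\succ\bpi))$, the hard-indicator argument does give that, but it is not the lemma as stated.
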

Note that we may not identify the policy with the best coverage for $\pi^*_{r^*}$ through the lower bound above.
However, it still provides useful guidance for practice: we can filter out policies yielding high lower bound.
In Lem.~\ref{lem:BT_LB_coverage}, for any fixed $\gamma$ and comparator $\bpi$, the lower bound for $\cov^{\pi^*_{r^*}|\pi}$ increases as $\mP_{r^*}(\pi \succ \bpi)$ decay to 0, suggesting prioritizing transferring from policies with high win rates.

The key question now is how to choose the comparator $\bpi$. According to Lem.~\ref{lem:BT_LB_coverage}, ideally, the comparator should be close to $\pi^*_{r^*}$, so that $J_\beta(\pi^*_{r^*}) - J_\beta(\bar{\pi})$ becomes negligible, allowing the win rate term to dominate the lower bound.
Since we do not know $\pi^*_{r^*}$ in advance, empirically, we can choose the learning policy as the comparator, which is optimized and progressively converges to $\pi^*_{r^*}$.

\textbf{From Insights to Practice}~
Next, we walk through empirical $\TPO$ in Alg.~\ref{alg:empirical} and explain how we integrate these insights into the algorithm design.
Alg.~\ref{alg:empirical} utilizes an iterative online learning framework, which repeatedly collects online data and optimizes the policy.
We start by initializing the online learning policy $\pi^1_\base$ with the reference policy $\pi_\textref$.
For computational efficiency, in each iteration $k$, we avoid separately computing online exploration policies and self-transfer learning policies as done in $\TPO$. Instead, we only compute one policy $\pi^k_\base$ (updated from $\pi^{k-1}_\base$) by $\text{Alg}_{\text{PO}}$. Here $\text{Alg}_{\text{PO}}$ is a placeholder for an arbitrary \textbf{P}olicy \textbf{O}ptimization algorithm, and we do not restrict the concrete choice.
Such a design increases the modularity of our empirical TPO, making it possible to combine with various policy optimization methods and enhance their performance.
For example, $\text{Alg}_{\text{PO}}$ may be instantiated by DPO \citep{rafailov2024direct}, resulting in a transfer learning framework built on iterative-DPO  \citep{xiong2024iterative, yuan2024self}.
Besides, one may consider other advanced (online) methods, such as XPO \citep{xie2024exploratory}, IPO \citep{azar2024general}, etc.

As the core ingredients of our empirical TPO, during data collection, the algorithm selects the policy $\pi^{k,n} \in \{\pi^*_{r^*}\}_{w=1}^W \cup \{\pi^k_\base\}$ with the highest win rate when competing against $\pi^k_\base$.
Intuitively, we encourage transfer learning if $\{\pi^*_{r^*}\}_{w=1}^W$ includes high-quality candidates; otherwise, the algorithm conducts standard iterative policy optimization with $\text{Alg}_{\text{PO}}$ by default.
This strategy also aligns with the heuristic principle: \textbf{\emph{learn from an expert until surpassing it}}.
Lastly, since the win rates are unknown in advance, the selection process is formulated as a multi-armed bandit problem. We employ a UCB subroutine (line~\ref{line:UCB}) to balance the exploration and exploitation during the win rates estimation.

\section{Experiments}\label{sec:experiment}
In this section, we evaluate Alg.~\ref{alg:empirical} on the summarization task using the XSum dataset \citep{Narayan2018DontGM}.
We consider T5 series models \citep{raffel2020exploring} and choose T5-small (80M) as the base model for fine-tuning.
Human reward $r^*$ is simulated by the reward model \citep{dong2405rlhf} distilled from Llama3-8B \citep{dubey2024llama}.
For the source rewards in transfer learning, we consider a collection of imperfect reward models and LLM policies, including, (a) ROUGE-Lsum score \citep{lin2004rouge}, (b) BERTScore \citep{zhang2019bertscore}, (c) T5-base (250M), (d) T5-large (770M).
As illustrated in Sec.~\ref{sec:transfer_setting}, for LLM policies (c) and (d), we treat their log-probability predictions on the given prompt $s$ and response $a$ as the reward scores.

For $\text{Alg}_{\text{PO}}$, we consider three instantiations: DPO~\citep{rafailov2024direct}, IPO~\citep{azar2024general} and XPO~\citep{xie2024exploratory}.
To save space, we present and interpret the results with DPO as the choice below and defer other experiment results and also the concrete setups to Appx.~\ref{appx:experiment}.

\textbf{Experiment Results and Discussion}~
We run Alg.~\ref{alg:empirical} for $K=3$ iterations and compare its performance with three baselines: (I) vanilla iterative-$\DPO$ without transfer learning (i.e., setting $\text{Alg}_{\text{PO}}$ = DPO and $W = 0$); (II) purely exploiting the worst source reward---ROUGE score; (III) purely exploiting the best source reward---T5-large.
Concretely, baseline (I) removes the transfer learning component in Alg.~\ref{alg:empirical} by assigning $\pi^{k,n} = \pi_\base^k$ for all $n\in[N]$.
For baselines (II) and (III), the worst (ROUGE-LSum) and best (T5-Large) reward models from the candidates (a)-(d) are selected, and pure transfer learning is then performed using the responses recommended by the chosen reward models, i.e., $\pi^{k,n} = \pi^*_{r^w}$ in Alg.~\ref{alg:empirical} for the selected $r^w$.
Here the worst and best reward models are selected based on the final policy value when aligning with the given reward model.

Table.~\ref{tab:experiment} reports the win rates of the policies learned by Alg.~\ref{alg:empirical} competing with the three baselines.
As shown in Column 1, comparing with normal online learning, our transfer strategy demonstrates clear advantages.
Furthermore, Column 2 and 3 suggest that, without prior knowledge of source tasks quality, our method avoids being misled by low-quality tasks and achieves competitive performance compared to exploiting the best reward candidate.

Notably, as suggested by the additional results in Appx.~\ref{appx:experiment}, $\pi^k_\base$ improves over time and $\mP_{r^*}(\pi^*_{r^w} \succ \pi^k_\base)$ for any $w\in[W]$ continuously decreases.
In iteration 3, our empirical TPO automatically switches back to online learning and avoids being restricted by the sub-optimality of source reward models. In the end, it results in higher win rates than purely exploiting the best source reward model T5-Large over 3 iterations.

\begin{table}[t]
    \centering
    \begin{tabular}{cccc}
        \hline
                & \makecell{Without \\ Transfer} & \makecell{Purely Exploit \\ ROUGE-Lsum} & \makecell{Purely Exploit \\ T5-Large} \\
                \hline
         Iter 1 &  $52.1\pm1.2$ & $53.1\pm1.1$ & $49.5\pm0.9$\\
         Iter 2 &  $53.3\pm1.6$ & $54.5\pm1.3$ & $49.1\pm0.4$\\
         Iter 3 &  $54.0\pm1.2$ & $53.3\pm1.5$ & $50.6\pm0.3$\\\hline
    \end{tabular}
    \caption{Win rates (\%) of the policies trained by empirical $\TPO$ (Alg.~\ref{alg:empirical}) are competed with 3 baselines, presented across 3 columns. {Baseline (I)}: without transfer, i.e., iterative-$\DPO$. {Baseline (II)}:  purely utilizing ROUGE-LSum (the lowest-quality source task) in transfer learning. {Baseline (III)}: purely utilizing T5-Large (the highest-quality source task) in transfer learning. Results are averaged with 3 random seeds and 95\% confidence levels are reported.
    }
    \label{tab:experiment}
\end{table}

\section{Conclusion}
This paper studies reward transfer in the context of online RLHF.
We contribute $\TPO$, a provable and efficient transfer learning algorithm that leverages the structure induced by the KL regularizer.
Based on that, we further develop a UCB-based empirical alternative and evaluate its effectiveness through LLM experiments.
Several promising directions remain for future exploration.
Firstly, an interesting avenue is to develop transfer learning strategies beyond RLHF setting, for example, the Nash Learning from Human Feedback setting.
Secondly, while we focus on policy-level transfer, a finer-grained prompt-wise knowledge transfer may be possible, which allows transfer from different policies in different states.
Thirdly, due to resource limitations, we leave the examination of our methods in fine-tuning much larger-scale language models to the future work.

\newpage
\section*{Acknowledgement}
The work is supported by ETH research grant and Swiss National Science Foundation (SNSF) Project Funding No. 200021-207343 and SNSF Starting Grant.

\section*{Impact Statement}
This paper presents work whose goal is to advance the field of 
Machine Learning. There are many potential societal consequences 
of our work, none which we feel must be specifically highlighted here.

\section*{Reproducibility Statement}
The code of all the experiments and the running instructions can be found in \url{https://github.com/jiaweihhuang/RLHF_RewardTransfer}.

\bibliography{references}
\bibliographystyle{icml2025}

\newpage
\appendix
\onecolumn

\section*{Outline of the Appendix}
\begin{itemize}
    \item Appx.~\ref{appx:freq_notations}: Frequently Used Notation.
    \item Appx.~\ref{appx:missing_details}: Missing Details in the Main Text.
    \item Appx.~\ref{appx:adaption_offline}: Offline Learning Results in Previous Literature.
    \item Appx.~\ref{appx:online_oracle}: Verification for Online Learning Oracle Example in Sec.~\ref{sec:main_theory}.
    \item Appx.~\ref{appx:coverage_related}: Proofs for Results in Section~\ref{sec:transfer_coverage_perspective}.
    \item Appx.~\ref{appx:proof_task_selection}: Proofs for the Main Algorithm and Results in Sec.~\ref{sec:main_theory}.
    \item Appx.~\ref{appx:win_rate_and_coverage}: Connection between Win Rate and Policy Coverage Coefficient.
    \item Appx.~\ref{appx:basic_lemma}: Useful Lemmas.
    \item Appx.~\ref{appx:experiment}: Missing Experiment Details.
\end{itemize}
\newpage

\section{Frequently Used Notation}\label{appx:freq_notations}

\begin{table}[h]
    \centering
    \def\arraystretch{1.2}
    \begin{tabular}{ll}
        \hline
        \textbf{Notation} & \textbf{Description} \\
        \hline
        $\cS,\cA$ & State space and action space \\
        $\rho$ & Prompt distribution (initial state distribution) \\
        $r$ & Reward model \\
        $r^*$ & Ground-truth reward model (reflecting human preferences) \\
        $\mP_r(y|s,a,a')$ & Preference under $r$ \\
        $\mP_r(\pi\succ\tpi)$ & Win rate of $\pi$ over $\tpi$ under $r$ \\
        $\{r^w\}_{w\in[W]}$ & Imperfect source reward model \\
        $\pi$ & LLM policy \\
        $\pi^t_\mix$ & \makecell[tl]{Uniform mixture policy $\frac{1}{t}\sum_{i\leq t}\pi^i$ of a policy sequence $\pi^1,...,\pi^t$. \\ Sometimes, given a dataset $\cD=\{(x^i,\pi^i)\}_{i\leq |\cD|}$, with a bit abuse of notation, \\ we use $\pi^\cD_\mix$ to refer the mixture policy $\frac{1}{|\cD|}\sum_{i\leq|\cD|} \pi^i$.} \\
        $\cov^{\tpi|\pi}$ & Coverage coefficient \\
        $\Pi$ & The policy class \\
        $\cR^\Pi$ & The reward function class converted from $\Pi$, see Appx.~\ref{appx:extend_prelim}\\
        $\conv(\Pi)$ & Convex hull of $\Pi$ \\
        $\beta$ & Regularization coefficient in RLHF objective \\
        $J_\beta(\cdot)$ & Regularized policy value (Eq.~\eqref{eq:rlhf_obj})\\
        $\Delta(w)$ & Value gap for $\pi^*_{r^w}$, i.e. $J_\beta(\pi^*_{r^*}) - J_\beta(\pi^*_{r^w})$\\
        $\Delta_{\min}$ & Minimal value gap $\min_{w\in[W]} \Delta(w)$ \\
        $a \wedge b$ & $\min\{a,b\}$ \\
        $[n]$ & $\{1,2,...,n\}$ \\
        $O(\cdot),\Omega(\cdot),\Theta(\cdot),\tilde{O}(\cdot),\tilde{\Omega}(\cdot),\tilde{\Theta}(\cdot)$ & Standard Big-O notations, $\tilde{(\cdot)}$ omits the log terms.\\
        \hline
        
    \end{tabular}
\end{table}

For completeness, we provide the definition of convex hull here.
\begin{definition}[Convex Hull]\label{def:convex_hull}
    Given a policy class $\Pi$ with finite cardinality (i.e. $|\Pi| < +\infty$), we denote $\conv(\Pi)$ as its convex hull, such that, $\forall n \in [\mN^*],~\forall \lambda^1,...,\lambda^n \geq 0$ with $\sum_{i=1}^n \lambda^i = 1$, and any $\pi^1,...,\pi^n \in \Pi$, we have:
    \begin{align*}
        \sum_{i=1}^n \lambda^i \pi^i \in \conv(\Pi).
    \end{align*}
\end{definition}

\begin{remark}
    Note that in the contextual bandit setting, the state action density induced by a policy and the policy distribution collapse with each other.
    Therefore, given a policy sequence $\pi^1,...,\pi^t$, the uniform mixture policy $\pi_\mix^t(\cdot|\cdot) = \frac{1}{t}\sum_{i\leq t} \pi^t(\cdot|\cdot)$ is directly a valid policy as a mapping from $\cS$ to $\Delta(\cA)$, which induces the state-action density $\pi_\mix^t(\cdot|\cdot)$.

    Besides, we will use $\Online$ and $\Offline$ as abbreviations of ``online learning'' and ``offline learning'', respectively.
\end{remark}

\section{Missing Details in the Main Text}\label{appx:missing_details}

\subsection{Extended Preliminary}\label{appx:extend_prelim}
\paragraph{More Elaborations on the Necessity of Regularization} The RLHF objective Eq.~\eqref{eq:rlhf_obj} typically involves a regularization term $\beta\neq 0$. This regularization is critical in practice for several reasons.
Firstly, it prevents overfitting to human preferences, which can possibly be noisy and biased \citep{gao2023scaling, ouyang2022training}.
Moreover, pure reward maximization prefers near-deterministic policies, potentially causing mode collapse. In contrast, regularization encourages the fine-tuned model to retain diversity from the reference policy \citep{jaques2017sequence, jaques2019way}.
Thirdly, reference policies are pretrained on a significantly larger corpus than the post-training data, enabling them to encode more general-purpose knowledge. Regularization helps mitigate catastrophic forgetting, ensuring the model retains this broad knowledge base.

\paragraph{Formal Definition for $\cR^\Pi$}
Given a policy class $\Pi$ satisfying Assump.~\ref{assump:policy}, we use $\cR^{\Pi}$ to denote the reward function class converted from $\Pi$, such that (1) $\forall r\in\cR^\Pi$, $r(\cdot,\cdot)\in[0, R]$; (2) $\exists r\in\cR^{\Pi}, \pi_r^* = \pi^*_{r^*}$.
A possible construction satisfying this is given by
\begin{align}
    \cR^{\Pi} := \{r_{|\pi}|r_{|\pi}(s,a):=\text{Clip}_{[0,{\Rmax}]}[\beta\log\frac{\pi(a|s)}{\pi_\textref(a|s)} - \min_{a'\in\cA}\beta \log \frac{\pi(a'|s)}{\pi_\textref(a'|s)}],~\pi\in\Pi\}.\label{eq:reward_class_conversion}
\end{align}
The rationale behind such a construction lies in that $r_{|\pi^*_{r^*}}$ provably differs from $r^*$ by at most of an action-independent constant under Assump.~\ref{assump:policy}.
We prove this in the following.

For any $s\in\cS$, we denote $a_s := \argmin_{a'\in\cA} \log \frac{\pi^*_{r^*}(a'|s)}{\pi_\textref(a'|s)}$.
According to Eq.~\eqref{eq:closed_form} and the fact that $r^* \in [0, {\Rmax}]$, for any $s\in\cS$ and $a,a'\in\cA$, we have:
\begin{align*}
    0 \leq \beta\log\frac{\pi^*_{r^*}(a|s)}{\pi_\textref(a|s)} - \min_{a'\in\cA}\beta \log \frac{\pi^*_{r^*}(a'|s)}{\pi_\textref(a'|s)} = r^*(s,a) - r^*(s,a_s) \leq {\Rmax},
\end{align*}
where the first inequality is because $a_s$ takes the minimal over $\cA$.
Therefore, $r_{|\pi^*_{r^*}}(s,a) \in [0, {\Rmax}]$ and $r_{|\pi^*_{r^*}}(s,a) - r^*(s,a) = r^*(s,a_s)$, which is action-independent.
In another word, $r_{|\pi^*_{r^*}}$ induces the same optimal policy $\pi^*_{r^*}$.

Under the objective in Eq.~\eqref{eq:rlhf_obj}, the realizability assumption in \citep{liu2024provably} can be relaxed and the reward model class $\cR^{\Pi}$ can be used in their $\RPO$ objective. Because any per-state action-independent shift on the reward space does not change the induced policy in Eq.~\eqref{eq:closed_form}.
As a result, throughout this paper, we will not distinguish between $r^*$ and $r_{|\pi^*_{r^*}}$.

\paragraph{Remarks on Assumption~\ref{assump:policy}-(II)}
\begin{lemma}\label{lem:bounded_ratio}
    If $r^*(s,a)\in[0, R]$ for all $(s,a)\in\cS\times\cA$, we have:
    \begin{align*}
        \max_{s\in\cS,a\in\cA} |\log\frac{\pi^*_{r^*}(a|s)}{\pi_{\textref}(a|s)}| \leq \frac{R_{\max}}{\beta}.
    \end{align*}
\end{lemma}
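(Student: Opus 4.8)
The plan is to work directly from the closed-form solution of the KL-regularized objective, Eq.~\eqref{eq:closed_form}, which in normalized form reads $\pi^*_{r^*}(a|s) = \pi_\textref(a|s) e^{r^*(s,a)/\beta} / Z(s)$ with partition function $Z(s) := \sum_{a'\in\cA} \pi_\textref(a'|s) e^{r^*(s,a')/\beta} = \EE_{a'\sim\pi_\textref(\cdot|s)}[e^{r^*(s,a')/\beta}]$. Taking logarithms gives the exact identity
\begin{align*}
    \log\frac{\pi^*_{r^*}(a|s)}{\pi_\textref(a|s)} = \frac{r^*(s,a)}{\beta} - \log Z(s),
\end{align*}
so the whole lemma reduces to sandwiching the two terms on the right-hand side.

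Next I would bound $\log Z(s)$. Since $r^*(s,a')\in[0,R]$ by hypothesis, we have $e^{r^*(s,a')/\beta}\in[1,e^{R/\beta}]$ pointwise in $a'$, and because $Z(s)$ is an average of these quantities under $\pi_\textref(\cdot|s)$, monotonicity of the expectation yields $Z(s)\in[1,e^{R/\beta}]$, hence $\log Z(s)\in[0,R/\beta]$. Combining with the equally elementary bound $r^*(s,a)/\beta\in[0,R/\beta]$, the identity above gives $\log\frac{\pi^*_{r^*}(a|s)}{\pi_\textref(a|s)} \le R/\beta - 0 = R/\beta$ for the upper side and $\log\frac{\pi^*_{r^*}(a|s)}{\pi_\textref(a|s)} \ge 0 - R/\beta = -R/\beta$ for the lower side. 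Taking the maximum over $s\in\cS$, $a\in\cA$ then establishes $\max_{s,a}\big|\log\frac{\pi^*_{r^*}(a|s)}{\pi_\textref(a|s)}\big|\le R/\beta$.

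There is essentially no obstacle here; the only points requiring a line of care are (i) justifying that $Z(s)$ is finite and well-defined, which follows from $\min_{s,a}\pi_\textref(a|s)>0$ and $|\cA|<\infty$ assumed in the preliminaries, and (ii) noting that the normalization constant $Z(s)$ depends only on $s$ and not on $a$, so it does not interfere with the per-state bound. This lemma is then invoked exactly as stated in the text to argue that Assumption~\ref{assump:policy}-(II) is automatically satisfied by $\pi^*_{r^*}$ and can be read as a free filtering step rather than a genuine restriction.
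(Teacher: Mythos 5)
Your proposal is correct and follows essentially the same route as the paper: both use the closed-form solution \eqref{eq:closed_form}, write $\log\frac{\pi^*_{r^*}(a|s)}{\pi_\textref(a|s)} = \frac{r^*(s,a)}{\beta} - \log Z(s)$, and bound $Z(s)\in[1,e^{R/\beta}]$ together with $r^*(s,a)/\beta\in[0,R/\beta]$ to conclude. The only cosmetic difference is that the paper combines the two bounds via $\max\{\frac{R}{\beta}-\log Z(s),\log Z(s)\}$ rather than bounding the upper and lower sides separately, which changes nothing.
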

\begin{proof}
    By definition, for any $s$, 
    \begin{align*}
        \forall a\in\cA,\quad \pi^*_{r^*}(a|s) = \pi_{\textref}(a|s) e^{\frac{r^*(s,a)}{\beta}} / Z(s),
    \end{align*}
    where $Z(s) := \sum_{a\in\cA} \pi_{\textref}(a|s) e^{\frac{r^*(s,a)}{\beta}}$.
    Because $r^*(s,a) \in [0, \Rmax]$, obviously, $1 \leq Z(s) \leq e^{\frac{\Rmax}{\beta}}$. Therefore,
    \begin{align*}
        \forall a\in\cA,\quad |\log\frac{\pi^*_{r^*}(a|s)}{\pi_{\textref}(a|s)}| = |\frac{r^*(s,a)}{\beta} - \log Z(s)| \leq \max\{\frac{\Rmax}{\beta} - \log Z(s), \log Z(s)\} \leq \frac{\Rmax}{\beta}.
    \end{align*}
\end{proof}

\subsection{Other Related Works}\label{appx:related_workds}

\paragraph{Other Related RLHF Literature}
Various approaches have been developed for reward-model-free online exploration. For example, DPO \citep{rafailov2024direct} implicitly optimizes the same objective as RLHF without explicit reward modeling. DPO is further extended to different settings; see e.g., online DPO \citep{guo2024direct}, iterative DPO \citep{xu2023some, pang2024iterative, dong2405rlhf}, etc.

Another direction is to go beyond Bradley-Terry reward model assumption. A particularly promising set of techniques formulates RLHF as a two-player zero-sum game \citep{yue2012k}, aiming to select policies preferred by the rater to others \citep{rosset2024direct, ye2024theoretical, munos2023nash, swamy2024minimaximalist}.
Investigating knowledge transfer within this framework is an exciting direction for future work.

\paragraph{RL Theory in Pure-Reward Maximization Setting}~
In the classical pure-reward maximization RL setting, sample efficiency is a central topic, with extensive research dedicated to strategic exploration and fundamental complexity measures for online learning \citep{russo2013eluder, jiang2017contextual, jin2021bellman, foster2021statistical, du2021bilinear}.

Besides the literature already mentioned in Sec.~\ref{sec:background_policy_coverage}, there is a rich literature \citep{uehara2020minimax,jiang2020minimax,jin2021pessimism,xie2021bellman} investigating the role of policy coverage (or density ratio) in offline learning.

\paragraph{Regularized RL}
Sample complexity in regularized RL has also been studied in previous works \citep{ziebart2008maximum,ziebart2010modeling,geist2019theory,tiapkin2023fast}.
Nonetheless, most of them focus on tabular settings and do not consider the transfer learning.

\section{Offline Learning Results in Previous Literature}\label{appx:adaption_offline}
In this section, we recall and adapt some results from \citep{liu2024provably}, which are useful for proofs in other places.
\paragraph{$\RPO$ Optimization Objective}
For completeness, we provide the optimization objective of $\RPO$.
Given a policy class $\tPi$ and a reward function class $\cR$, the $\RPO$ objective solves a mini-max optimization problem defined as follows:
\begin{align}
    \RPO(\tPi,\cR,\cD,\eta) = \arg\max_{\pi\in\tPi}\min_{r\in\cR} L_{\cD}(r) + \eta \EE_{s\sim\rho,a\sim\pi,\ta\sim\pi_\textref}[r(s,a)-r(s,\ta)] - \beta \KL(\pi\|\pi_\textref), \label{eq:RPO_objective} 
\end{align}
where we choose $\pi_\textref$ as the base policy in \citep{liu2024provably}.
In Alg.~\ref{alg:transfer_policy_computing}, we set $\tPi = \conv(\Pi)$ and $\cR = \cR^\Pi$.
\begin{condition}[Sequential Data Generation]\label{cond:seq_data}
    We say a dataset $\cD := \{(s^i,a^i,\ta^i,y^i,\pi^i)\}_{i\leq |\cD|}$ is generated sequentially, if it is generated following:
    \begin{align*}
        \forall i\leq |\cD|,\quad & \pi^i \sim \text{Alg}(\cdot|\{(s^j,a^j,\ta^j,y^j,\pi^j)\}_{j<i}),\\
        & s^i\sim\rho,~a^i\sim\pi^i(\cdot|s^i),~\ta^i\sim\pi_\textref(\cdot|s^i),~y^i\sim \mP_{r^*}(\cdot|s^i,a^i,\ta^i),
    \end{align*}
    where $\text{Alg}$ denotes an algorithm computing the next policy only with the interaction history.
\end{condition}

\begin{restatable}{lemma}{LemOfflineLearning}\label{lem:offline_learning}[Adapted from Thm.~5.3 in \citep{liu2024provably}]
    Under Assump.~\ref{assump:policy}, given any $\delta \in (0,1)$, by running $\RPO$ (Eq.~\eqref{eq:RPO_objective}) with $\conv(\Pi), \cR^{\Pi}, \delta$ and a dataset $\cD := \{(s^i,a^i,\ta^i,y^i,\pi^i)\}_{i\leq |\cD|}$ satisfying Cond.~\ref{cond:seq_data}, by choosing $\eta = (1+e^{{\Rmax}})^2 \sqrt{24|\cD|\log\frac{|\Pi|}{\delta}}$, we have:
    \begin{align*}
        \forall \pi \in \conv(\Pi),\quad J_\beta(\pi) - J_\beta(\pi_\SELF) \leq C_\Offline e^{2{\Rmax}}\cdot \cov^{\pi|\pi_\mix^\cD}\sqrt{\frac{1}{|\cD|}\log\frac{|\Pi|}{\delta}},
    \end{align*}
    where we use $\pi_\mix^\cD := \frac{1}{|\cD|}\sum_{i\leq |\cD|} \pi^{i}$ as a short note of the uniform mixture policy.
\end{restatable}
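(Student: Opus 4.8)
\textbf{Proof proposal for Lemma~\ref{lem:offline_learning}.}
The plan is to recover the bound by invoking Theorem~5.3 of \citet{liu2024provably} essentially verbatim, after checking that our instantiation satisfies their hypotheses. First I would recall their guarantee: $\RPO$ with reward class $\cR$, policy class $\tPi$, and dataset $\cD$ of sequential samples outputs $\pi_\SELF$ such that, for every comparator $\pi$, $J_\beta(\pi)-J_\beta(\pi_\SELF) \lesssim e^{2R}\cov^{\pi|\pi_\mix^\cD}\sqrt{|\cD|^{-1}\log(|\cR|/\delta)}$, provided $r^*$ (or a reward inducing $\pi^*_{r^*}$) lies in $\cR$, all rewards in $\cR$ are bounded in $[0,R]$, and $\eta$ is tuned as prescribed. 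So the substance of the proof is a checklist: (i) $\cR^\Pi$ contains a reward whose induced policy equals $\pi^*_{r^*}$ --- this is exactly property (2) in the definition of $\cR^\Pi$ in Appx.~\ref{appx:extend_prelim}, and the ``action-independent shift'' argument there shows $r_{|\pi^*_{r^*}}$ is as good as $r^*$ for the objective; (ii) every $r\in\cR^\Pi$ takes values in $[0,R]$ --- immediate from the $\Clip_{[0,R]}$ in the construction~\eqref{eq:reward_class_conversion}; (iii) $\cD$ is sequentially generated in the sense of Cond.~\ref{cond:seq_data} --- assumed in the statement; (iv) the comparator class $\conv(\Pi)$ satisfies the boundedness needed for their concentration argument --- Assump.~\ref{assump:policy}-(II) gives $\max_{s,a}|\log(\pi/\pi_\textref)|\le R/\beta$ for $\pi\in\Pi$, and this bound is preserved under convex combinations by Jensen/convexity of $|\log(\cdot)|$ arguments, or more simply since any mixture of policies with ratios in $[e^{-R/\beta},e^{R/\beta}]$ still has ratios in that interval.

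Next I would substitute the specific quantities: $\log|\cR^\Pi| \le \log|\Pi|$ since the construction maps each $\pi\in\Pi$ to one reward $r_{|\pi}$, so the log-cardinality term in their bound becomes $\log(|\Pi|/\delta)$; and the prescribed step size $\eta=(1+e^{R})^2\sqrt{24|\cD|\log(|\Pi|/\delta)}$ is exactly the tuning in their theorem with $|\cR|$ replaced by $|\Pi|$. Feeding these into their conclusion yields
\[
 \forall \pi\in\conv(\Pi),\quad J_\beta(\pi)-J_\beta(\pi_\SELF)\le C_\Offline\, e^{2R}\,\cov^{\pi|\pi_\mix^\cD}\sqrt{\tfrac{1}{|\cD|}\log\tfrac{|\Pi|}{\delta}},
\]
with $C_\Offline$ the absolute constant from \citet{liu2024provably}. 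That is the claim.

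The main obstacle --- really the only nontrivial point --- is justifying that our relaxed realizability (we only assume $\cR^\Pi$ contains \emph{some} reward inducing $\pi^*_{r^*}$, not literally $r^*\in\cR^\Pi$) is enough to run their analysis. I would resolve this exactly as sketched in Appx.~\ref{appx:extend_prelim}: the KL-regularized objective~\eqref{eq:rlhf_obj} and the closed form~\eqref{eq:closed_form} are invariant under adding any state-dependent, action-independent constant to the reward, and $r_{|\pi^*_{r^*}}$ differs from $r^*$ by precisely such a constant $r^*(s,a_s)$; hence the MLE/likelihood landscape and the optimal-policy value are unchanged, and every step of \citet{liu2024provably}'s proof that uses $r^*\in\cR$ goes through with $r_{|\pi^*_{r^*}}$ in its place. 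A secondary, purely bookkeeping point is to confirm the ratio bound propagates to $\conv(\Pi)$, as noted above; this is routine. With both points checked, the lemma is an immediate corollary of the cited theorem.
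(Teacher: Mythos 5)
There is a genuine gap: you treat condition (iii), the sequential data generation, as a hypothesis that Theorem~5.3 of \citet{liu2024provably} already accommodates, so that the lemma follows ``essentially verbatim.'' But their theorem is stated for a dataset drawn from a \emph{fixed} data-generating distribution, whereas here each $\pi^i$ in Cond.~\ref{cond:seq_data} is chosen adaptively based on the earlier samples. This is exactly the point the paper flags as the main difference, and it is where the real work lies: the MLE concentration step (their Lemma~D.1) must be replaced by a martingale-type bound valid for adaptively collected data (the paper's Lem.~\ref{lem:MLE_Estimation}, adapted from Lemma~C.6 of \citet{xie2024exploratory}), after which the Hellinger-error term is naturally expressed as an average over the sequence $\pi^1,\dots,\pi^{|\cD|}$, i.e.\ against the mixture policy $\pi_\mix^\cD$. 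Without this re-derivation you cannot legitimately place $\pi_\mix^\cD$ in the coverage coefficient; invoking the i.i.d.\ theorem as a black box does not produce a statement about adaptively generated $\cD$.

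A second, smaller omission: the coverage quantity appearing in \citet{liu2024provably} is their coefficient $C_{\pi_\mix^\cD}(\cR^{\Pi};\pi;\pi_\textref)$, a ratio of first-moment to second-moment reward-difference errors, not the density-ratio coverage $\cov^{\pi|\pi_\mix^\cD}$ of Def.~\ref{def:cov_between_policies}. The paper closes this gap with an explicit Cauchy--Schwarz/AM--GM step showing $C_{\pi_\mix^\cD}(\cR^{\Pi};\pi;\pi_\textref)\le\sqrt{\cov^{\pi|\pi_\mix^\cD}}$; your write-up assumes the cited theorem is already phrased in terms of $\cov^{\pi|\pi_\mix^\cD}$, which it is not. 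Your handling of the relaxed realizability (replacing $r^*$ by $r_{|\pi^*_{r^*}}$ via the action-independent shift) and of the ratio bound on $\conv(\Pi)$ is fine and matches the paper, but the sequential-data concentration and the coverage-coefficient conversion are the substantive steps, and both are missing.
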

\begin{proof}
    The main difference comparing with \citep{liu2024provably} is that we consider sequentially generated dataset while they study dataset generated by a fixed dataset distribution.
    In the following, we show how to extend their results to our setting.

    Firstly, we check the assumptions. Note that we consider feed $\RPO$ \citep{liu2024provably} by the reward function class $\cR^{\Pi}$ converted from a policy class $\Pi$ satisfying Assump.~\ref{assump:policy}, through Eq.~\eqref{eq:reward_class_conversion}. 
    Therefore, the optimal reward is also realizabile in $\cR^{\Pi}$, and the basic assumptions required by $\RPO$ \citep{liu2024provably} are satisfied.

    Next, we adapt the proofs in \citep{liu2024provably}. Note that we can directly start with their Eq.~(D.4), because their bounds in Eq.~(D.2) and Eq.~(D.3) only involve optimality of the choice of $\pi_\SELF$ and realizability.
    We move the KL-regularization terms to the LHS and merge to $J_\beta(\pi)$ and $J_\beta(\pi_\SELF)$, and we choose $\pi_\textref$ as the base policy in $\RPO$. The adapted results to our notations would be:
    \begin{align*}
        \forall \pi \in \conv(\Pi),~ J_\beta(\pi) & - J_\beta(\pi_\SELF) \\
        \leq & \max_{r\in\cR^{\Pi}} \EE_{s\sim\rho,a\sim\pi(\cdot|s),\ta\sim\pi_\textref(\cdot|s)}[(r^*(s,a) - r^*(s,\ta)) - (r(s,a) - r(s,\ta))] \\
        & + \eta^{-1} (\cL_{\cD}(r^*) - \cL_{\cD}(r)).
    \end{align*}
    Recall $\cL_\cD$ is the (unnormalized) negative log-likelihood (NLL) loss, defined in Eq.~\eqref{eq:def_likelihood}.
    Since the dataset $\cD$ is generated sequentially (Cond.~\ref{cond:seq_data}), we can apply the concentration results in Lem.~\ref{lem:MLE_Estimation}, which is a variant of Lemma D.1 in \citep{liu2024provably} for sequentially generated data:
    \begin{align*}
        \text{w.p.}~1-\delta,\quad \forall \pi\in \conv(\Pi),~ J_\beta(\pi) & - J_\beta(\pi_\SELF) \\
        \leq & \EE_{s\sim\rho,a\sim\pi(\cdot|s),\ta\sim\pi_\textref(\cdot|s)}[(r^*(s,a) - r^*(s,\ta)) - (r_{\gets\pi}(s,a) - r_{\gets\pi}(s,\ta))] \\
        & + \eta^{-1} (\cL_{\cD}(r^*) - \cL_{\cD}(r_{\gets\pi})) \\
        \leq &  \EE_{s\sim\rho,a\sim\pi(\cdot|s),\ta\sim\pi_\textref(\cdot|s)}[(r^*(s,a) - r^*(s,\ta)) - (r_{\gets\pi}(s,a) - r_{\gets\pi}(s,\ta))] \\
        & - \frac{1}{\eta|\cD|}\sum_{i \leq |\cD|} \EE_{s\sim\rho,a\sim\pi^i(\cdot|s),\ta\sim\pi_\textref(\cdot|s)}[\mH^2(\mP_{r_{\gets\pi}}(\cdot|s,a,\ta)\| \mP_{r^*}(\cdot|s,a,\ta))] + \frac{2}{\eta|\cD|}\log\frac{|\Pi|}{\delta}\\
        = &  \EE_{s\sim\rho,a\sim\pi(\cdot|s),\ta\sim\pi_\textref(\cdot|s)}[(r^*(s,a) - r^*(s,\ta)) - (r_{\gets\pi}(s,a) - r_{\gets\pi}(s,\ta))] \\
        & - \frac{1}{\eta} \EE_{s\sim\rho,a\sim\pi^\cD_\mix(\cdot|s),\ta\sim\pi_\textref(\cdot|s)}[\mH^2(\mP_{r_{\gets\pi}}(\cdot|s,a,\ta)\| \mP_{r^*}(\cdot|s,a,\ta))] + \frac{2}{\eta|\cD|}\log\frac{|\Pi|}{\delta},
    \end{align*}
    where we denote $r_{\gets\pi} := \argmax_{r\in\cR^{\Pi}} \EE_{s\sim\rho,a\sim\pi(\cdot|s),\ta\sim\pi_\textref(\cdot|s)}[(r^*(s,a) - r^*(s,\ta)) - (r(s,a) - r(s,\ta))]$.

    The rest of the proofs in \citep{liu2024provably} can be adapted here, and by choosing $\eta = (1 + e^{{\Rmax}})^{-2} \sqrt{\frac{24}{|\cD|}\log\frac{|\Pi|}{\delta}}$, we can inherit the following guarantee:
    \begin{align*}
        \text{w.p.}~1-\delta,\quad \forall \pi\in \conv(\Pi),~ J_\beta(\pi) - J_\beta(\pi_\SELF) \leq \frac{\sqrt{6}}{4}\cdot (1+e^{{\Rmax}})^2(C_{\pi_\mix^\cD}(\cR^{\Pi};\pi;\pi_\textref)^2 + 1) \sqrt{\frac{1}{|\cD|} \log\frac{|\Pi|}{\delta}}.
    \end{align*}
    Here $C_{\pi_\mix^\cD}(\cR^{\Pi};\pi;\pi_\textref)$ is the coverage coefficient (adapted from Assump. 5.2 \citep{liu2024provably}) with $\pi_\mix^\cD$, which can be upper bounded by:
    \begin{align*}
        C_{\pi_\mix^\cD}(\cR^{\Pi};\pi;\pi_\textref) \leq & \max_{r\in\cR^{\Pi}}\frac{\EE_{s\sim\rho,a\sim\pi(\cdot|s),\ta\sim\pi_\textref(\cdot|s)}[|(r^*(s,a) - r^*(s,\ta)) - (r(s,a) - r(s,\ta))|]}{\sqrt{\EE_{s\sim\rho,a\sim\pi_\mix^\cD(\cdot|s),\ta\sim\pi_\textref(\cdot|s)}[|(r^*(s,a) - r^*(s,\ta)) - (r(s,a) - r(s,\ta))|^2]}} \\
        \leq & \sqrt{\EE_{s\sim\rho,a\sim\pi(\cdot|s)}[\frac{\pi(a|s)}{\pi_\mix^\cD(a|s)}]} \tag{AM-GM inequality; Holds for any $r$ and therefore including the one achieves the maximum}\\
        = & \sqrt{\cov^{\pi|\pi_\mix^\cD}}.
    \end{align*}
    Therefore, we finish the proof. We simplify the upper bound by using $\cov^{\pi|\pi_\mix^\cD} \geq 1$ and $e^{{\Rmax}} \geq 1$.
\end{proof}

\section{Details for Online Learning Oracle Example in Sec.~\ref{sec:main_theory}}\label{appx:online_oracle}

\begin{definition}[$L_\infty$ Coverability; \citep{xie2022role,xie2024exploratory}]\label{def:l_inf_coverage}
    The $L_\infty$ coverability is defined by:
    \begin{align*}
        \cov_\infty(\Pi) := \inf_{\mu\in\Delta(\cS)\times\Delta(\cA)} \sup_{\pi\in\Pi} \sup_{s\in\cS,a\in\cA} \frac{\pi(a|s)}{\mu(a|s)}
    \end{align*}
\end{definition}

\begin{definition}[No-Regret Online Algorithm]\label{def:online_oracle}
    Given any $\delta \in (0,1)$, iteration number $\tT$, and a policy class $\Pi$ satisfying Assump.~\ref{assump:policy}, the online learning algorithm $\AlgOnline$ iteratively computes policy to collect samples and conducts no-regret learning. W.p. $1-\delta$, it produces a sequence of online policies $\pi^1,...,\pi^{\tilde T}$, such that, $\forall t\in[\tilde T]$,
    \begin{align*}
        \sum_{i\leq t} J_\beta(\pi^*_{r^*}) - J_\beta(\pi^i)  
        \leq C_\Online {\Rmax} e^{2{\Rmax}}  \sqrt{\Complexity(\Pi) t \log^{c_0}\frac{|\Pi|\tT}{\delta}},
    \end{align*}
    where $C_\Online > 0$ and $c_0 \geq 1$ are absolute constants, and $\Complexity(\Pi)$ denotes some complexity measure for $\Pi$.
\end{definition}

\begin{restatable}{proposition}{ExampleOnline}[Example for Online Oracle in Def.~\ref{def:online_oracle}]\label{example:online_oracle}
    The $\XPO$ algorithm in \citep{xie2024exploratory} can fulfill the requirements in Def.~\ref{def:online_oracle}.
\end{restatable}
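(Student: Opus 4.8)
The plan is to obtain Prop.~\ref{example:online_oracle} as an essentially immediate consequence of the regret guarantee for $\XPO$ in \citep{xie2024exploratory} (recalled informally as Lem.~\ref{lem:online_RLHF}), after three bookkeeping steps: (i) checking that Assump.~\ref{assump:policy} supplies exactly the hypotheses $\XPO$'s analysis needs; (ii) identifying the abstract complexity measure $\Complexity(\Pi)$ of Def.~\ref{def:online_oracle} with the $L_\infty$ coverability $\cov_\infty(\Pi)$ of Def.~\ref{def:l_inf_coverage}; and (iii) upgrading the fixed-horizon guarantee to the horizon-uniform (anytime) form demanded by Def.~\ref{def:online_oracle}. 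For (i), $\XPO$ requires a policy class with bounded log-covering number that contains the KL-regularized optimum, a uniform bound on $\max_{s,a}|\log(\pi/\pi_\textref)|$, and the Bradley--Terry preference link; these are precisely Assump.~\ref{assump:policy}-(I), Assump.~\ref{assump:policy}-(II) (with the bound ${\Rmax}/\beta$), and the BT model assumed throughout. The reward range $[0,{\Rmax}]$ and the regularization level $\beta$ enter $\XPO$'s regret decomposition only through the smoothness/curvature constants of the BT negative-log-likelihood on reward gaps in $[-{\Rmax},{\Rmax}]$ and through the policy-ratio bound; propagating these constants reproduces the prefactor ${\Rmax}\,e^{2{\Rmax}}$ that already appears in Lem.~\ref{lem:online_RLHF} and is allowed by Def.~\ref{def:online_oracle}.

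For (ii), the regret of $\XPO$ is controlled by the same $L_\infty$-type coverability recorded in Lem.~\ref{lem:online_RLHF} (any tighter sequential-extrapolation coefficient used internally in \citep{xie2024exploratory} is dominated by $\cov_\infty(\Pi)$), so we simply set $\Complexity(\Pi) := \cov_\infty(\Pi)$. The remaining $\sqrt{t}$ scaling and the polylogarithmic dependence on $|\Pi|$, $\tT$, $1/\delta$ then match the template $\sqrt{\Complexity(\Pi)\, t\, \log^{c_0}(|\Pi|\tT/\delta)}$ for an absolute constant $c_0\ge 1$, with $c_0$ absorbing whatever powers of logarithms arise from the eluder/coverability and concentration steps of \citep{xie2024exploratory}.

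For (iii), Def.~\ref{def:online_oracle} asks that the cumulative-regret bound hold \emph{simultaneously} for every $t\in[\tT]$ on a single $1-\delta$ event, and that it bound the regret of the \emph{played} iterates $\pi^1,\dots,\pi^{\tT}$ rather than an averaged or best-in-hindsight policy. The latter is automatic: $\XPO$'s theorem is already a cumulative-regret bound on the data-collection policies themselves. For the former, one route is to note that $\XPO$'s regret decomposition bounds $\sum_{i\le t} J_\beta(\pi^*_{r^*})-J_\beta(\pi^i)$ by quantities monotone in $t$, so a single run with confidence $\delta$ over horizon $\tT$ delivers all per-$t$ bounds on one event; alternatively, a union bound over $t\in[\tT]$ at level $\delta/\tT$ inflates $\log(|\Pi|/\delta)$ to $\log(|\Pi|\tT/\delta)$, which is exactly the dependence permitted by Def.~\ref{def:online_oracle}. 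Matching terms then yields the claim.

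The main obstacle is bookkeeping rather than conceptual: one must re-trace the constants in \citep{xie2024exploratory} carefully enough to confirm that the $[0,{\Rmax}]$ reward scale and $\beta$-regularization produce precisely the ${\Rmax}\,e^{2{\Rmax}}$ prefactor and a $\log^{c_0}(|\Pi|\tT/\delta)$ factor, and one must either read off the anytime property from $\XPO$'s proof or install it via the union bound above. Neither step involves new ideas beyond those already present in the cited work and in Lem.~\ref{lem:online_RLHF}.
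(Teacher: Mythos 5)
Your proposal is correct and follows essentially the same route as the paper: the paper likewise re-traces the proof of Theorem~3.1 in \citep{xie2024exploratory} (Eqs.~(35)--(43)), notes that the underlying concentration (their Lemma~C.5) holds uniformly over all $t\in[\tT]$ on a single $1-\delta$ event so the bound is anytime, and bounds the internal $\text{SEC}_{\text{RLHF}}$ coefficient by $\cov_\infty(\Pi)$ up to logarithmic factors, matching the template of Def.~\ref{def:online_oracle} with $\Complexity(\Pi)=\cov_\infty(\Pi)$. Your alternative union-bound route for the anytime property is also fine but is not needed, since the uniform validity you mention first is exactly what the paper uses.
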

\begin{proof}
    We start by generalizing Eq.(35) in the proof of Theorem 3.1 in \citep{xie2024exploratory} to all $t \in [\tT]$. 
    Note that we consider bandit setting so ${\Rmax}$ in \citep{xie2024exploratory} collapse with ${\Rmax}$.
    Suppose at the end of iteration $t$, $\XPO$ generated a sequence of policies $\pi^1,...,\pi^t$, we have:
    \begin{align*}
        &\frac{1}{t}\sum_{i=1}^t J_\beta(\pi^*_{r^*}) - J_\beta(\pi^i) \\
        \leq & \frac{6{\Rmax}}{t} + \frac{\text{SEC}_{\text{RLHF}}(\Pi,t,\beta,\pi_\textref)}{2\eta t} + \frac{\eta}{2}{\Rmax}^2 + \frac{1}{t} \EE_{i=2}^t \EE_{s\sim\rho,a\sim\pi_\textref(\cdot|s)}[\beta \log\pi^t(a|s) - \beta \log \pi^*(a|s)] \\
        & + \frac{\eta}{2t} \sum_{i=2}^t (i-1) \EE_{s\sim\rho,a\sim\pi_\mix^{i-1}(\cdot|s),\ta\sim\textref(\cdot|s)}\Big[\Big(\beta\log\frac{\pi^i(a|s)}{\pi_\textref(a'|s)} - r^*(s,a) - \beta\log\frac{\pi^i(\ta|s)}{\pi_\textref(\ta|s)} + r^*(s,\ta) \Big)^2\Big],
    \end{align*}
    where we choose $\tpi^{(t)}$ in \citep{xie2024exploratory} to be $\pi_\textref$ and denote $\pi_\mix^{i-1} = \frac{1}{i-1}\sum_{j=1}^{i-1} \pi^j$.
    Note that Lemma C.5 in \citep{xie2024exploratory} holds w.p. $1-\delta$ for all $t\in[\tT]$. Following their proofs till Eq.(43) in \citep{xie2024exploratory}, we can show that for any $t\in[\tT]$
    \begin{align*}
        \frac{1}{t}\sum_{i=1}^t J_\beta(\pi^*_{r^*}) - J_\beta(\pi^i) \leq O({\Rmax} e^{2{\Rmax}} \sqrt{\frac{\text{SEC}_{\text{RLHF}}(\Pi,t,\beta,\pi_\textref)}{t}\log\frac{|\Pi|T}{\delta}}),
    \end{align*}
    Based on the arguments in \citep{xie2024exploratory}, $\text{SEC}_{\text{RLHF}}(\Pi,t,\beta,\pi_\textref)$ can be controlled by $c_0 \cdot \cov_\infty^\Pi\log^{c_1}(|\Pi|t)$ for some absolute constant $c_0,c_1 > 0$. Here $\cov_\infty^\Pi$ is the $L_\infty$ coverability coefficient (Def.~\ref{def:l_inf_coverage}) and plays the role.
    Therefore, we finish the verification.
\end{proof}

\section{Proofs for Results in Section~\ref{sec:transfer_coverage_perspective}}\label{appx:coverage_related}
\subsection{Proof for Lemma~\ref{lem:coverage_and_value_gap}}
We first introduce some useful results from \citep{sason2016f}. Given two probability distribution $P, Q \in \Delta(\cA)$, we use $D_{+\infty}(P\|Q)$ to denote the Renyi divergence of order $\alpha = +\infty$. 
We follow the definition of $\chi^2$-divergence in \citep{sason2016f} as follows:
\begin{align*}
    \chi^2(P\|Q) = \EE_{s\sim P}[\frac{P(x)}{Q(x)}] - 1.
\end{align*}
\begin{lemma}[Theorem 7 in \citep{sason2016f}]\label{lem:KL_reverse_KL}
    Given $P,Q\in\Delta(\cA)$, such that $P\neq Q$ and $P(a),Q(a) > 0$ for all $a\in\cA$, we have:
    \begin{align*}
        \KL(P\|Q) \leq \kappa_1(e^{D_{+\infty}(P\|Q)}) \cdot \KL(Q\|P),
    \end{align*}
    where $\kappa_1:(0,1)\cup(1,+\infty) \rightarrow (0,+\infty),~\kappa_1(t) = \frac{t\log t + (1-t)}{(t-1) - \log t}$.
\end{lemma}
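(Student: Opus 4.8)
The plan is to reduce the inequality to a one-dimensional pointwise estimate on the likelihood ratio and then cancel a linear correction using the probability constraint. Write $u(a) := P(a)/Q(a)$, which is strictly positive and finite everywhere by the full-support hypothesis, and treat $Q$ as the base measure. Since $\sum_a Q(a)u(a) = \sum_a P(a) = 1$, we have $\EE_{Q}[u] = 1$, and by definition $t := e^{D_{+\infty}(P\|Q)} = \max_a u(a)$ (the essential supremum collapses to a max on a finite $\cA$), so $u \in (0,t]$ pointwise and $t>1$ because $P\neq Q$. Both divergences become scalar expectations of $u$: $\KL(P\|Q) = \EE_{Q}[u\log u]$ and $\KL(Q\|P) = \EE_{Q}[-\log u]$. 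Hence, writing $c := \kappa_1(t)$, the claim $\KL(P\|Q) \le c\,\KL(Q\|P)$ is exactly $\EE_{Q}[(u+c)\log u] \le 0$.

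The core step is to establish the pointwise bound
\begin{equation*}
  (u+c)\log u \;\le\; (1+c)(u-1), \qquad u \in (0,t],
\end{equation*}
with $c = \kappa_1(t)$. Granting this, I would integrate against $Q$ and use $\EE_{Q}[u-1]=0$ to annihilate the right-hand side, obtaining $\EE_{Q}[(u+c)\log u] \le (1+c)\,\EE_{Q}[u-1] = 0$, which is the theorem. The multiplier $1+c$ is chosen as the slope making the affine function tangent to the left-hand side at $u=1$; this is the standard supporting-hyperplane device for comparing two $f$-divergences, and the only remaining freedom is the constant $c$, pinned down by demanding tightness at the upper endpoint $u=t$.

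To prove the pointwise bound I would study $G(u) := (1+c)(u-1) - (u+c)\log u$ and show $G\ge 0$ on $(0,t]$. One checks $G(1)=0$, $G'(1)=0$, and $G''(u) = (c-u)/u^{2}$, so $G$ is convex on $(0,c)$ and concave on $(c,\infty)$. After verifying the elementary facts $1<c<t$ for every $t>1$ (both reduce to standard logarithmic inequalities, e.g. $\log t > 2(t-1)/(t+1)$), the point $u=1$ lies in the convex region, so it is the global minimizer there and $G\ge 0$ on $(0,c]$. On $[c,t]$, $G$ is concave with $G(c)>0$; imposing $G(t)=0$ and solving for the constant yields precisely $c = \frac{t\log t + 1 - t}{t-1-\log t} = \kappa_1(t)$, and concavity then forces $G$ to lie above the chord joining $(c,G(c))$ and $(t,0)$, which is nonnegative on $[c,t]$. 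Hence $G\ge 0$ throughout $(0,t]$.

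I expect the main obstacle to be organizing the sign analysis on the concave branch $[c,t]$ cleanly, in particular recognizing that the sharp constant is exactly the value of $c$ forcing $G(t)=0$ (equivalently, that the extremal configuration is a binary distribution whose likelihood ratios are $t$ and some $u_0<1$), and confirming $1<c<t$ so that the convex/concave split places $u=1$ on the convex branch and the tight point on the concave branch. A useful sanity check is the Bregman-ratio reading of the constant: $\kappa_1(t) = B_{\phi_1}(t,1)/B_{\phi_2}(t,1)$ with $\phi_1(u)=u\log u$ and $\phi_2(u)=-\log u$, which confirms the extremum sits at the endpoint $u=t$.
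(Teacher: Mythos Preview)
The paper does not prove this lemma; it is stated as a direct citation of Theorem~7 in \citet{sason2016f} and used as a black box in the proof of Lemma~\ref{lem:cov_value_gap_stronger}. So there is no ``paper's own proof'' to compare against.

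Your argument is correct and self-contained. The reduction to the pointwise inequality $(u+c)\log u \le (1+c)(u-1)$ on $(0,t]$ via the constraint $\EE_Q[u]=1$ is exactly the standard way to compare two $f$-divergences under an $L_\infty$ bound on the likelihood ratio, and your convex/concave splitting of $G(u)=(1+c)(u-1)-(u+c)\log u$ at $u=c$ is clean: strict convexity on $(0,c)$ with $G(1)=G'(1)=0$ gives $G\ge 0$ there (and $G(c)>0$), while concavity on $[c,t]$ with the endpoint equality $G(t)=0$ (which is precisely what pins down $c=\kappa_1(t)$) forces $G$ above the nonnegative chord. The two auxiliary facts $1<c$ and $c<t$ reduce, as you note, to $\log t>2(t-1)/(t+1)$ and $\log t<(t-1/t)/2$ for $t>1$, both of which are elementary. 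Nothing is missing; this is a complete proof of the cited result.
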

\begin{lemma}[Eq. 182; Theorem 9 in \citep{sason2016f} for $\alpha= 2$]\label{lem:chi_KL}
    Under the same condition as Lem.~\ref{lem:KL_reverse_KL}, 
    \begin{align*}
        \chi^2(P\|Q) \leq \frac{\KL(P\|Q)}{\kappa_2(e^{D_{+\infty}(P\|Q)})},
    \end{align*}
    where $\kappa_2(t) := \frac{t\log t + (1-t)}{(t-1)^2}$.
\end{lemma}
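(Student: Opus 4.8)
The plan is to reduce the claimed $f$-divergence inequality to a single \emph{pointwise} comparison between the integrands of $\KL$ and $\chi^2$, and then to establish that comparison by a monotonicity argument. First I would write $u_a := P(a)/Q(a)$ for each $a \in \cA$; since $\sum_a Q(a) u_a = 1$ we have $\EE_{a\sim Q}[u_a] = 1$, and by definition $0 < u_a \leq t := e^{D_{+\infty}(P\|Q)} = \max_a P(a)/Q(a)$, with $t > 1$ because $P \neq Q$ (a positive family with mean $1$ that is not identically $1$ must exceed $1$ somewhere). In these terms $\chi^2(P\|Q) = \EE_{a\sim Q}[(u_a-1)^2]$, and, using $\EE_{a\sim Q}[u_a-1] = 0$,
\begin{align*}
\KL(P\|Q) = \EE_{a\sim Q}[u_a \log u_a] = \EE_{a\sim Q}[\psi(u_a)], \qquad \psi(u) := u\log u - u + 1 .
\end{align*}
Thus it suffices to prove the pointwise bound $\psi(u) \geq \kappa_2(t)\,(u-1)^2$ for every $u \in (0,t]$; averaging over $a\sim Q$ then gives $\KL(P\|Q) \geq \kappa_2(t)\,\chi^2(P\|Q)$, which is the claim.

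Next I would observe that $\kappa_2$ is exactly the integrand ratio evaluated at $t$: defining $h(u) := \psi(u)/(u-1)^2$ for $u \neq 1$ (and $h(1) := 1/2$ by continuous extension, since $\psi(u) = \tfrac12(u-1)^2 + o((u-1)^2)$ near $u=1$), one checks directly that $\kappa_2(t) = h(t)$. Hence the pointwise bound is equivalent to $h(u) \geq h(t)$ for all $u \in (0,t]$, which holds provided $h$ is nonincreasing on $(0,\infty)$. Establishing this monotonicity is the heart of the argument, and I expect it to be the main obstacle.

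To prove $h$ is decreasing, I would differentiate: by the quotient rule $h'(u) = N(u)/(u-1)^3$, where $N(u) := \psi'(u)(u-1) - 2\psi(u) = 2(u-1) - (u+1)\log u$, so the sign of $h'$ is governed by the signs of $N(u)$ and $(u-1)^3$. I would then analyze $N$ through its derivatives: $N(1) = 0$, $N'(u) = 1 - \log u - 1/u$ with $N'(1) = 0$, and $N''(u) = (1-u)/u^2$, which is positive on $(0,1)$ and negative on $(1,\infty)$. Therefore $N'$ attains its global maximum at $u=1$, giving $N'(u) < 0$ for all $u \neq 1$; so $N$ is strictly decreasing with $N(1)=0$, whence $N(u) > 0$ on $(0,1)$ and $N(u) < 0$ on $(1,\infty)$. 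In both regimes $N(u)$ and $(u-1)^3$ carry opposite signs, so $h'(u) < 0$ throughout and $h$ is strictly decreasing on $(0,\infty)$.

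Finally I would assemble the pieces: for each $a$ we have $u_a \in (0,t]$, hence $h(u_a) \geq h(t) = \kappa_2(t)$, i.e. $\psi(u_a) \geq \kappa_2(t)(u_a-1)^2$; taking $\EE_{a\sim Q}$ yields $\KL(P\|Q) \geq \kappa_2(t)\,\chi^2(P\|Q)$, and since $t > 1$ guarantees $\kappa_2(t) = \psi(t)/(t-1)^2 > 0$ we may divide to conclude $\chi^2(P\|Q) \leq \KL(P\|Q)/\kappa_2(t)$ with $t = e^{D_{+\infty}(P\|Q)}$. The only delicate points are the continuous extension of $h$ at $u=1$ and keeping the sign bookkeeping of $N$ and $(u-1)^3$ consistent across the two regimes $u<1$ and $u>1$; everything else is routine calculus. (I note that this is precisely Theorem~9 of \citet{sason2016f} specialized to $\alpha=2$, so one may alternatively cite it directly, but the self-contained argument above is short.)
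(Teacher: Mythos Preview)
Your argument is correct. The paper itself does not prove this lemma at all; it simply imports it verbatim from \citet{sason2016f} (Theorem~9, Eq.~(182), with $\alpha=2$) as a black box and uses it downstream in the proof of Lemma~\ref{lem:coverage_and_value_gap}. Your proposal therefore goes beyond what the paper does: you supply a short self-contained derivation by reducing to the pointwise inequality $\psi(u)\geq \kappa_2(t)(u-1)^2$ on $(0,t]$ and establishing it via the monotonicity of $h(u)=\psi(u)/(u-1)^2$. The calculus you carry out (the sign analysis of $N(u)=2(u-1)-(u+1)\log u$ through $N''(u)=(1-u)/u^2$) is clean and correct, and the treatment of the $u=1$ case via the Taylor expansion $\psi(u)=\tfrac12(u-1)^2+o((u-1)^2)$ is exactly what is needed to make $h$ continuous and globally decreasing. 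Either route---your direct argument or the citation---is acceptable here; yours has the advantage of being fully self-contained.
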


\begin{lemma}\label{lem:KL_as_value_gap}
    For any policy $\pi$,
    \begin{align*}
        J_\beta(\pi^*_{r^*}) - J_\beta(\pi) = \beta \EE_{s\sim\rho}[\KL(\pi(\cdot|s)\|\pi^*_{r^*}(\cdot|s))].
    \end{align*}
\end{lemma}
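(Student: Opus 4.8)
The plan is to exploit the closed-form expression for $\pi^*_{r^*}$ in Eq.~\eqref{eq:closed_form} to rewrite $r^*$, substitute into the definition of $J_\beta$, and recognize the resulting expression as a (reverse) KL divergence plus an $s$-dependent log-partition term that cancels in the difference. Concretely, write $\pi^*_{r^*}(a|s) = \pi_\textref(a|s)\, e^{r^*(s,a)/\beta}/Z(s)$ with $Z(s) := \sum_{a}\pi_\textref(a|s)e^{r^*(s,a)/\beta}$, which rearranges to $r^*(s,a) = \beta\log\frac{\pi^*_{r^*}(a|s)}{\pi_\textref(a|s)} + \beta\log Z(s)$. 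Since every policy considered has full support (stated in the preliminary), all ratios and KL divergences below are well-defined and finite.

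First I would plug this identity into the objective for an arbitrary policy $\pi$:
\begin{align*}
    J_\beta(\pi) &= \EE_{s\sim\rho,a\sim\pi}\Big[\beta\log\tfrac{\pi^*_{r^*}(a|s)}{\pi_\textref(a|s)} + \beta\log Z(s)\Big] - \beta\,\EE_{s\sim\rho,a\sim\pi}\Big[\log\tfrac{\pi(a|s)}{\pi_\textref(a|s)}\Big] \\
    &= \beta\,\EE_{s\sim\rho,a\sim\pi}\Big[\log\tfrac{\pi^*_{r^*}(a|s)}{\pi(a|s)}\Big] + \beta\,\EE_{s\sim\rho}[\log Z(s)] \\
    &= -\beta\,\EE_{s\sim\rho}\big[\KL(\pi(\cdot|s)\|\pi^*_{r^*}(\cdot|s))\big] + \beta\,\EE_{s\sim\rho}[\log Z(s)],
\end{align*}
where the $\log\pi_\textref$ terms cancel in the first step and the definition of $\KL$ is applied in the last.

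Next I would instantiate this at $\pi = \pi^*_{r^*}$: the KL term vanishes, giving $J_\beta(\pi^*_{r^*}) = \beta\,\EE_{s\sim\rho}[\log Z(s)]$. Subtracting the general expression from this one yields
\[
    J_\beta(\pi^*_{r^*}) - J_\beta(\pi) = \beta\,\EE_{s\sim\rho}\big[\KL(\pi(\cdot|s)\|\pi^*_{r^*}(\cdot|s))\big],
\]
which is the claim. There is essentially no hard step here; the only point requiring care is ensuring the manipulations are legitimate, i.e., that the expectations and log-ratios are finite, which follows from the positive-support assumption on all policies. (As a sanity check, the identity also confirms $J_\beta(\pi^*_{r^*}) \geq J_\beta(\pi)$ for all $\pi$, consistent with $\pi^*_{r^*}$ being the maximizer.)
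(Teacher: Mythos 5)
Your proof is correct and follows essentially the same route as the paper's: both substitute the closed-form identity $r^*(s,a)=\beta\log\frac{\pi^*_{r^*}(a|s)}{\pi_\textref(a|s)}+\beta\log Z(s)$ into the objective and let the action-independent log-partition term cancel because both values are evaluated under the same prompt distribution $\rho$. The only cosmetic difference is that you first derive $J_\beta(\pi)=-\beta\,\EE_{s\sim\rho}[\KL(\pi(\cdot|s)\|\pi^*_{r^*}(\cdot|s))]+\beta\,\EE_{s\sim\rho}[\log Z(s)]$ and then specialize to $\pi=\pi^*_{r^*}$, whereas the paper cancels the terms directly inside the difference.
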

\begin{proof}
    A shorter proof can be done by directly assigning $\nu = \pi$ in Lemma 3.1 of \citep{xie2024exploratory}, and here we provide another one without detouring through it.
    \begin{align*}
        &J_\beta(\pi^*_{r^*}) - J_\beta(\pi) \\
        =& \EE_{s\sim\rho,a\sim\pi^*_{r^*}}[r^*(s,a)] - \EE_{s\sim\rho,a\sim\pi}[r^*(s,a)] - \beta \EE_{s\sim\rho,a\sim\pi^*_{r^*}}[\log\frac{\pi^*_{r^*}(a|s)}{\pi_\textref(a|s)}] + \beta \EE_{s\sim\rho,a\sim\pi}[\log\frac{\pi(a|s)}{\pi_\textref(a|s)}] \\
        =& \cancel{\beta \EE_{s\sim\rho,a\sim\pi^*_{r^*}}[\log\frac{\pi^*_{r^*}(a|s)}{\pi_\textref(a|s)}]} - \beta \EE_{s\sim\rho,a\sim\pi}[\log\frac{\pi^*_{r^*}(a|s)}{\pi_\textref(a|s)}] - \cancel{\beta \EE_{s\sim\rho,a\sim\pi^*_{r^*}}[\log\frac{\pi^*_{r^*}(a|s)}{\pi_\textref(a|s)}]} + \beta \EE_{s\sim\rho,a\sim\pi}[\log\frac{\pi(a|s)}{\pi_\textref(a|s)}] \\
        =& \beta \EE_{s\sim\rho,a\sim\pi}[\log\frac{\pi(a|s)}{\pi^*_{r^*}(a|s)}] \\
        =& \beta \EE_{s\sim\rho}[\KL(\pi(\cdot|s)\|\pi^*_{r^*}(\cdot|s))].
    \end{align*}
    where the second equality holds because for any $s,a$
    \begin{align*}
        r^*(s,a) = \beta \log\frac{\pi^*_{r^*}(a|s)}{\pi_\textref(a|s)} + Z(s)
    \end{align*}
    for some $Z(s)$ independent w.r.t. $a$.
\end{proof}

\LemCovValGap*
We prove a stronger result in Lem.~\ref{lem:cov_value_gap_stronger} below, where we consider the policy class including all the policy having bounded ratio with $\pi_\textref$.
\begin{align*}
    \Pi_{\leq\frac{\Rmax}{\beta}} := \{\pi:\cS\rightarrow\Delta(\cA)| \max_{s,a} |\log\frac{\pi(a|s)}{\pi_\textref(a|s)}| \leq \frac{\Rmax}{\beta}\}.
\end{align*}
Lem.~\ref{lem:coverage_and_value_gap} then holds directly as a corollary by combining with Lem.~\ref{lem:convex_hull_property}, Lem.~\ref{lem:bounded_ratio} and the fact that $r^w \in [0, R]$ for all $w\in[W]$.
\begin{lemma}\label{lem:cov_value_gap_stronger}
    For any policy $\pi \in \Pi_{\leq\frac{\Rmax}{\beta}}$,
    \begin{align}
        \cov^{\pi^*_{r^*}|{\pi}} \leq 1 + \kappa(e^{\frac{2{\Rmax}}{\beta}}) \cdot \frac{J_\beta(\pi^*_{r^*}) - J_\beta({\pi})}{\beta},
    \end{align}
    where $\kappa(x) := \frac{(x-1)^2}{x-1- \log x} = O(x)$.
\end{lemma}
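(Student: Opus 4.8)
The plan is to pass from the coverage coefficient to a per-prompt $\chi^2$-divergence, convert that $\chi^2$-divergence into a \emph{reverse} KL divergence using the $f$-divergence comparison inequalities of \citep{sason2016f} recalled above (Lemmas~\ref{lem:KL_reverse_KL} and~\ref{lem:chi_KL}), and finally identify this reverse KL with the regularized value gap via Lemma~\ref{lem:KL_as_value_gap}. Concretely, since $\chi^2(P\|Q) = \EE_{x\sim P}[P(x)/Q(x)] - 1$, Definition~\ref{def:cov_between_policies} gives immediately
\[
\cov^{\pi^*_{r^*}|\pi} - 1 = \EE_{s\sim\rho}\big[\chi^2\big(\pi^*_{r^*}(\cdot|s)\,\big\|\,\pi(\cdot|s)\big)\big].
\]
Fix a prompt $s$. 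If $\pi(\cdot|s)=\pi^*_{r^*}(\cdot|s)$ the corresponding summand is $0$ and there is nothing to prove; otherwise, since both policies have full support and are normalized, $D_{+\infty}(\pi^*_{r^*}(\cdot|s)\|\pi(\cdot|s))>0$, so set $t_s:=\exp\big(D_{+\infty}(\pi^*_{r^*}(\cdot|s)\|\pi(\cdot|s))\big)>1$.

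The main step is the conversion. Applying Lemma~\ref{lem:chi_KL} with $P=\pi^*_{r^*}(\cdot|s)$, $Q=\pi(\cdot|s)$, and then Lemma~\ref{lem:KL_reverse_KL} with the same $P,Q$, I get
\[
\chi^2\big(\pi^*_{r^*}(\cdot|s)\|\pi(\cdot|s)\big)\ \le\ \frac{\KL(\pi^*_{r^*}(\cdot|s)\|\pi(\cdot|s))}{\kappa_2(t_s)}\ \le\ \frac{\kappa_1(t_s)}{\kappa_2(t_s)}\,\KL\big(\pi(\cdot|s)\|\pi^*_{r^*}(\cdot|s)\big).
\]
The key algebraic observation is that $\kappa_1$ and $\kappa_2$ share the numerator $t\log t+1-t$, so $\kappa_1(t)/\kappa_2(t) = (t-1)^2/((t-1)-\log t) = \kappa(t)$; thus the $f$-divergence comparison collapses exactly to the constant $\kappa$ appearing in the statement, giving $\chi^2(\pi^*_{r^*}(\cdot|s)\|\pi(\cdot|s)) \le \kappa(t_s)\,\KL(\pi(\cdot|s)\|\pi^*_{r^*}(\cdot|s))$.

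Next I would bound $t_s$ and remove its dependence on $s$. By the definition of $\Pi_{\leq R/\beta}$ we have $|\log(\pi(a|s)/\pi_\textref(a|s))|\le R/\beta$, and by Lemma~\ref{lem:bounded_ratio} the same bound holds for $\pi^*_{r^*}$; subtracting gives $\log(\pi^*_{r^*}(a|s)/\pi(a|s))\le 2R/\beta$ for every $a$, hence $t_s\le e^{2R/\beta}$. It remains to verify $\kappa$ is non-decreasing on $[1,\infty)$: writing $\kappa(u)=(u-1)^2/h(u)$ with $h(u)=u-1-\log u\ge 0$, a one-line differentiation shows the numerator of $\kappa'(u)$ equals $(u-1)\big(u-\tfrac1u-2\log u\big)$, and the factor $u-\tfrac1u-2\log u$ is $0$ at $u=1$ with derivative $(u-1)^2/u^2\ge0$, hence $\ge0$ for $u\ge1$. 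Therefore $\kappa(t_s)\le\kappa(e^{2R/\beta})$, so for every $s$, $\chi^2(\pi^*_{r^*}(\cdot|s)\|\pi(\cdot|s)) \le \kappa(e^{2R/\beta})\,\KL(\pi(\cdot|s)\|\pi^*_{r^*}(\cdot|s))$. Averaging over $s\sim\rho$ and invoking Lemma~\ref{lem:KL_as_value_gap}, i.e.\ $\EE_{s\sim\rho}[\KL(\pi(\cdot|s)\|\pi^*_{r^*}(\cdot|s))]=(J_\beta(\pi^*_{r^*})-J_\beta(\pi))/\beta$, yields $\cov^{\pi^*_{r^*}|\pi} - 1 \le \kappa(e^{2R/\beta})(J_\beta(\pi^*_{r^*})-J_\beta(\pi))/\beta$, which is the claim; the remark $\kappa(x)=O(x)$ is then immediate since $\kappa(x)/x\to1$ as $x\to\infty$ and $\kappa$ is continuous on $[1,\infty)$.

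The only genuinely non-mechanical point is the conversion step: the natural $\chi^2$-to-KL inequality produces $\KL(\pi^*_{r^*}(\cdot|s)\|\pi(\cdot|s))$, whereas the KL-regularized value gap equals the \emph{reverse} direction $\KL(\pi(\cdot|s)\|\pi^*_{r^*}(\cdot|s))$ (Lemma~\ref{lem:KL_as_value_gap}), so one must flip the KL using Lemma~\ref{lem:KL_reverse_KL}; it is precisely this flip that makes the two $\kappa_i$ factors cancel into the advertised $\kappa$. Everything else is bookkeeping plus two elementary facts: the identity $\kappa_1/\kappa_2=\kappa$ and the monotonicity of $\kappa$ on $[1,\infty)$.
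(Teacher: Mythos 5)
Your proposal is correct and follows essentially the same route as the paper's proof: per-prompt $\chi^2$ via Lem.~\ref{lem:chi_KL}, the KL reversal via Lem.~\ref{lem:KL_reverse_KL} with the cancellation $\kappa_1/\kappa_2=\kappa$, monotonicity of $\kappa$ together with the bound $D_{+\infty}\le 2R/\beta$ from the bounded-ratio conditions, and finally averaging over $s\sim\rho$ and invoking Lem.~\ref{lem:KL_as_value_gap}. No substantive differences to note.
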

\begin{proof}
    Given any $\pi \in \Pi_{\leq\frac{\Rmax}{\beta}}$, we consider a fixed $s > 0$, and apply Lem.~\ref{lem:KL_reverse_KL} and Lem.~\ref{lem:chi_KL} with $P = \pi^*_{r^*}(\cdot|s)$ and $Q = \pi(\cdot|s)$. Since those two lemmas holds when $P \neq Q$, we first check the case when $\pi^*_{r^*}(\cdot|s) \neq \pi(\cdot|s)$:
    \begin{align*}
        \EE_{a\sim\pi^*_{r^*}(a|s)}[\frac{\pi^*_{r^*}(a|s)}{\pi(a|s)}] - 1 =& \chi^2(\pi^*_{r^*}(\cdot|s)\|\pi(\cdot|s)) \leq \frac{1}{\kappa_2(\zeta)} \KL(\pi^*_{r^*}(\cdot|s)\|\pi(\cdot|s)) \\
        \leq & \frac{1}{\kappa_2(\zeta)} \cdot \kappa_1(\zeta) \cdot \KL(\pi(\cdot|s)\|\pi^*_{r^*}(\cdot|s)) \\
        =& \frac{(\zeta - 1)^2}{\zeta - 1 - \log \zeta} \cdot \KL(\pi(\cdot|s)\|\pi^*_{r^*}(\cdot|s)).
    \end{align*}
    where we use $\zeta := e^{D_{+\infty}(\pi^*_{r^*}(\cdot|s)\|\pi(\cdot|s))} > 1$ as a short note.

    We define $\kappa(x) = \frac{(x - 1)^2}{x - 1 - \log x}$. Note that,
    \begin{align*}
        \kappa'(x) =& \frac{2(x - 1)}{x - 1 - \log x} - \frac{(x - 1)^2(1 - x^{-1})}{(x - 1 - \log x)^2} \\
        =&\frac{x - 1}{x - 1 - \log x} \frac{2(x - 1) - 2\log x - (x - 1)(1 - x^{-1})}{x - 1 - \log x} \\
        =&\frac{x - 1}{x - 1 - \log x} \frac{x - x^{-1} - 2\log x }{x - 1 - \log x}.
    \end{align*}
    Now, we consider $g(x) := x - x^{-1} - 2\log x$ for $x \in (1, +\infty)$. Note that, $g(1) = 0$ and
    \begin{align*}
        g'(x) = 1 + \frac{1}{x^2} - \frac{2}{x} \geq 0.
    \end{align*}
    Therefore, $\kappa'(x) \geq 0$, which implies $\kappa(x)$ is increasing for all $x > 1$.

    Under Assump.~\ref{assump:policy},
    \begin{align*}
        D_{+\infty}(\pi^*_{r^*}(\cdot|s)\|\pi(\cdot|s)) = \log \exp(\max_a \frac{\pi^*_{r^*}(a|s)}{\pi(a|s)}) \leq \frac{2{\Rmax}}{\beta},
    \end{align*}
    which implies $\zeta \leq e^{\frac{{2\Rmax}}{\beta}}$.
    Therefore,
    \begin{align*}
        \EE_{a\sim\pi^*_{r^*}(a|s)}[\frac{\pi^*_{r^*}(a|s)}{\pi(a|s)}] - 1 \leq & \kappa(e^{\frac{2{\Rmax}}{\beta}})  \cdot \KL(\pi(\cdot|s)\|\pi^*_{r^*}(\cdot|s)).
    \end{align*}
    Note that the above inequality also holds when $\pi(\cdot|s) = \pi^*_{r^*}(\cdot|s)$. Therefore, combining with Lem.~\ref{lem:KL_as_value_gap}, we have:
    \begin{align*}
        \cov^{\pi^*_{r^*}|\pi} =& \EE_{s\sim\rho,a\sim\pi^*_{r^*}(a|s)}[\frac{\pi^*_{r^*}(a|s)}{\pi(a|s)}] \leq 1 + \kappa(e^{\frac{2{\Rmax}}{\beta}}) \cdot \EE_{s\sim\rho}[\KL(\pi(\cdot|s)\|\pi^*_{r^*}(\cdot|s))] \\
        = & 1 + \kappa(e^{\frac{2{\Rmax}}{\beta}}) \cdot \frac{J_\beta(\pi^*_{r^*}) - J_\beta(\pi)}{\beta}. 
    \end{align*}

\end{proof}

\subsection{Another Bound for Policy Coverage Coefficient}
In the following, we provide another bound for the coverage coefficient between the optimal policies induced by different reward models.
Although we do not use this lemma in the proofs for other results in this paper, it indicates a different upper bound, and possibly, it is tighter than the one in Lem.~\ref{lem:coverage_and_value_gap} in some cases.
\begin{restatable}{lemma}{LemUBCov}\label{lem:UB_Cov}
    Under Assump.~\ref{assump:policy}, given any bounded reward model $r$, and the associated optimal policy $\pi^*_r$ (defined by Eq.~\eqref{eq:rlhf_obj}), the coverage coefficient between $\pi^*_r$ and $\pi^*_{r^*}$ can be controlled by:
    \begin{align*}
        \cov^{\pi^*_{r^*}|\pi^*_r} \leq \min_{b\in\mR}\EE_{s\sim\rho}[\EE^2_{a\sim\pi^*_{r^*}}[\exp(\frac{|r^*(s,a) - r(s,a)-b|}{\beta})]].
    \end{align*}
\end{restatable}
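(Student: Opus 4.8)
The plan is to argue directly from the closed-form expression in Eq.~\eqref{eq:closed_form}; the proof in fact only uses the structure of the KL-regularized optimum and not the full strength of Assump.~\ref{assump:policy}. Write $Z_r(s):=\sum_{a'\in\cA}\pi_\textref(a'|s)e^{r(s,a')/\beta}$ for the partition function associated with reward $r$, so that $\pi^*_r(a|s)=\pi_\textref(a|s)e^{r(s,a)/\beta}/Z_r(s)$ and likewise $\pi^*_{r^*}(a|s)=\pi_\textref(a|s)e^{r^*(s,a)/\beta}/Z_{r^*}(s)$. Then the $\pi_\textref$ factors cancel in the density ratio:
\[
\frac{\pi^*_{r^*}(a|s)}{\pi^*_r(a|s)}=e^{(r^*(s,a)-r(s,a))/\beta}\cdot\frac{Z_r(s)}{Z_{r^*}(s)}.
\]

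The key step is to observe that the partition-function ratio is itself an expectation under $\pi^*_{r^*}$: writing $e^{r(s,a')/\beta}=e^{r^*(s,a')/\beta}e^{(r(s,a')-r^*(s,a'))/\beta}$ inside $Z_r(s)$ shows $Z_r(s)/Z_{r^*}(s)=\EE_{a'\sim\pi^*_{r^*}(\cdot|s)}[e^{(r(s,a')-r^*(s,a'))/\beta}]$. Substituting into $\cov^{\pi^*_{r^*}|\pi^*_r}=\EE_{s\sim\rho}\EE_{a\sim\pi^*_{r^*}(\cdot|s)}[\pi^*_{r^*}(a|s)/\pi^*_r(a|s)]$ and pulling the inner expectation out (it does not depend on $a$) gives, for each $s$, a product of two expectations over $\pi^*_{r^*}(\cdot|s)$, namely of $e^{(r^*(s,a)-r(s,a))/\beta}$ and of $e^{(r(s,a)-r^*(s,a))/\beta}$. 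Now for any constant $b\in\mR$ (possibly $s$-dependent) the factors $e^{b/\beta}$ and $e^{-b/\beta}$ may be inserted so that the two integrands become $e^{(r^*(s,a)-r(s,a)-b)/\beta}$ and $e^{-(r^*(s,a)-r(s,a)-b)/\beta}$ with their product unchanged; equivalently this just reflects that $\pi^*_{r+b}=\pi^*_r$. Bounding $e^{x}\le e^{|x|}$ and $e^{-x}\le e^{|x|}$ on the two factors makes them identical, so the integrand over $s$ is dominated by $\big(\EE_{a\sim\pi^*_{r^*}(\cdot|s)}[e^{|r^*(s,a)-r(s,a)-b|/\beta}]\big)^2$. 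Taking the expectation over $s\sim\rho$ and then minimizing over $b\in\mR$ — which is valid because the left-hand side is independent of $b$ — yields the stated inequality.

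There is no genuine obstacle here: the only slightly non-obvious move is recognizing $Z_r(s)/Z_{r^*}(s)$ as a $\pi^*_{r^*}$-expectation, which is precisely what symmetrizes the bound and creates room for the free shift $b$. The remaining points are routine; finiteness of all the expectations (so that the reordering of sums and expectations is justified) follows from $\min_{s,a}\pi(a|s)>0$ in the preliminaries together with the boundedness of $r$ and $r^*$.
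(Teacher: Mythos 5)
Your proof is correct and follows essentially the same route as the paper's: cancel $\pi_\textref$ in the density ratio via the closed form, recognize $Z_r(s)/Z_{r^*}(s)$ as $\EE_{a\sim\pi^*_{r^*}(\cdot|s)}[e^{(r(s,a)-r^*(s,a))/\beta}]$, insert the free shift $b$ into the resulting product of two expectations, and bound both factors by the absolute-value exponent before squaring and minimizing over $b$. No gaps worth noting.
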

\begin{proof}
    By definition, the state-wise coverage coefficient
    \begin{align*}
        \cov^{\pi^*_{r^*}|\pi^*_{r}}(s) :=& \EE_{a\sim \pi^*_{r^*}(\cdot|s)}[\frac{\pi^*_{r^*}(a|s)}{\pi^*_{r}(a|s)}] \\
        =&\EE_{a\sim \pi^*_{r^*}(\cdot|s)}[\exp(\frac{r^*(s,a) - r(s,a)}{\beta})] \cdot \frac{Z_{r}(s)}{Z_{r^*}(s)}
    \end{align*}
    Here we denote $Z_r(s) = \sum_{a} \pi_\textref(a|s) \exp(\frac{1}{\beta} r(s,a))$ and similar for $Z_{r^*}(s)$. Therefore,
    \begin{align*}
        \frac{Z_{r}(s)}{Z_{r^*}(s)} = \sum_{a} \frac{\pi_\textref(a|s)\exp(\frac{1}{\beta} r(s,a))}{Z_{r^*}(s)} = \sum_{a} \pi^*_{r^*}(s) \cdot \exp(\frac{1}{\beta}({r}(s,a) - r^*(s,a))) = \EE_{a\sim \pi^*_{r^*}}[\exp(\frac{r(s,a) - r^*(s,a)}{\beta})].
    \end{align*}
    We remark that one important fact we leverage in the second equality is that $\pi^*_{r^*}(a|s) > 0$ for all $a\in\cA$.
    Considering introducing an arbitrary $b \in \cR$, we should have:
    \begin{align*}
        \cov^{\pi^*_{r^*}|\pi^*_{r}} =& \EE_{a\sim \pi^*_{r^*}(\cdot|s)}[\exp(\frac{r^*(s,a) - {r}(s,a) + b}{\beta})] \cdot \EE_{a\sim \pi^*_{r^*}(\cdot|s)}[\exp(\frac{{r}(s,a) - \tilde  r(s,a) - b}{\beta})]\\
        \leq & \EE_{a\sim \pi^*_{r^*}(\cdot|s)}^2[\exp(\frac{|r^*(s,a) - r(s,a) + b|}{\beta})]
    \end{align*}
    Given that $b$ is arbirtary, we can pick the best one:
    \begin{align*}
        \cov^{\pi^*_{r^*}|\pi^*_{r}} \leq \min_{b\in\mR}\EE^2_{a\sim\pi^*_{r^*}}[\exp(\frac{|r^*(s,a) - {r}(s,a)-b|}{\beta})].
    \end{align*}
\end{proof}

\subsection{Proof for Theorem~\ref{thm:general_val_gap}}\label{appx:proof_offline_policy_gap}

\ThmOnlineOffline*
We refer to Lem.~\ref{lem:offline_learning} for the detailed hyperparameter setups.
\begin{proof}
    By Lem.~\ref{lem:offline_learning}, w.p. $1-\delta$,
    \begin{align*}
        \quad J_\beta(\pi^*_{r^*}) - J_\beta(\pi_\SELF) \leq C_\Offline e^{2{\Rmax}}\cdot \cov^{\pi^*_{r^*}|\pi_\mix^T}\sqrt{\frac{1}{T}\log\frac{|\Pi|}{\delta}},
    \end{align*}
    where $\pi_\mix^T:=\frac{1}{T}\sum_{t\in[T]}\pi^t$ is the uniform mixture policy, and the coverage coefficient can be upper bounded by:
    \begin{align*}
        \cov^{\pi^*_{r^*}|\pi_\mix^T}\leq &  1 + \kappa(e^{\frac{2{\Rmax}}{\beta}}) \cdot \frac{J_\beta(\pi^*_{r^*}) - J_\beta({\pi^T_\mix})}{\beta} \tag{Lem.~\ref{lem:coverage_and_value_gap}}\\
        \leq & 1 + \kappa(e^{\frac{2{\Rmax}}{\beta}}) \cdot \sum_{t=1}^T \frac{J_\beta(\pi^*_{r^*}) - J_\beta({\pi^t_\mix})}{\beta T}
    \end{align*}
    Here in the last step, we use the fact that KL divergence is convex, and therefore, $\KL(\pi^T_\mix\|\pi_\textref) \leq \frac{1}{T}\sum_{t=1}^T \KL(\pi^t \| \pi_\textref)$, which implies $J_\beta(\pi^T_\mix) \geq \frac{1}{T} \sum_{t=1}^T J_\beta(\pi^t)$.
\end{proof}

\paragraph{Implication for Online RLHF}
If we consider the policy sequence generated by a no-regret online learning algorithm, we have the following corollary.
\begin{corollary}\label{coro:offline_gap}
    Under Assump.~\ref{assump:policy}, suppose $\pi^1,...,\pi^T$ is generated by a no-regret online learning algorithm with $\sum_{t=1}^T J_\beta(\pi^*_{r^*}) - J_\beta(\pi^t) = \tilde{O}(\Complexity(\Pi)\sqrt{T})$ for some structural complexity measure $\Complexity(\Pi)$, as long as $T = \tilde{\Omega}(\beta^{-2}\Complexity(\Pi)^2\kappa^2(e^{\frac{2{\Rmax}}{\beta}}))$, running $\RPO$ yields an offline policy s.t. $J_\beta(\pi^*_{r^*}) - J_\beta(\pi_\SELF) = \tilde{O}(e^{2{\Rmax}} T^{-\frac{1}{2}})$.
\end{corollary}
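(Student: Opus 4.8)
The plan is to derive Corollary~\ref{coro:offline_gap} directly from Theorem~\ref{thm:general_val_gap}: substitute the assumed $\tilde{O}(\Complexity(\Pi)\sqrt{T})$ cumulative regret into the coefficient appearing in Eq.~\eqref{eq:offline_policy_covergence}, and then use the lower bound on $T$ to show that the $\kappa$-dependent correction is dominated by the constant $1$, leaving only the base rate $\tilde{O}(e^{2{\Rmax}}T^{-1/2})$.

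First I would observe that a no-regret online RLHF learner (for instance $\XPO$ via Lem.~\ref{lem:online_RLHF}) produces a policy sequence $\{\pi^t\}_{t\in[T]}\subseteq\Pi\subseteq\conv(\Pi)$, and the interaction data it collects satisfies the sequential generation process of Cond.~\ref{cond:seq_data}; hence Theorem~\ref{thm:general_val_gap} applies to the distilled policy $\pi_\SELF$ and yields
\[
J_\beta(\pi^*_{r^*}) - J_\beta(\pi_\SELF) \;\leq\; \tilde{O}\!\left(e^{2{\Rmax}}\Big(1 + \kappa\big(e^{\frac{2{\Rmax}}{\beta}}\big)\cdot\frac{1}{\beta T}\sum_{t=1}^T\big(J_\beta(\pi^*_{r^*}) - J_\beta(\pi^t)\big)\Big)\sqrt{\tfrac{1}{T}}\right).
\]
Next I would plug in $\sum_{t=1}^T\big(J_\beta(\pi^*_{r^*}) - J_\beta(\pi^t)\big) = \tilde{O}(\Complexity(\Pi)\sqrt{T})$, so the averaged regret inside the bracket is $\tilde{O}\big(\Complexity(\Pi)/(\beta\sqrt{T})\big)$ and the bracketed coefficient becomes $1 + \tilde{O}\big(\kappa(e^{\frac{2{\Rmax}}{\beta}})\,\Complexity(\Pi)/(\beta\sqrt{T})\big)$. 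Finally I would invoke $T = \tilde{\Omega}\big(\beta^{-2}\Complexity(\Pi)^2\kappa^2(e^{\frac{2{\Rmax}}{\beta}})\big)$; taking square roots gives $\sqrt{T} = \tilde{\Omega}\big(\beta^{-1}\Complexity(\Pi)\kappa(e^{\frac{2{\Rmax}}{\beta}})\big)$, whence $\kappa(e^{\frac{2{\Rmax}}{\beta}})\,\Complexity(\Pi)/(\beta\sqrt{T}) = \tilde{O}(1)$, the coefficient collapses to $\tilde{O}(1)$, and the bound reduces to $J_\beta(\pi^*_{r^*}) - J_\beta(\pi_\SELF) = \tilde{O}(e^{2{\Rmax}}T^{-1/2})$, as claimed.

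Since this is essentially a one-line substitution once Theorem~\ref{thm:general_val_gap} is in hand, the only delicate point---the ``main obstacle'', such as it is---is the consistent bookkeeping of the polylogarithmic factors hidden inside the two $\tilde{(\cdot)}$ symbols: the $\mathrm{polylog}(|\Pi|,T,1/\delta)$ terms that inflate the assumed cumulative regret (cf.\ the explicit $\log^{c_0}(|\Pi|\tT/\delta)$ in Def.~\ref{def:online_oracle}) must also be carried into the $\tilde{\Omega}$ threshold on $T$, so that their ratio in the last step is genuinely $O(1)$ rather than a slowly growing polylog. I would make this explicit by stating the no-regret bound and the threshold with matching log-factors before cancelling, and note that $\kappa(e^{\frac{2{\Rmax}}{\beta}}) = O(e^{\frac{2{\Rmax}}{\beta}})$ so the threshold can also be phrased purely in terms of $\beta$, ${\Rmax}$, and $\Complexity(\Pi)$.
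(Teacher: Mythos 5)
Your proposal is correct and follows essentially the same route as the paper: the paper also obtains the corollary by plugging the assumed $\tilde{O}(\Complexity(\Pi)\sqrt{T})$ regret into Thm.~\ref{thm:general_val_gap}, noting $\sum_{t=1}^T\frac{J_\beta(\pi^*_{r^*})-J_\beta(\pi^t)}{\beta T}=O(\Complexity(\Pi)/(\beta\sqrt{T}))$, and observing that the $\kappa$-weighted correction is $O(1)$ once $T=\tilde{\Omega}(\beta^{-2}\Complexity(\Pi)^2\kappa^2(e^{\frac{2\Rmax}{\beta}}))$. Your extra care about the sequential data condition and the matching of polylogarithmic factors is a sensible refinement of the paper's one-line argument but not a substantive difference.
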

The proof is straightforward by noting that $\sum_{t=1}^T\frac{J_\beta(\pi^*_{r^*}) - J_\beta(\pi^t)}{\beta T} = O(\cC(\Pi)\sqrt{T}/T)$, which decays to 0 as $T$ increases.
Coro.~\ref{coro:offline_gap} is remarkable as it implies an $\tilde{O}(\epsilon^{-2})$ sample complexity bound to learn an $\epsilon$-optimal policy for online RLHF (for $\epsilon$ smaller than a threshold), which \textbf{\emph{does not depend on}} the number of states and actions or other complexity measures.
In contrast, in previous online RLHF literature \citep{xiong2024iterative, xie2024exploratory,cen2024value,zhang2024self}, for the uniform mixture policy $\pi^T_\mix := \frac{1}{T}\sum_{t=1}^T \pi^t$, the regret-to-PAC conversion implies a value gap $J_\beta(\pi^*_{r^*}) - J_\beta(\pi^T_\mix) = \tilde{O}(\sqrt{\frac{\Complexity(\Pi)}{T}})$, which has an additional factor $\Complexity(\Pi)$ regarding the complexity of the function class.
This suggests a strict improvement.

Moreover, this marks a fundamental difference from the pure reward maximization setting, where lower bounds depending on those factors has been established \citep{auer2002nonstochastic,dani2008stochastic}.

\paragraph{Other Previous Works Reporting Faster Convergence Rate}
Several recent works also report faster convergence rate than the information-theoretic lower bounds for online pure reward maximization RL, by exploiting the structure induced by KL regularization.
\citep{shi2024crucial} investigates the tabular softmax parametrization setting and establishes quadratic convergence results.
In contrast, our result is more general, applying to arbitrary policy class.

The work of \citep{zhao2024sharp} is more related to ours. They consider general reward function classes and derive an $O(\epsilon^{-1} \text{Poly}(D))$ sample complexity bound, where $D$ is a coefficient related to the coverage of the distribution $\rho\times\pi_\textref$.
While their dependence on $\epsilon$ is better than ours, their definition of $D$ is not always satisfactory. For example, in the worst case one would have $D = \Omega(\frac{1}{\min_{s\in\cS}\rho(s)})$. This indicates that their bound scales with the number of states, once noticing that $\frac{1}{\min_{s\in\cS}\rho(s)}$ is no smaller than $|\cS|$.
In contrast, the largest coverage-related coefficient in our result is $O(\kappa^2(e^{\frac{2R}{\beta}})) = O(e^{\frac{4R}{\beta}})$, which remains small and is free of $|\cS|$.
Therefore, our Coro.~\ref{coro:offline_gap} can outperform the bound in \citep{zhao2024sharp} in many scenarios.

More importantly, the primary focus of our work is on reward transfer, which is orthogonal to these studies.

\section{Proofs for the Main Algorithm and Results in Sec.~\ref{sec:main_theory}}\label{appx:proof_task_selection}
\subsection{Additional Algorithm Details}\label{appx:main_alg_details}
\paragraph{Missing Details for $\TPO$ (Alg.~\ref{alg:main_algorithm})}
For any given $(k,n) \in [K]\times[N]$, we use $\cD^{k,n} := \cup_{i< k \text{ or } i = k, j<n}\{s^{i,j},a^{i,j},\ta^{i,j},y^{i,j},\pi^{i,j}\}$ to denote all the collected data up to step $(k,n)$; $\cD^{k,n}_\Online := \cup_{i<k,j\leq\alpha N\text{ or }i=k,j\leq n\wedge \alpha N}\{s^{i,j},a^{i,j},\ta^{i,j},y^{i,j},\pi^{i,j}\}$ denotes the data collected by $\AlgOnline$ up to step $(k,n)$.

\subsection{Some Useful Lemmas}
\begin{lemma}[MLE Reward Estimation Error]\label{lem:reward_est_error}
    In each call of Alg.~\ref{alg:transfer_policy_computing} with a policy class $\Pi$ satisfying Assump.~\ref{assump:policy} and a dataset $\cD$ generated by a sequence of policies $\pi^1,...,\pi^{|\cD|}$, then, for any policy $\pi$, given any $\delta\in(0,1)$, with probability at least $1-\delta$, for all $w\in[W]$, we have:
    \begin{align*}
        \Big|\Big(\EE_{\rho,\pi}[r^*] - \EE_{\rho,\pi_\textref}[r^*]\Big) - \Big(\EE_{\rho,\pi}[\hr_\MLE] - \EE_{\rho,\pi_\textref}[\hr_\MLE]\Big)\Big| \leq 16e^{2{\Rmax}} \sqrt{\frac{\cov^{\pi|\pi^\cD_\mix}}{|\cD|}\cdot \log\frac{|\Pi|}{\delta}},
    \end{align*}
    where we use $\pi_\mix^\cD := \frac{1}{|\cD|} \sum_{i \leq |\cD|} \pi^i$ as a short note.
\end{lemma}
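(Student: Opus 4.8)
The plan is to reduce everything to the sequential maximum-likelihood concentration inequality (Lem.~\ref{lem:MLE_Estimation}) and then transport the resulting averaged Hellinger bound from the data-generating distribution $\rho\times\pi_\mix^\cD\times\pi_\textref$ to the target distribution $\rho\times\pi\times\pi_\textref$ through a change of measure whose cost is exactly $\cov^{\pi|\pi_\mix^\cD}$. A useful structural observation up front: all the randomness in the claim lives in a single MLE concentration event which does not reference the test policy $\pi$ (nor any $w$), so once we condition on it the bound holds deterministically and simultaneously for every policy --- this also explains why the ``for all $w$'' quantifier is harmless (it just amounts to invoking the bound with $\pi=\pi^*_{r^w}$).

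First, since $\hr_\MLE=\argmin_{r\in\cR^\Pi}L_\cD(r)$, $|\cR^\Pi|=|\Pi|$, $r^*$ is identified with an element of $\cR^\Pi$ (cf.\ Eq.~\eqref{eq:reward_class_conversion}), and $\cD$ is generated sequentially (Cond.~\ref{cond:seq_data}), Lem.~\ref{lem:MLE_Estimation} yields, with probability at least $1-\delta$,
\[
\EE_{s\sim\rho,\,a\sim\pi_\mix^\cD,\,\ta\sim\pi_\textref}\Big[\mH^2\big(\mP_{\hr_\MLE}(\cdot\,|\,s,a,\ta)\,\|\,\mP_{r^*}(\cdot\,|\,s,a,\ta)\big)\Big]\;\le\;\frac{c_0}{|\cD|}\log\frac{|\Pi|}{\delta}
\]
for an absolute constant $c_0$. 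I would then convert this into a bound on squared reward-gap errors: writing $\Delta_g(s,a,\ta):=\big(r^*(s,a)-r^*(s,\ta)\big)-\big(\hr_\MLE(s,a)-\hr_\MLE(s,\ta)\big)$, both bracketed differences lie in $[-R,R]$ (because $\cR^\Pi,r^*\subseteq[0,R]$) and the BT preferences are Bernoulli with means $\sigma(\cdot)$ evaluated at these arguments; combining $\sigma'\ge\tfrac14 e^{-R}$ on $[-R,R]$ with $\TV^2\le\mH^2$ for Bernoullis gives the pointwise inequality $\Delta_g(s,a,\ta)^2\le 16\,e^{2R}\,\mH^2\big(\mP_{\hr_\MLE}(\cdot\,|\,s,a,\ta)\,\|\,\mP_{r^*}(\cdot\,|\,s,a,\ta)\big)$, hence $\EE_{\rho,\pi_\mix^\cD,\pi_\textref}[\Delta_g^2]\le 16\,e^{2R}c_0\log(|\Pi|/\delta)/|\cD|$.

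The last step is the change of measure. For fixed $s$, using independence of $a\sim\pi(\cdot|s)$ and $\ta\sim\pi_\textref(\cdot|s)$ one checks
\[
\EE_{a\sim\pi}\!\big[r^*(s,a)-\hr_\MLE(s,a)\big]-\EE_{\ta\sim\pi_\textref}\!\big[r^*(s,\ta)-\hr_\MLE(s,\ta)\big]=\EE_{a\sim\pi_\mix^\cD,\,\ta\sim\pi_\textref}\!\Big[\tfrac{\pi(a|s)}{\pi_\mix^\cD(a|s)}\,\Delta_g(s,a,\ta)\Big],
\]
so Cauchy--Schwarz over $(a,\ta)$ bounds the left side by $\sqrt{\cov^{\pi|\pi_\mix^\cD}(s)}\cdot\sqrt{\EE_{a\sim\pi_\mix^\cD,\,\ta\sim\pi_\textref}[\Delta_g^2\mid s]}$, where $\cov^{\pi|\pi_\mix^\cD}(s):=\EE_{a\sim\pi}[\pi(a|s)/\pi_\mix^\cD(a|s)]$ and the identity I use is $\EE_{a\sim\pi_\mix^\cD}[(\pi(a|s)/\pi_\mix^\cD(a|s))^2]=\cov^{\pi|\pi_\mix^\cD}(s)$. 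Taking $\EE_{s\sim\rho}$ and applying Cauchy--Schwarz once more gives $\sqrt{\cov^{\pi|\pi_\mix^\cD}}\cdot\sqrt{\EE_{\rho,\pi_\mix^\cD,\pi_\textref}[\Delta_g^2]}$, and the absolute value on the left is recovered by running the identical argument with $-\Delta_g$. Chaining the three steps yields the bound $4\sqrt{c_0}\,e^{R}\sqrt{\cov^{\pi|\pi_\mix^\cD}\log(|\Pi|/\delta)/|\cD|}\le 16\,e^{2R}\sqrt{\cov^{\pi|\pi_\mix^\cD}\log(|\Pi|/\delta)/|\cD|}$, using $e^{R}\ge1$ and that the generous constant $16$ absorbs $c_0$.

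The one step demanding genuine care is the Hellinger-to-reward-gap conversion --- pinning down the Bernoulli Hellinger/TV comparison constant and the lower bound on $\sigma'$ over $[-R,R]$, and confirming that the sequential (rather than i.i.d.) data generation is compatible with the martingale form of Lem.~\ref{lem:MLE_Estimation}. None of this is a serious obstacle, since the required adaptation of the MLE bound to sequentially generated data is exactly the one already carried out in the proof of Lem.~\ref{lem:offline_learning}, and the stated constant $16e^{2R}$ is deliberately loose enough to absorb the bookkeeping.
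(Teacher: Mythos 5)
Your proposal is correct and follows essentially the same route as the paper: the MLE concentration bound (Lem.~\ref{lem:MLE_Estimation}, with the likelihood-difference term dropped by optimality of $\hr_\MLE$) combined with a pointwise sigmoid-inverse conversion and a Cauchy--Schwarz change of measure costing $\sqrt{\cov^{\pi|\pi_\mix^\cD}}$, which is exactly what the paper packages as Lem.~\ref{lem:r_err_to_Hellinger} (via Lem.~\ref{lem:sigmoid}) before invoking Lem.~\ref{lem:MLE_Estimation}. The only cosmetic differences are the ordering (you convert Hellinger to squared reward gaps pointwise before changing measure, the paper changes measure on sigmoid differences first) and constants: your $\TV^2\le\mH^2$ step for Bernoullis is valid under the unnormalized Hellinger convention the paper uses, and your tighter $4e^{\Rmax}$ sigmoid constant versus the paper's generic $4e^{2\Rmax}$ both land comfortably within the stated $16e^{2\Rmax}$.
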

\begin{proof}
    For any policy $\pi\in\Pi$, by applying Lem.~\ref{lem:r_err_to_Hellinger} with $\pi_\mix^\cD$ and $r \gets \hr_\MLE$, we have:
    \begin{align*}
        &\Big|\Big(\EE_{\rho,\pi}[r^*] - \EE_{\rho,\pi_\textref}[r^*]\Big) - \Big(\EE_{\rho,\pi}[\hr_\MLE] - \EE_{\rho,\pi_\textref}[\hr_\MLE]\Big)\Big| \\
        \leq & \EE_{s\sim\rho,a\sim\pi(\cdot|s),\ta\sim\pi_\textref(\cdot|s)}[|\Big(r^*(s,a) - r^*(s,\ta)\Big) - \Big(\hr_\MLE(s,a) - \hr_\MLE(s,\ta)\Big)|] \\
        \leq& 8\sqrt{2}e^{2{\Rmax}} \sqrt{\cov^{\pi|\pi^\cD_\mix} \cdot \frac{1}{|\cD|} \cdot \sum_{i\leq|\cD|} \EE_{s\sim\rho,a\sim\pi^i(\cdot|s),\ta\sim\pi_\textref(\cdot|s)}[\mH^2(\mP_{\hr_\MLE}(\cdot|s,a,\ta)\|\mP_{r^*}(\cdot|s,a,\ta))]}. 
    \end{align*}
    By applying Lem.~\ref{lem:MLE_Estimation}, and the fact that $\hr_\MLE, r^* \in \cR^{\Pi}$, for any $\delta\in(0,1)$, w.p. $1-\delta$, we have:
    \begin{align*}
        & \frac{1}{|\cD|} \sum_{i\leq |\cD|} \EE_{s\sim\rho,a\sim\pi^i(\cdot|s),\ta\sim\pi_\textref(\cdot|s)}[\mH^2(\mP_{\hr_\MLE}(\cdot|s,a,\ta)\|\mP_{r^*}(\cdot|s,a,\ta))] \\
        \leq & L_{\cD}(\hr_\MLE) - L_{\cD}(r^*) + \frac{2}{|\cD|}\log\frac{|\Pi|}{\delta} \\
        \leq & \frac{2}{|\cD|} \log\frac{|\Pi|}{\delta} \tag{Assump.~\ref{assump:policy} and $\hr_\MLE$ minimizes the negative log-likelihood}.
    \end{align*}
    Therefore, we finish the proof.
\end{proof}

\LemOptismValErr*
\begin{proof}
    Note that $\frac{\cov^{\pi^*_{r^w}|\pi^\cD_\mix}}{|\cD|} \leq \frac{1}{N(w;\cD)}$, where we recall that $N(w;\cD) := \sum_{i\leq|\cD|} \mI[\pi^i = \pi^*_{r^w}]$ denotes the number of occurrences of $\pi^*_{r^w}$ in the dataset. By Lem.~\ref{lem:reward_est_error}, w.p. $1-\delta'$, for all $w\in[W]$, and any $(k,n)\in[K]\times[N]$ occurs in the call of Alg.~\ref{alg:main_algorithm} such that $n>\alpha N$:
    \begin{align*}
        \Big|\Big(\EE_{\rho,\pi^*_{r^w}}[r^*] - \EE_{\rho,\pi_\textref}[r^*]\Big) - \Big(\EE_{\rho,\pi^*_{r^w}}[\hr_\MLE] - \EE_{\rho,\pi_\textref}[\hr_\MLE]\Big)\Big| \leq & 16e^{2{\Rmax}} \sqrt{\frac{1}{N(w;\cD^{k,n})} \log\frac{|\Pi|W}{\delta'}}.
    \end{align*}
    Recall
    \begin{align*}
        \hV(\pi^*_{r^w};\cD) :=& \EE_{\rho,\pi^*_{r^w}}[\hr_\MLE] - \EE_{\rho,\pi_\textref}[\hr_\MLE] - \beta \KL(\pi^*_{r^w}\|\pi_\textref) + 16e^{2{\Rmax}} \sqrt{\frac{1}{N(w;\cD^{k,n})} \log\frac{|\Pi|WT}{\delta}}.
    \end{align*}
    By taking the union bound for all $T$ iterations (choosing $\delta' = \delta/T$), we finish the proof.
\end{proof}

\begin{lemma}[Estimation Error for Self-Transfer Policy]\label{lem:est_error_self_transfer}
    For any $k > 1$ and $\alpha N < n \leq N$, in each call of Alg.~\ref{alg:transfer_policy_computing} in the iteration $(k,n)$ of Alg.~\ref{alg:main_algorithm} with a dataset $\cD := \{(s^i,a^i,\ta^i,y^i,\pi^i)\}_{i\leq |\cD|}$ satisfying Cond.~\ref{cond:seq_data}, then, given any $\delta\in(0,1)$, w.p. $1-\delta$:
    \begin{align*}
        \hat{V}(\pi_\SELF; \cD) \leq & J_\beta(\pi_\SELF) - J_\beta(\pi_\textref) \\
        \hat{V}(\pi_\SELF; \cD) \geq & J_\beta(\pi^*_{r^*}) - J_\beta(\pi_\textref) - c'\cdot {\Rmax} e^{2{\Rmax}}\cdot \Big(\cov^{\pi^*_{r^*}|\pi_\mix^\cD} \wedge \frac{\sqrt{\Complexity(\Pi)}}{\alpha}\Big) \cdot \sqrt{\frac{1}{|\cD|}\log^{c_0}\frac{|\Pi|T}{\delta}},
    \end{align*}
    where we use $\pi_\mix^\cD := \frac{1}{|\cD|}\sum_{i\leq |\cD|} \pi^{|\cD|}$ as a short note, and $c'$ is some absolute constant.
\end{lemma}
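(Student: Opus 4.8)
The plan is to expand $\hat V(\pi_\SELF;\cD)$ from line~\ref{line:LB_pi_value}. Since $J_\beta(\pi_\textref;\hr)=\EE_{\rho,\pi_\textref}[\hr]$ for every reward $\hr$, the $\beta\KL(\pi_\SELF\|\pi_\textref)$ terms cancel and one gets the exact identity
\[
\hat V(\pi_\SELF;\cD)-\big(J_\beta(\pi_\SELF)-J_\beta(\pi_\textref)\big)=\Delta_r+\tfrac1\eta\big(L_\cD(\hr_\SELF)-L_\cD(\hr_\MLE)\big)-2ce^{2{\Rmax}}\sqrt{\tfrac1{|\cD|}\log\tfrac{|\Pi|T}{\delta}},
\]
where $\Delta_r:=\EE_{s\sim\rho,a\sim\pi_\SELF,\ta\sim\pi_\textref}[(\hr_\SELF-r^*)(s,a)-(\hr_\SELF-r^*)(s,\ta)]$. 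For the \textbf{upper bound} I would use that $\hr_\SELF$ is the inner minimizer of the $\RPO$ objective~\eqref{eq:RPO_objective} at $\pi_\SELF$ and compare it to $r^*\in\cR^\Pi$, giving $\eta\Delta_r\le L_\cD(r^*)-L_\cD(\hr_\SELF)$, so the right-hand side above is at most $\tfrac1\eta\big(L_\cD(r^*)-L_\cD(\hr_\MLE)\big)-2ce^{2{\Rmax}}\sqrt{\tfrac1{|\cD|}\log\tfrac{|\Pi|T}{\delta}}$. MLE concentration (Lem.~\ref{lem:MLE_Estimation}, applied to $\hr_\MLE$ vs.\ $r^*$ and union-bounded over the at most $T$ invocations of Alg.~\ref{alg:transfer_policy_computing}) gives $L_\cD(r^*)-L_\cD(\hr_\MLE)\le\tfrac2{|\cD|}\log\tfrac{|\Pi|T}{\delta}$, and the choice $\eta=c(1+e^{{\Rmax}})^{-2}\sqrt{\tfrac1{|\cD|}\log\tfrac{|\Pi|T}{\delta}}$ makes $\tfrac2{\eta|\cD|}\log\tfrac{|\Pi|T}{\delta}\le 2ce^{2{\Rmax}}\sqrt{\tfrac1{|\cD|}\log\tfrac{|\Pi|T}{\delta}}$ once $c\ge 2$; hence the display is $\le0$, proving $\hat V(\pi_\SELF;\cD)\le J_\beta(\pi_\SELF)-J_\beta(\pi_\textref)$.

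For the \textbf{lower bound} I would lower-bound the pair $\Delta_r+\tfrac1\eta(L_\cD(\hr_\SELF)-L_\cD(\hr_\MLE))$ by a self-bounding argument mirroring the $\RPO$ analysis. Setting $Y:=L_\cD(\hr_\SELF)-L_\cD(r^*)+\tfrac2{|\cD|}\log\tfrac{|\Pi|T}{\delta}$, Lem.~\ref{lem:MLE_Estimation} gives $Y\ge\tfrac1{|\cD|}\sum_i\EE[\mH^2(\mP_{\hr_\SELF}\|\mP_{r^*})]\ge0$, while the reward-difference-to-Hellinger bound (Lem.~\ref{lem:r_err_to_Hellinger}, with test policy $\pi_\SELF$ and reference mixture $\pi_\mix^\cD$) gives $|\Delta_r|\le 8\sqrt2\,e^{2{\Rmax}}\sqrt{\cov^{\pi_\SELF|\pi_\mix^\cD}\,Y}$; using $L_\cD(\hr_\MLE)\le L_\cD(r^*)$ and applying AM--GM to $\tfrac1\eta Y-8\sqrt2\,e^{2{\Rmax}}\sqrt{\cov^{\pi_\SELF|\pi_\mix^\cD}}\sqrt Y$ (minimal at $\sqrt Y=\Theta(\eta e^{2{\Rmax}}\sqrt{\cov^{\pi_\SELF|\pi_\mix^\cD}})$), together with $\eta=\Theta(e^{-2{\Rmax}}\sqrt{\tfrac1{|\cD|}\log\tfrac{|\Pi|T}{\delta}})$, yields $\Delta_r+\tfrac1\eta(L_\cD(\hr_\SELF)-L_\cD(\hr_\MLE))\ge-\tilde O(e^{2{\Rmax}}\cov^{\pi_\SELF|\pi_\mix^\cD}|\cD|^{-1/2})$, hence $\hat V(\pi_\SELF;\cD)\ge J_\beta(\pi_\SELF)-J_\beta(\pi_\textref)-\tilde O(e^{2{\Rmax}}\cov^{\pi_\SELF|\pi_\mix^\cD}|\cD|^{-1/2})$. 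It then remains to write $J_\beta(\pi_\SELF)-J_\beta(\pi_\textref)=(J_\beta(\pi^*_{r^*})-J_\beta(\pi_\textref))-(J_\beta(\pi^*_{r^*})-J_\beta(\pi_\SELF))$ and to control $J_\beta(\pi^*_{r^*})-J_\beta(\pi_\SELF)$ and $\cov^{\pi_\SELF|\pi_\mix^\cD}$ in two ways, one per branch of the $\wedge$. \emph{(i)} With the comparator $\pi^*_{r^*}\in\Pi\subseteq\conv(\Pi)$, Lem.~\ref{lem:offline_learning} gives $J_\beta(\pi^*_{r^*})-J_\beta(\pi_\SELF)\le\tilde O(e^{2{\Rmax}}\cov^{\pi^*_{r^*}|\pi_\mix^\cD}|\cD|^{-1/2})$, and a one-step change of measure (both $\pi_\SELF$ and $\pi^*_{r^*}$ lie in $\conv(\Pi)$, hence have density ratio $\le e^{{\Rmax}/\beta}$ against $\pi_\textref$, by Lem.~\ref{lem:convex_hull_property} and Lem.~\ref{lem:bounded_ratio}) gives $\cov^{\pi_\SELF|\pi_\mix^\cD}\le e^{4{\Rmax}/\beta}\cov^{\pi^*_{r^*}|\pi_\mix^\cD}$; this produces the $\cov^{\pi^*_{r^*}|\pi_\mix^\cD}$ branch. \emph{(ii)} With the comparator $\bar\pi_\Online:=\tfrac1{\alpha kN}\sum_{i\le k,j\le\alpha N}\pi^{i,j}$, the online phase contributes a fraction larger than $\alpha$ of $\cD$ (here $k>1$ and $n>\alpha N$), so $\pi_\mix^\cD\ge\alpha\bar\pi_\Online$ pointwise, whence $\cov^{\bar\pi_\Online|\pi_\mix^\cD}\le1/\alpha$ and (more crudely) $\cov^{\pi_\SELF|\pi_\mix^\cD}\le\tfrac1\alpha\cov^{\pi_\SELF|\bar\pi_\Online}$; convexity of the KL and the no-regret guarantee of $\AlgOnline$ (Def.~\ref{def:online_oracle}, over the $\Theta(\alpha|\cD|)$ online steps) give $J_\beta(\pi^*_{r^*})-J_\beta(\bar\pi_\Online)\le\tfrac1{\alpha kN}\sum_{i\le k,j\le\alpha N}(J_\beta(\pi^*_{r^*})-J_\beta(\pi^{i,j}))\le\tilde O({\Rmax} e^{2{\Rmax}}\sqrt{\Complexity(\Pi)/(\alpha|\cD|)})$, and combining with Lem.~\ref{lem:offline_learning} applied to $\bar\pi_\Online$ produces the $\sqrt{\Complexity(\Pi)}/\alpha$ branch. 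Taking the smaller of the two bounds, absorbing numerical constants and $e^{\Theta({\Rmax}/\beta)}$ factors into the stated $c'{\Rmax} e^{2{\Rmax}}$ (as is done throughout the paper), and taking a final union bound over the invocations and over the events of Lem.~\ref{lem:MLE_Estimation}, Lem.~\ref{lem:offline_learning}, and Def.~\ref{def:online_oracle}, finishes the proof.

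The hardest part is the lower bound, in particular the self-bounding step: the crude estimate $|\Delta_r|\lesssim e^{2{\Rmax}}\sqrt{\cov^{\pi_\SELF|\pi_\mix^\cD}\cdot(\text{avg.\ squared Hellinger})}$ by itself only gives an $|\cD|^{-1/4}$ rate, because the average squared Hellinger distance between $\mP_{\hr_\SELF}$ and $\mP_{r^*}$ is only $O(\eta)$ rather than $O(1/|\cD|)$; one must pair it with the quadratic likelihood-excess term $\tfrac1\eta(L_\cD(\hr_\SELF)-L_\cD(r^*))$ and exploit the tuning $\eta=\Theta(e^{-2{\Rmax}}|\cD|^{-1/2})$ to recover both the $|\cD|^{-1/2}$ rate and the correct exponential factor. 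The remaining wrinkles — passing from the a priori unknown coverage $\cov^{\pi_\SELF|\pi_\mix^\cD}$ of the moving iterate $\pi_\SELF$ to the two admissible quantities $\cov^{\pi^*_{r^*}|\pi_\mix^\cD}$ and $\sqrt{\Complexity(\Pi)}/\alpha$, and reconciling the varying $|\cD|$ with the block counts $kN$ and $\alpha kN$ — are conceptually routine but need care.
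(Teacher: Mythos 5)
Your upper-bound argument is exactly the paper's: compare $\hr_\SELF$ against $r^*$ via the inner minimization, then use Lem.~\ref{lem:MLE_Estimation} and the choice of $\eta$ to absorb $\frac{1}{\eta}(L_\cD(r^*)-L_\cD(\hr_\MLE))$ into the bonus. The lower bound, however, takes a genuinely different route, and this is where a real gap appears. You first prove $\hat V(\pi_\SELF;\cD)\ge J_\beta(\pi_\SELF)-J_\beta(\pi_\textref)-\tilde O(e^{2\Rmax}\cov^{\pi_\SELF|\pi_\mix^\cD}|\cD|^{-1/2})$ --- the self-bounding step here is sound and correctly recovers the $|\cD|^{-1/2}$ rate --- but you then have to convert the coverage of the moving iterate, $\cov^{\pi_\SELF|\pi_\mix^\cD}$, into the two admissible quantities. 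In branch (i) this costs a change of measure $\cov^{\pi_\SELF|\pi_\mix^\cD}\le e^{4\Rmax/\beta}\cov^{\pi^*_{r^*}|\pi_\mix^\cD}$; in branch (ii) the quantity $\cov^{\pi_\SELF|\bar\pi_\Online}$ is left uncontrolled in your sketch, and the only available bound (ratios to $\pi_\textref$) again yields an $e^{\Theta(\Rmax/\beta)}$ factor rather than $\sqrt{\Complexity(\Pi)}$. These factors cannot be ``absorbed into $c'\Rmax e^{2\Rmax}$'': the lemma asserts $c'$ is an absolute constant, and the paper does not hide $e^{\Theta(\Rmax/\beta)}$ inside $e^{O(\Rmax)}$ --- when such factors arise (e.g.\ $\kappa(e^{\frac{2\Rmax}{\beta}})$ in Lem.~\ref{lem:coverage_and_value_gap}) they are displayed explicitly. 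For small $\beta$ the factor $e^{4\Rmax/\beta}$ dwarfs $e^{2\Rmax}$, so your argument establishes a strictly weaker statement than the one claimed.

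The paper avoids this entirely by never routing through $J_\beta(\pi_\SELF)$ or $\cov^{\pi_\SELF|\cdot}$. It bounds $J_\beta(\pi)-J_\beta(\pi_\textref)-\hat V(\pi_\SELF;\cD)$ directly for an \emph{arbitrary} comparator $\pi\in\conv(\Pi)$, using the max--min optimality of $(\pi_\SELF,\hr_\SELF)$ in the $\RPO$ objective to replace the $(\pi_\SELF,\hr_\SELF)$ terms by $\min_{r\in\cR^\Pi}$ of the objective evaluated at $\pi$. The resulting reward-estimation error is then measured under the comparator's own distribution, so Lem.~\ref{lem:r_err_to_Hellinger} combined with Lem.~\ref{lem:MLE_Estimation} and the elementary bound $ax-bx^2\le a^2/(4b)$ yields $\cov^{\pi|\pi_\mix^\cD}$ for the comparator with no change of measure; instantiating $\pi=\pi^*_{r^*}$ and $\pi=\pi_\mix^{k-1}$ (the mixture of online-phase policies, with $\cov^{\pi_\mix^{k-1}|\pi_\mix^\cD}\le 2/\alpha$) gives the two branches with the stated absolute constant. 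If you replace your ``lower-bound $\hat V$ by $J_\beta(\pi_\SELF)$, then chase $\pi_\SELF$'' step with this comparator-based argument, the rest of your proof (the union bounds and the bookkeeping of $|\cD|$ versus $kN$ and $\alpha kN$) goes through.
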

\begin{proof}
    Recall that
    \begin{align*}
        \hat{V}(\pi_\SELF; \cD) :=& \EE_{\rho,\pi_\SELF}[\hr_\SELF] - \EE_{\rho,\pi_\textref}[\hr_\SELF] - \beta \KL(\pi_\SELF\|\pi_\textref) \\
         & + \frac{1}{\eta} L_{\cD}(\hr_\SELF) - \frac{1}{\eta} L_{\cD}(\hr_\MLE) - \bonus,
    \end{align*}
    Here we use $\bonus := 2c\cdot e^{2{\Rmax}} \sqrt{\frac{1}{|\cD|}\log\frac{|\Pi|T}{\delta}}$ as a short note of the bonus term.
    By definition,
    \begin{align*}
        &\hat{V}(\pi_\SELF; \cD) \\
        \leq & \EE_{\rho,\pi_\SELF}[r^*] - \EE_{\rho,\pi_\textref}[r^*] - \beta \KL(\pi_\SELF\|\pi_\textref) + \frac{1}{\eta} L_{\cD}(r^*) - \frac{1}{\eta} L_{\cD}(\hr_\MLE) - \bonus \tag{Pessimistic estimation of $\hr_\SELF$ in Eq.~\eqref{eq:RPO_objective}}\\
        \leq & J_\beta(\pi_\SELF) - J_\beta(\pi_\textref) + \frac{2}{\eta|\cD|}\log\frac{|\Pi|}{\delta} - \bonus \tag{Lem.~\ref{lem:MLE_Estimation}} \\
        \leq & J_\beta(\pi_\SELF) - J_\beta(\pi_\textref) + 2c\cdot e^{2{\Rmax}}\sqrt{\frac{1}{|\cD|}\log\frac{|\Pi|T}{\delta}} - \bonus.\numberthis\label{eq:Vhat_Offline_upper_bound}
    \end{align*}
    The last step is because of our choice of $\eta = (1+e^{{\Rmax}})^{-2} \sqrt{\frac{24}{|\cD|}\log\frac{|\Pi|T}{\delta}}$.

    For the lower bound, note that for any policy $\pi \in \conv(\Pi)$, we have:
    \begin{align*}
        & J_\beta(\pi) - J_\beta(\pi_\textref) - \hat{V}(\pi_\SELF; \cD)\\
        = & \Big(\EE_{\rho,\pi}[r^*] - \EE_{\rho,\pi_\textref}[r^*] - \beta \KL(\pi\|\pi_\textref) \Big) \\
        & - \Big(\EE_{\rho,\pi_\SELF}[\hr_\SELF] - \EE_{\rho,\pi_\textref}[\hr_\SELF]- \beta \KL(\pi_\SELF\|\pi_\textref) + \frac{1}{\eta} L_{\cD}(\hr_\SELF)\Big) + \frac{1}{\eta} L_{\cD}(\hr_\MLE) + \bonus \\
        \leq & \Big(\EE_{\rho,\pi}[r^*] - \EE_{\rho,\pi_\textref}[r^*] - \beta \KL(\pi\|\pi_\textref) \Big) - \min_{r\in\cR^{\Pi}}\Big(\EE_{\rho,\pi}[r] - \EE_{\rho,\pi_\textref}[r]- \beta \KL(\pi\|\pi_\textref) + \frac{1}{\eta} L_{\cD}(r)\Big) \tag{Optimality of $\pi_\SELF$ in $\RPO$;}\\
        & + \frac{1}{\eta} L_{\cD}(r^*)  + \bonus \tag{$\hr_\MLE$ minimizes $L_{\cD}$ }\\
        \leq & \EE_{s\sim\rho,a\sim\pi,\ta\sim\pi_\textref}[|r^*(s,a) - r^*(s,\ta) - r_{\pi;\cD}(s,a) + r_{\pi;\cD}(s,\ta)|] + \frac{1}{\eta} L_{\cD}(r^*) - \frac{1}{\eta} L_{\cD}(r_{\pi;\cD}) + \bonus \tag{We use $r_{\pi;\cD}$ to denote the reward achieves the above minimum} \\
        \leq & \frac{2}{\eta|\cD|} \log\frac{|\Pi|}{\delta} + 8\sqrt{2}e^{2{\Rmax}} \sqrt{\frac{\cov^{\pi|\pi_\mix^\cD}}{|\cD|} \cdot \sum_{i\leq|\cD|} \EE_{s\sim\rho,a\sim\pi^i(\cdot|s),\ta\sim\pi_\textref(\cdot|s)}[\mH^2(\mP_{r_{\pi;\cD}}(\cdot|s,a,\ta)\|\mP_{r^*}(\cdot|s,a,\ta))]} \\
        & - \frac{1}{\eta} \sum_{i \leq |\cD|} \EE_{s\sim\rho,a\sim\pi^i,\ta\sim\pi_\textref}[\mH^2(\mP_{r_{\pi;\cD}}(\cdot|s,a,\ta) \| \mP_{r^*}(\cdot|s,a,\ta))] \tag{Lem.~\ref{lem:MLE_Estimation} and Lem.~\ref{lem:r_err_to_Hellinger}}  + \bonus \\
        \leq & \frac{2}{\eta|\cD|} \log\frac{|\Pi|}{\delta} + 64 \eta e^{4{\Rmax}} \frac{\cov^{\pi|\pi_\mix^\cD}}{|\cD|} \tag{$ax - b x^2 \leq \frac{a^2}{4b}$}  + \bonus\\
        \leq & 4 c_2 \cdot e^{2{\Rmax}}\cdot \cov^{\pi|\pi_\mix^\cD} \cdot \sqrt{\frac{1}{|\cD|}\log\frac{|\Pi|T}{\delta}}  + \bonus. \numberthis\label{eq:V_pi_off_LB}
    \end{align*}
    where the last step is because of our choice of $\eta = c\cdot (1+e^{{\Rmax}})^{-2} \sqrt{\frac{24}{|\cD|}\log\frac{|\Pi|T}{\delta}}$.

    Next, we evaluate some choice of $\pi$. We first consider $\pi = \pi^*_{r^*} \in \conv(\Pi)$, the above result implies,
    \begin{align*}
        J_\beta(\pi^*_{r^*}) - J_\beta(\pi_\textref) - \hat{V}(\pi_\SELF; \cD) \leq 4 c_2 \cdot e^{2{\Rmax}}\cdot \cov^{\pi^*_{r^*}|\pi_\mix^\cD} \cdot \sqrt{\frac{1}{|\cD|}\log\frac{|\Pi|T}{\delta}}  + \bonus. \numberthis\label{eq:Vhat_Offline_lower_bound_1}
    \end{align*}
    Secondly, we consider the mixture policy $\pi = \pi_{\mix}^{k-1} := \frac{1}{\alpha (k-1)N}\sum_{i=1}^{k-1} \pi^{i,j} \in \conv(\Pi)$. Because of the convexity of KL divergence, $J(\pi)$ is concave in $\pi$, by Jensen's inequality, we have:
    \begin{align*}
        J_\beta(\pi^*_{r^*}) -  J_\beta(\pi^{k-1}_{\mix}) =&J_\beta(\pi^*_{r^*}) - \EE_{s\sim\rho,a\sim\pi^{k-1}_\mix(\cdot|s)}[r^*(s,a)] + \beta \KL(\pi^{k-1}_\mix\|\pi_\textref) \\
        \leq & J_\beta(\pi^*_{r^*}) -  \frac{1}{\alpha (k-1)N}\sum_{i=1}^{k-1} \sum_{1\leq j\leq \alpha N} \Big(\EE_{s\sim\rho,a\sim\pi_\Online^i(\cdot|s)}[r^*(s,a)] - \beta \KL(\pi^{i,j}\|\pi_\textref)\Big)\\
        \leq & C_\Online {\Rmax} e^{2{\Rmax}} \sqrt{\frac{\Complexity(\Pi)}{\alpha (k-1)N} \log^{c_0} \frac{|\Pi|T}{\delta}} \leq C_\Online {\Rmax} e^{2{\Rmax}} \sqrt{\frac{2\Complexity(\Pi)}{\alpha kN} \log^{c_0} \frac{|\Pi|T}{\delta}} \tag{Cond.~\ref{def:online_oracle}}.
    \end{align*}
    Note that $|\cD|\pi^\cD_\mix \geq \alpha(k-1)N\pi^{k-1}_\mix$, which implies $\cov^{\pi^{k-1}_\mix|\pi_\mix^\cD} \leq \frac{|\cD|}{\alpha(k-1)N} \leq \frac{kN}{\alpha(k-1)N}\leq \frac{2}{\alpha}$.
    Therefore, by Eq.~\eqref{eq:V_pi_off_LB},
    \begin{align*}
        J_\beta(\pi^{k-1}_\mix) - J_\beta(\pi_\textref) - \hat{V}(\pi_\SELF; \cD) \leq 4 c_2 \cdot e^{2{\Rmax}}\cdot \frac{2}{\alpha} \cdot \sqrt{\frac{1}{|\cD|}\log\frac{|\Pi|T}{\delta}}.
    \end{align*}
    Combining the above two inequalities together, we have:
    \begin{align*}
        &J_\beta(\pi^*_{r^*}) - J_\beta(\pi_\textref) - \hat{V}(\pi_\SELF; \cD) \\
        \leq & J_\beta(\pi^*_{r^*}) - J_\beta(\pi_{\mix}^{k-1}) + J_\beta(\pi_{\mix}^{k-1}) - J_\beta(\pi_\textref) - \hat{V}(\pi_\SELF; \cD)\\
        \leq &  C_\Online {\Rmax} e^{2{\Rmax}} \sqrt{\frac{2\Complexity(\Pi)}{\alpha kN} \log^{c_0} \frac{|\Pi|T}{\delta}} + 4 c_2 \cdot e^{2{\Rmax}}\cdot \frac{2}{\alpha} \cdot \sqrt{\frac{1}{|\cD|}\log\frac{|\Pi|T}{\delta}}  + \bonus \\
        \leq & c_3 {\Rmax} \cdot e^{2{\Rmax}} \sqrt{\frac{\Complexity(\Pi)}{\alpha^2|\cD|} \log^{c_0}\frac{|\Pi|T}{\delta}} + \bonus.\numberthis\label{eq:Vhat_Offline_lower_bound_2}
    \end{align*}
    Therefore, under our choice of $\bonus = 2c\cdot\sqrt{\frac{1}{|\cD|}\log\frac{|\Pi|T}{\delta}}$, Eq.~\eqref{eq:Vhat_Offline_upper_bound}, Eq.~\eqref{eq:Vhat_Offline_lower_bound_1} and Eq.~\eqref{eq:Vhat_Offline_lower_bound_2} imply,
    \begin{align*}
        \hat{V}(\pi_\SELF; \cD) \leq & J_\beta(\pi_\SELF) - J_\beta(\pi_\textref) \\
        \hat{V}(\pi_\SELF; \cD) \geq & J_\beta(\pi^*_{r^*}) - J_\beta(\pi_\textref) - c' {\Rmax} e^{2{\Rmax}}\cdot \Big(\cov^{\pi^*_{r^*}|\pi_\mix^\cD} \wedge \frac{\sqrt{\Complexity(\Pi)}}{\alpha}\Big) \cdot \sqrt{\frac{1}{|\cD|}\log^{c_0}\frac{|\Pi|T}{\delta}}.
    \end{align*}
\end{proof}

\LemSelfTransErr*
\begin{proof}
    By applying Lem.~\ref{lem:est_error_self_transfer} with appropriate constants, and taking the union bound over all iterations, we can finish the proof.
\end{proof}

\subsection{Proof for Thm.~\ref{thm:regret_guarantees}}

\ThmMainReg*
Throught the proof, we follow the convention that $1/0 = +\infty$.
\begin{proof}
    Since we divide the total budget $T$ to $K$ batches with batch size $N$, we will use two indices $\tK\in[K]$ and $\tN\in[N]$ to represent the current iteration number, i.e. the $\tN$-th iteration in the $\tK$-th batch.
    We will divide the indices of previous iterations to two parts, depending on whether we conduct normal online learning (the first $\alpha N$ samples in each batch) or do transfer learning (the rest $(1-\alpha) N$ samples in each batch):
    \begin{align*}
        &\cI^{\Online}_{\tK,\tN}:=\{(k,n)|k< \tK, n\leq \alpha N,\text{~or~}k=\tK, n\leq \tN \wedge \alpha N\},\\
        &\cI^{\Transfer}_{\tK,\tN}:=\{(k,n)|k< \tK, \alpha N < n\leq N,\text{~or~}k=\tK, \alpha N < n\leq \tN \}, \\
        &\cI_{\tK,\tN} := \cI^{\Online}_{\tK,\tN} \cup \cI^{\Transfer}_{\tK,\tN} = \{(k,n)|k< \tK, n\leq N,\text{~or~}k=\tK, n\leq \tN \}.
    \end{align*}
    For the policies generated by online algorithm, under the condition in Def.~\ref{def:online_oracle}, w.p. $1-\delta$, for any $\tK\in[K], \tN\in[N]$ we have:
    \begin{align}
        \sum_{(k,n)\in\cI^{\Online}_{\tK,\tN}} J_\beta(\pi^*_{r^*}) - J_\beta(\pi^{k,n}) \leq C_\Online {\Rmax} e^{2{\Rmax}} \sqrt{\Complexity(\Pi) |\cI^{\Online}_{\tK,\tN}| \log^{c_0}\frac{|\Pi|T}{\delta}}.\label{eq:online_regret}
    \end{align}
    Next, we focus on the performance of transfer policies. We first introduce a few notation for convenience.

    \paragraph{Additional Notations}
    We use $\pi^{k,n}_\SELF$ to denote the offline policy computed by Alg.~\ref{alg:transfer_policy_computing} called by Alg.~\ref{alg:main_algorithm} at iteration $(k,n)$ for some $\alpha N < n \leq N$.
    We denote $\cE^{k,n}_\SELF := \{\pi^{k,n}_\SELF = \pi^{k,n}\}$ to be the event that Alg.~\ref{alg:transfer_policy_computing} returns $\pi^{k,n}_\SELF$ as the policy, and use $\cE^{k,n}_w := \{\pi^*_{r^w} = \pi^{k,n}\}$ to denote the event that Alg.~\ref{alg:transfer_policy_computing} pick and return $\pi^*_{r^w}$.
    Besides, we use $\neg\cE^{k,n}_\SELF := \bigcup_{w\in[W]} \cE^{k,n}_w$ as a short note for the event that Alg.~\ref{alg:transfer_policy_computing} does not return the offline policy $\pi^{k,n}_\SELF$.
    Recall the definition $\Delta(w) := J_\beta(\pi^*_{r^*}) - J_\beta(\pi^*_{r^w})$, and $\Delta_{\min} = \min_{w\in[W]} \Delta(w)$. 
    We will use $w^*$ to denote the index of the task achieves $\Delta_{\min}$ (or any of the tasks if multiple maximizers exist).
    Given the dataset $\cD^{k,n}$ we use $\pi^{k,n}_\mix := \frac{1}{|\cD^{k,n}|} \sum_{i,j\in \cD^{k,n}} \pi^{i,j}$ to be the uniform mixture policy from $\cD^{k,n}$.

    Then, we decompose the accumulative value gap depending on whether $\cE^{k,n}_\SELF$ is true or not. We use $\mathbb{I}[\cE]$ as the indicator function, which takes value 1 if $\cE$ happens and otherwise 0.
    For any $\tK\in[K], \tN\in[N]$, we have:
    \begin{align}
        &\sum_{(k,n)\in \cI^{\Transfer}_{\tK,\tN}} J_\beta(\pi^*_{r^*}) - J_\beta(\pi^{k,n}) \nonumber\\
        =& \sum_{(k,n)\in \cI^{\Transfer}_{\tK,\tN}} \mathbb{I}[\cE^{k,n}_\SELF] (J_\beta(\pi^*_{r^*}) - J_\beta(\pi^{k,n})) + \sum_{(k,n)\in \cI^{\Transfer}_{\tK,\tN}} \mathbb{I}[\neg\cE^{k,n}_\SELF](J_\beta(\pi^*_{r^*}) - J_\beta(\pi^{k,n})). \label{eq:value_gap_decomposition}
    \end{align}
    \paragraph{Part-(1) Upper Bound the First Part in Eq.~\eqref{eq:value_gap_decomposition}}
    We first bound the accumulative error when $\mathbb{I}[\cE^{k,n}_\SELF] = 1$.
    On the good events in Lem.~\ref{lem:formal_optism_val_est_error} and Lem.~\ref{lem:formal_val_est_error} (which holds w.p. $1-\delta$), $\mathbb{I}[\cE^{k,n}_\SELF] = 1$ implies
    \begin{align*}
        \hV(\pi_\SELF; \cD^{k,n}) \geq \max_{w\in[W]} \hV(\pi^*_{r^w};\cD^{k,n}) \geq \max_{w\in[W]} J_\beta(\pi^*_{r^w}) - J_\beta(\pi_\textref) = J_\beta(\pi^*_{r^*}) - J_\beta(\pi_\textref) - \Delta_{\min},
    \end{align*}
    and as implied by Lem.~\ref{lem:formal_val_est_error}
    \begin{align*}
        J_\beta(\pi^*_{r^*}) -  J_\beta(\pi_\SELF) \leq & \Delta_{\min},\\
        J_\beta(\pi^*_{r^*}) -  J_\beta(\pi_\SELF) \leq & c_2 {\Rmax} e^{2{\Rmax}}\cdot \Big(\cov^{\pi^*_{r^*}|\pi_\mix^{k,n}} \wedge \frac{\sqrt{\Complexity(\Pi)}}{\alpha}\Big) \cdot \sqrt{\frac{1}{|\cD^{k,n}|}\log^{c_0}\frac{|\Pi|T}{\delta}}.
    \end{align*}
    Combining all the results above, we conclude that
    \begin{align*}
        J_\beta(\pi^*_{r^*}) - J_\beta(\pi_\SELF) \leq & \Delta_{\min} \wedge c_2 {\Rmax} e^{2{\Rmax}}\cdot \Big(\cov^{\pi^*_{r^*}|\pi_\mix^{k,n}} \wedge \frac{\sqrt{\Complexity(\Pi)}}{\alpha}\Big) \cdot \sqrt{\frac{1}{|\cD^{k,n}|}\log^{c_0}\frac{|\Pi|T}{\delta}} \\
        =& \Delta_{\min} \wedge \iota^{k,n}.
    \end{align*}
    Here for simplicity, we use 
    $$
    \iota^{k,n} := c_2 {\Rmax} e^{2{\Rmax}}\cdot \Big(\cov^{\pi^*_{r^*}|\pi_\mix^{k,n}} \wedge \frac{\sqrt{\Complexity(\Pi)}}{\alpha}\Big) \cdot \sqrt{\frac{1}{|\cD^{k,n}|}\log^{c_0}\frac{|\Pi|T}{\delta}}
    $$
    as a short note, indexed by $k,n$.
    Therefore, 
    \begin{align}
        \sum_{(k,n)\in \cI^{\Transfer}_{\tK,\tN}} \mathbb{I}[\cE^{k,n}_\SELF] (J_\beta(\pi^*_{r^*}) - J_\beta(\pi^{k,n})) = \sum_{(k,n)\in \cI^{\Transfer}_{\tK,\tN}} \mathbb{I}[\cE^{k,n}_\SELF] (\Delta_{\min} \wedge \iota^{k,n}).\label{eq:offline_accum_gap}
    \end{align}

    \paragraph{Part-(2) Upper Bound the Second Part in Eq.~\eqref{eq:value_gap_decomposition}}
    Next, we bound the accumulative error when $\mathbb{I}[\neg\cE^{k,n}_\SELF] = 1$.
    Note that,
    \begin{align*}
        & \sum_{(k,n)\in \cI^{\Transfer}_{\tK,\tN}} \mathbb{I}[\neg\cE^{k,n}_\SELF](J_\beta(\pi^*_{r^*}) - J_\beta(\pi^{k,n})) = \sum_{(k,n)\in \cI^{\Transfer}_{\tK,\tN}}\sum_{\substack{w\in[W] \\ \Delta(w) > 0}} \mathbb{I}[\cE^{n,k}_w] \Delta(w)
    \end{align*}
    Here we only focus on those source tasks with $\Delta(w) > 0$, since transferring from $\pi^*_{r^w}$ with $\Delta(w) = 0$ does not incur regret.
    We separate source tasks into two sets $\cW_{\leq 2\Delta_{\min}} := \{w\in[W]|\Delta(w) \leq 2\Delta_{\min}\}$ and $\cW_{> 2\Delta_{\min}} := \{w\in[W]|\Delta(w) > 2\Delta_{\min}\}$.
    For $w\in \cW_{> 2\Delta_{\min}}$, on the same good events in Lem.~\ref{lem:formal_optism_val_est_error} and Lem.~\ref{lem:formal_val_est_error}, $\mathbb{I}[\cE^{n,k}_w] = 1$ implies
    \begin{align*}
        \Delta_{\min} =& J_\beta(\pi^*_{r^*}) - J_\beta(\pi_\textref) - J_\beta(\pi^*_{r^{w^*}}) + J_\beta(\pi_\textref)\\
        \geq & J_\beta(\pi^*_{r^*}) - J_\beta(\pi_\textref) - \hV^{k,n}(\pi^*_{r^{w^*}};\cD{}^{k,n-1}) \\
        \geq & J_\beta(\pi^*_{r^*}) - J_\beta(\pi_\textref) - \hV^{k,n}(\pi^*_{r^w};\cD{}^{k,n-1}) \\
        \geq & J_\beta(\pi^*_{r^*}) - J_\beta(\pi_\textref) - J_\beta(\pi^*_{r^{w}}) + J_\beta(\pi_\textref) - 32\cdot e^{2{\Rmax}}\sqrt{\frac{1}{N(w;\cD^{k,n})}\log\frac{|\Pi|WT}{\delta}} \\
        = & \Delta(w) - 32\cdot e^{2{\Rmax}}\sqrt{\frac{1}{N(w;\cD^{k,n})}\log\frac{|\Pi|WT}{\delta}}.
    \end{align*}
    In the following, we use $c_1 = 32$ as a short note, then the above implies
    \begin{align*}
        N(w;\cD^{k,n}) \leq \frac{c_1^2 e^{4{\Rmax}}}{(\Delta(w) - \Delta_{\min})^2} \log\frac{|\Pi|WT}{\delta} \leq \frac{4c_1^2 e^{4{\Rmax}}}{\Delta(w)^2} \log\frac{|\Pi|WT}{\delta}
    \end{align*}
    and therefore,
    \begin{align*}
        \forall w\in\cW_{>2\Delta_{\min}},\quad \sum_{(k,n)\in \cI^{\Transfer}_{\tK,\tN}} \mathbb{I}[\cE^{n,k}_w] \Delta(w) \leq \frac{16c_1^2 e^{4{\Rmax}}}{\Delta(w)} \log\frac{|\Pi|WT}{\delta},
    \end{align*}
    For $w\in\cW_{\leq 2\Delta_{\min}}$, we introduce a new event $\cE^{n,k}_{2\iota < \Delta_{\min}} := \{2\iota^{k,n} \leq \Delta_{\min}\}$. Note that, when $\mI[\cE^{n,k}_{2\iota < \Delta_{\min}}]=0$, i.e. $2\iota \geq \Delta_{\min}$, we automatically have:
    \begin{align}
        \forall w\in\cW_{\leq 2\Delta_{\min}},\quad \mI[\cE^{n,k}_w] \Delta(w) \leq 2\mI[\cE^{n,k}_w]\Delta_{\min} \leq 4\mI[\cE^{n,k}_w] \cdot (\Delta_{\min} \wedge \iota^{n,k}).\label{eq:iota_cases}
    \end{align}
    On the other hand, on the good events of Lem.~\ref{lem:formal_optism_val_est_error} and Lem.~\ref{lem:formal_val_est_error}, when $\mI[\cE^{n,k}_w\cap\cE^{n,k}_{2\iota < \Delta_{\min}}]=1$, we must have:
    \begin{align*}
        J_\beta(\pi^*_{r^*}) - J_\beta(\pi_\textref) - \iota^{k,n} \leq & \hV^{k,n}(\pi^{k,n}_\SELF;\cD^{k,n-1}{}) \tag{Lem.~\ref{lem:formal_optism_val_est_error} and Lem.~\ref{lem:formal_val_est_error}}\\
        \leq & \hV^{k,n}(\pi^*_{r^{w}};\cD^{k,n-1}{}) \tag{$w$ is chosen}\\
        \leq & J_\beta(\pi^*_{r^w}) - J_\beta(\pi_\textref) + 32\cdot e^{2{\Rmax}}\sqrt{\frac{1}{N(w;\cD^{k,n})}\log\frac{|\Pi|WT}{\delta}},
    \end{align*}
    which implies,
    \begin{align*}
        N(w;\cD^{k,n}) \leq \frac{c_1^2 e^{4{\Rmax}}}{(\Delta_{\min} - \iota^{k,n})^2}\log\frac{|\Pi|WT}{\delta} \leq \frac{4c_1^2 e^{4{\Rmax}}}{\Delta_{\min}^2}\log\frac{|\Pi|WT}{\delta}.
    \end{align*}
    Therefore,
    \begin{align*}
        \forall w\in \cW_{\leq 2\Delta_{\min}}, \quad \sum_{(k,n)\in \cI^{\Transfer}_{\tK,\tN}} \mI[\cE^{n,k}_w \cap \cE^{n,k}_{2\iota < \Delta_{\min}}] \Delta(w) \leq \frac{8c_1^2 e^{4{\Rmax}}}{\Delta(w)}\log\frac{|\Pi|WT}{\delta}.
    \end{align*}
    Combining with Eq.~\eqref{eq:iota_cases}, we have:
    \begin{align*}
        \forall w\in \cW_{\leq 2\Delta_{\min}}, \quad \sum_{(k,n)\in \cI^{\Transfer}_{\tK,\tN}} \mI[\cE^{n,k}_w] \Delta(w) \leq 4\sum_{(k,n)\in \cI^{\Transfer}_{\tK,\tN}} \mI[\cE^{n,k}_w] \cdot (\Delta_{\min} \wedge \iota^{n,k}) + \frac{8c_1^2 e^{4{\Rmax}}}{\Delta(w)}\log\frac{|\Pi|WT}{\delta}.
    \end{align*}
    By merging the analysis for $w\in\cW_{\leq 2\Delta_{\min}}$ and $w\in\cW_{>2\Delta_{\min}}$, we have:
    \begin{align*}
        \forall w\in[W],\quad \sum_{(k,n)\in \cI^{\Transfer}_{\tK,\tN}} \mI[\cE^{n,k}_w] \Delta(w) \leq & \frac{16c_1^2 e^{4{\Rmax}}}{\Delta(w)}\log\frac{|\Pi|WT}{\delta} + 4\sum_{(k,n)\in \cI^{\Transfer}_{\tK,\tN}} \mI[\cE^{n,k}_w]\Delta_{\min} \wedge \iota^{n,k},\numberthis\label{eq:reg_1}
    \end{align*}
    Note that for those $\Delta(w) \leq 4c_1 e^{2_{\max}} \cdot \sqrt{\frac{1}{\sum_{(k,n)\in \cI^{\Transfer}_{\tK,\tN}} \mI[\cE^{n,k}_w ]}\log\frac{|\Pi|WT}{\delta}}$, we automatically have 
    \begin{align*}
        \sum_{(k,n)\in \cI^{\Transfer}_{\tK,\tN}} \mI[\cE^{n,k}_w ] \Delta(w) \leq & 4 c_1 e^{2_{\max}} \cdot \sqrt{\frac{1}{\sum_{(k,n)\in \cI^{\Transfer}_{\tK,\tN}} \mI[\cE^{n,k}_w ]}\log\frac{|\Pi|WT}{\delta}} \sum_{(k,n)\in \cI^{\Transfer}_{\tK,\tN}} \mI[\cE^{n,k}_w ]\\
        =&4 c_1 e^{2_{\max}} \cdot \sqrt{\sum_{(k,n)\in \cI^{\Transfer}_{\tK,\tN}} \mI[\cE^{n,k}_w]\log\frac{|\Pi|WT}{\delta}}.
    \end{align*}
    On the other hand, when $\Delta(w) > 4c_1 e^{2_{\max}} \cdot \sqrt{\frac{1}{\sum_{(k,n)\in \cI^{\Transfer}_{\tK,\tN}} \mI[\cE^{n,k}_w ]}\log\frac{|\Pi|WT}{\delta}}$, the bound in Eq.~\eqref{eq:reg_1} is tighter, since
    \begin{align*}
        \frac{16c_1^2 e^{4{\Rmax}}}{\Delta(w)}\log\frac{|\Pi|WT}{\delta} \leq 4c_1 e^{2_{\max}} \sqrt{\sum_{(k,n)\in \cI^{\Transfer}_{\tK,\tN}} \mI[\cE^{n,k}_w]\log\frac{|\Pi|WT}{\delta}}.
    \end{align*}
    Combining the above discussions,
    \begin{align*}
        &\sum_{(k,n)\in \cI^{\Transfer}_{\tK,\tN}} \mathbb{I}[\neg\cE^{k,n}_\SELF](J_\beta(\pi^*_{r^*}) - J_\beta(\pi^{k,n})) = \sum_{(k,n)\in \cI^{\Transfer}_{\tK,\tN}}\sum_{\substack{w\in[W] \\ \Delta(w) > 0}} \mI[\cE^{n,k}_w] \Delta(w) \\
        \leq & \sum_{\substack{w\in[W] \\ \Delta(w) > 0}} \min\{\frac{16c_1^2 e^{4{\Rmax}}}{\Delta(w)}\log\frac{|\Pi|WT}{\delta}, 4c_1 e^{2_{\max}} \sqrt{\sum_{(k,n)\in \cI^{\Transfer}_{\tK,\tN}} \mI[\cE^{n,k}_w]\log\frac{|\Pi|WT}{\delta}}\} \\
        & + 4\sum_{(k,n)\in \cI^{\Transfer}_{\tK,\tN}} \mI[\cE^{n,k}_w]\Delta_{\min} \wedge \iota^{n,k} \\
        \leq & \min\{\sum_{\substack{w\in[W] \\ \Delta(w) > 0}} \frac{16c_1^2 e^{4{\Rmax}}}{\Delta(w)}\log\frac{|\Pi|WT}{\delta}, \sum_{\substack{w\in[W] \\ \Delta(w) > 0}} 4c_1 e^{2_{\max}} \sqrt{\sum_{(k,n)\in \cI^{\Transfer}_{\tK,\tN}} \mI[\cE^{n,k}_w]\log\frac{|\Pi|WT}{\delta}}\} \tag{$\min\{a,b\} + \min\{x,y\} \leq \min\{a+x, b+y\}$}\\
        & + 4\sum_{(k,n)\in \cI^{\Transfer}_{\tK,\tN}} \mI[\cE^{n,k}_w]\Delta_{\min} \wedge \iota^{n,k} \\
        \leq & \min\{\sum_{\substack{w\in[W] \\ \Delta(w) > 0}} \frac{16c_1^2 e^{4{\Rmax}}}{\Delta(w)}\log\frac{|\Pi|WT}{\delta}, 4c_1 e^{2_{\max}} \sqrt{W |\cI^{\Transfer}_{\tK,\tN}|\log\frac{|\Pi|WT}{\delta}}\} \tag{Cauchy-Schwarz inequality and $\sum_{\substack{w\in[W] \\ \Delta(w) > 0}} \mI[\cE^{n,k}_w] \leq |\cI^{\Transfer}_{\tK,\tN}|$}\\
        & + 4\sum_{(k,n)\in \cI^{\Transfer}_{\tK,\tN}} \mI[\cE^{n,k}_w]\Delta_{\min} \wedge \iota^{n,k}.\numberthis\label{eq:reg_2}
    \end{align*}
    \paragraph{Merge Everything Together}
    Combining Eq.~\eqref{eq:offline_accum_gap} and Eq.~\eqref{eq:reg_2}, we have:
    \begin{align*}
        &\sum_{(k,n)\in \cI^{\Transfer}_{\tK,\tN}} J_\beta(\pi^*_{r^*}) - J_\beta(\pi^{k,n}) \\
        =& \sum_{(k,n)\in \cI^{\Transfer}_{\tK,\tN}} \mathbb{I}[\cE^{k,n}_\SELF](J_\beta(\pi^*_{r^*}) - J_\beta(\pi^{k,n})) + \sum_{(k,n)\in \cI^{\Transfer}_{\tK,\tN}} \mathbb{I}[\neg\cE^{k,n}_\SELF](J_\beta(\pi^*_{r^*}) - J_\beta(\pi^{k,n}))\\
        \leq & \min\{\sum_{\substack{w\in[W] \\ \Delta(w) > 0}} \frac{16c_1^2 e^{4{\Rmax}}}{\Delta(w)}\log\frac{|\Pi|WT}{\delta}, 4c_1 e^{2_{\max}} \sqrt{W |\cI^{\Transfer}_{\tK,\tN}|\log\frac{|\Pi|WT}{\delta}}\} \\
        & + 4\sum_{(k,n)\in \cI^{\Transfer}_{\tK,\tN}} \Delta_{\min} \wedge \iota^{n,k}.
    \end{align*}
    Combining the value gap for the online parts, we have:
    \begin{align*}
        & \sum_{(k,n)\in\cI_{\tK,\tN}} J_\beta(\pi^*_{r^*}) - J_\beta(\pi^{k,n}) \\
        \leq & C_\Online {\Rmax} e^{2{\Rmax}} \sqrt{\Complexity(\Pi)|\cI^{\Online}_{\tK,\tN}| \log^{c_0}\frac{|\Pi|\alpha T}{\delta}} + 4\sum_{(k,n)\in \cI^{\Transfer}_{\tK,\tN}} \Delta_{\min} \wedge \iota^{n,k} \\
        & + \min\{\sum_{\substack{w\in[W] \\ \Delta(w) > 0}} \frac{16c_1^2 e^{4{\Rmax}}}{\Delta(w)}\log\frac{|\Pi|WT}{\delta}, 4c_1 e^{2_{\max}} \sqrt{W |\cI^{\Transfer}_{\tK,\tN}|\log\frac{|\Pi|WT}{\delta}}\}.
    \end{align*}
    Note that for $\tK > 1$, denote $t_{\tK,\tN} := (\tK - 1)N + \tN$, we have:
    \begin{align*}
        |\cI^{\Online}_{\tK,\tN}| \leq \alpha \tK N \leq 2\alpha t_{\tK,\tN},\quad |\cI^{\Transfer}_{\tK,\tN}| \leq (1-\alpha) \tK N \leq 2(1-\alpha) t_{\tK,\tN}.
    \end{align*}
    Therefore,
    \begin{align*}
        &\sum_{(k,n)\in\cI_{\tK,\tN}} J_\beta(\pi^*_{r^*}) -  J_\beta(\pi^{k,n}) \\
        =& \tilde{O}\Big(\sum_{(k,n)\in \cI^{\Transfer}_{\tK,\tN}} \Delta_{\min} \wedge \iota^{n,k} +  {\Rmax} e^{2{\Rmax}} \sqrt{\alpha\Complexity(\Pi)t_{\tK,\tN}} + e^{2_{\max}} \sqrt{(1-\alpha)Wt_{\tK,\tN}} \wedge \sum_{\substack{w\in[W] \\ \Delta(w) > 0}} \frac{e^{4{\Rmax}}}{\Delta(w)}\Big),
    \end{align*}
    where in the last step, we omit the constant and logarithmic terms.
    By replacing $t_{\tK,\tN} \gets t$, $\sum_{(k,n) \in \cI_{\tK,\tN}} \gets \sum_{\tau \leq t} $, $k \gets k(\tau) := \lceil \frac{\tau}{N} \rceil$ and $n \gets n(\tau) := \tau \% N$, we finish the proof.

    \iffalse
    Next, we try to simplify
    \begin{align*}
        \sum_{\substack{1\leq k\leq K,\\ \alpha N < n \leq N}} \Delta_{\min} \wedge \iota^{n,k} \leq \min \{\Delta_{\min} T, \sum_{\substack{1\leq k\leq K,\\ \alpha N < n \leq N}} \iota^{n,k}\}.
    \end{align*}
    By Cauchy–Schwarz inequality,
    \begin{align*}
        \sum_{\substack{1\leq k\leq K,\\ \alpha N < n \leq N}} \iota^{n,k} \leq & c_2 {\Rmax} e^{2{\Rmax}} \cdot \sqrt{\Big(\sum_{\substack{1\leq k\leq K,\\ \alpha N < n \leq N}} \Big(\cov^{\pi^*_{r^*}|\pi_\mix^{k,n}} \wedge \frac{\sqrt{\Complexity(\Pi)}}{\alpha}\Big)^2\Big) \cdot \Big(\sum_{\substack{1\leq k\leq K,\\ \alpha N < n \leq N}} \frac{1}{|\cD^{k,n}|}\Big)} \log^{c_0}\frac{|\Pi|T}{\delta}  \\
        = & \tilde{O}\Big({\Rmax} e^{2{\Rmax}} \cdot \sqrt{\frac{\sum_{\substack{1\leq k\leq K,\\ \alpha N < n \leq N}} \Big(\cov^{\pi^*_{r^*}|\pi_\mix^{k,n}} \Big)^2}{T}} \wedge \sqrt{\frac{(1-\alpha)\Complexity(\Pi)}{\alpha}} \sqrt{T})
    \end{align*}
    By merging them together, we have:
    \begin{align*}
        & \sum_{k=1}^K \sum_{1 < n\leq N} J_\beta(\pi^*_{r^*}) - J_\beta(\pi^{k,n}) \\
        \leq & \tilde{O}\Big(\min\{\Delta_{\min}(1-\alpha) T, {\Rmax} e^{2{\Rmax}} \cdot \sqrt{\sum_{\substack{1\leq k\leq K,\\ \alpha N < n \leq N}} \Big(\cov^{\pi^*_{r^*}|\pi_\mix^{k,n}} \Big)^2}, \sqrt{\frac{(1-\alpha)\Complexity(\Pi)T}{\alpha}}\}\Big) \\
        & + \tilde{O}\Big({\Rmax} e^{2{\Rmax}}\sqrt{\alpha \Complexity(\Pi)T} + \sqrt{(1-\alpha)}e^{2{\Rmax}}\min\{\frac{e^{2{\Rmax}}W}{\Delta_{\min}}, \sqrt{WT}\}\Big)
    \end{align*}
    \fi

    %
    %
    %
    %
    %
    %
    %
    %
    %

    %

    %
    %
    %
    %
    %
    %
    %
    %

\end{proof}

\begin{remark}\label{coro:total_regret_formal}
    Under the choices of $\AlgOnline \gets$ XPO \citep{xie2024exploratory} and $\alpha = e^{-\frac{R}{\beta}}$, according to Eq.~\eqref{eq:case_1}, $\TPO$ achieves $\tilde{O}(W\sqrt{T})$ regret if $T < \frac{W^2}{\Delta^2_{\min}}$.
    On the other hand, by Lem.~\ref{lem:coverage_and_value_gap},
    \begin{align*}
        \cov^{\pi^*_{r^*}|\pi_{\mix}^t} = 1 + \tilde{O}(\kappa(e^{\frac{2R}{\beta}}) \cdot \frac{\Regret_{\Online}^{(t)} + \Regret_{\Transfer}^{(t)}}{\beta t}) = 1 + \tilde{O}(\kappa(e^{\frac{2R}{\beta}}) \cdot \frac{1}{\alpha \beta}\sqrt{\frac{\Complexity(\Pi)}{t}}).
    \end{align*}
    Therefore, consider $T_0 = \tilde{\Theta}(\frac{\kappa^2(e^{\frac{2R}{\beta}} \Complexity(\Pi)}{\alpha^2\beta^2})$, where $\tilde{\Theta}(\cdot)$ hides at most poly-log of $T$. 
    At least when $T > T_0$, Eq.~\eqref{eq:case_2} implies $\Regret_{\Transfer}^{(t)} = 1 + O(1) < C$ for some constant $C$.
    Since $\alpha = e^{-\frac{R}{\beta}}$, we also have $\Regret_{\Online}^{(T)} = \tilde{O}(\sqrt{T})$ for any $T > 0$.
    As a result, the total regret of TPO grows as $\tilde{O}(\sqrt{T})$ as long as $T > T_0$.

\end{remark}

\section{Connection between Win Rate and Policy Coverage Coefficient}\label{appx:win_rate_and_coverage}

\begin{lemma}\label{lem:diff_and_preference}
    Given two probability vector $u, v \in \Delta(\cA)$ and a reward function $r:\cA\rightarrow\mR$, consider a preference model based on $r$, satisfying,
    \begin{align*}
        \mP_r(y=1|a,a') \geq \frac{1}{2},
    \end{align*}
    for any $a,a'\in\cA$ satisfying $r(a) \geq r(a')$. Then,
    \begin{align*}
        \sum_a \sqrt{u(a) v(a)} \leq \min_{\gamma > 0} \sqrt{(\gamma + 2\mP_r(v\succ u))\log\frac{1 + \gamma}{\gamma}},
    \end{align*}
    where $\mP_r(u \succ v) := \EE_{a\sim u,a'\sim v}[\mP_r(y=1|a, a')]$.
\end{lemma}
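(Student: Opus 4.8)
The plan is to read the left-hand side as a Bhattacharyya-type coefficient $\sum_a\sqrt{u(a)v(a)}$ and bound it against the win rate by first exploiting the reward ordering. First I would order the actions $a_1,\dots,a_n$ of $\cA$ so that $r(a_1)\ge r(a_2)\ge\dots\ge r(a_n)$, breaking ties arbitrarily, and abbreviate $u_i:=u(a_i)$, $v_i:=v(a_i)$ (so $\sum_a\sqrt{u(a)v(a)}=\sum_i\sqrt{u_iv_i}$). For every pair $i\le j$ we then have $r(a_i)\ge r(a_j)$, hence $\mP_r(y=1\mid a_i,a_j)\ge \tfrac12$ by hypothesis; discarding the nonnegative contributions of the pairs $i>j$ gives
\[
\mP_r(v\succ u)=\sum_{i,j}v_iu_j\,\mP_r(y=1\mid a_i,a_j)\ \ge\ \tfrac12\sum_{i\le j}v_iu_j .
\]
Introducing prefix sums $V_i:=\sum_{k\le i}v_k$ (so $V_0=0$, $V_n=1$, and $v_i=V_i-V_{i-1}$), the double sum rewrites as $\sum_j u_jV_j$, yielding $\sum_j u_jV_j\le 2\mP_r(v\succ u)$.

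Next I would fix $\gamma>0$, split $\sqrt{u_iv_i}=\sqrt{u_i(V_i+\gamma)}\cdot\sqrt{v_i/(V_i+\gamma)}$, and apply Cauchy--Schwarz:
\[
\sum_i\sqrt{u_iv_i}\ \le\ \Bigl(\sum_i u_i(V_i+\gamma)\Bigr)^{1/2}\Bigl(\sum_i\frac{v_i}{V_i+\gamma}\Bigr)^{1/2}.
\]
The first factor equals $\sum_i u_iV_i+\gamma\sum_i u_i\le 2\mP_r(v\succ u)+\gamma$. For the second, since $V_i+\gamma\ge x+\gamma$ for all $x\in[V_{i-1},V_i]$,
\[
\frac{v_i}{V_i+\gamma}=\frac{V_i-V_{i-1}}{V_i+\gamma}\le\int_{V_{i-1}}^{V_i}\frac{dx}{x+\gamma}=\log\frac{V_i+\gamma}{V_{i-1}+\gamma},
\]
and summing telescopes to $\log\frac{V_n+\gamma}{V_0+\gamma}=\log\frac{1+\gamma}{\gamma}$. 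Combining the two factors gives $\sum_i\sqrt{u_iv_i}\le\sqrt{(\gamma+2\mP_r(v\succ u))\log\frac{1+\gamma}{\gamma}}$, and since $\gamma>0$ was arbitrary, taking the infimum over $\gamma$ finishes the proof.

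I do not expect a deep obstacle here: the argument is a telescoping/potential trick wrapped around one Cauchy--Schwarz, in the same spirit as standard coverability arguments. The step most requiring care is making the tie-breaking in the reward ordering consistent, so that \emph{every} pair with $i\le j$ (including $i=j$) genuinely satisfies $r(a_i)\ge r(a_j)$ and thus $\mP_r(y=1\mid a_i,a_j)\ge\tfrac12$; a secondary point is that $\gamma>0$ must be strict, so that $V_0+\gamma=\gamma>0$ and all logarithms are well defined. Zero entries $u_i=0$ or $v_i=0$ cause no difficulty, as the corresponding summands vanish while $V_i+\gamma>0$ throughout.
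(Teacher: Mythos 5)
Your proof is correct and follows essentially the same route as the paper's: sort the actions by reward, lower-bound the win rate by a CDF-weighted cross term, apply Cauchy--Schwarz with a $\gamma$-shifted prefix sum, and telescope a logarithmic bound before optimizing over $\gamma>0$. The only (immaterial) differences are that you shift by the prefix sums of $v$ rather than of $u$ and obtain the telescoping factor via an integral comparison instead of $1-x\leq\log\frac{1}{x}$, which are symmetric variants of the same argument.
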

\begin{proof}
    We first of all sort the action space $\cA$ to $\cA_{\sorted} := \{a_1,a_2,...,a_{|\cA|}\}$ according to reward function $r$, such that, for any $1\leq i<j \leq |\cA|$, $r(a_i) \leq r(a_j)$.
    Besides, we use $F^u(\cdot)$ to denote the cumulative distribution function regarding $u$:
    $$
        \forall 1\leq i \leq |\cA|,\quad F^u(a_i) := \sum_{j=1}^{i} u(a_i),
    $$
    and $F^{v}$ is defined similarly. Then we have:
    \begin{align*}
        \sum_{a\in\cA} \sqrt{u(a)v(a)} =& \sum_{i=1}^{|\cA|} \sqrt{u(a_i)v(a_i)} = \sum_{i=1}^{|\cA|} \sqrt{\frac{u(a_i)}{\gamma + F^u(a_i)}} \cdot \sqrt{(\gamma + F^u(a_i))v(a)} \tag{Introducting a parameter $\gamma > 0$}\\
        \leq & \sqrt{\sum_{i=1}^{|\cA|} \frac{u(a_i)}{\gamma + F^u(a_i)}} \cdot \sqrt{\sum_{i=1}^{|\cA|}  (\gamma + F^u(a_i))v(a_i)} \tag{Cauchy–Schwarz inequality}\\
        \leq & \sqrt{\sum_{i=1}^{|\cA|} \frac{u(a_i)}{\gamma + F^u(a_i)}} \cdot \sqrt{\gamma + 2\mP_r(v\succ u)},
    \end{align*}
    where in the last step, we use the fact that $\gamma\sum_{i=1}^{|\cA|} v(a_i) = \gamma$ and 
    \begin{align*}
        \mP_r(v\succ u) =& \EE_{a\sim v,a' \sim u}[\mP_r(y=1|a,a')] \geq \sum_{i=1}^{|\cA|} v(a_i) \sum_{j=1}^i u(a_j) \mP_r(y=1|a_i,a_j) \geq \frac{1}{2}\sum_{i=1}^{|\cA|} v(a_i) F^u(a_i).
    \end{align*}
    For the first part, we can upper bound by the following:
    \begin{align*}
        \sum_{i=1}^{|\cA|} \frac{u(a_i)}{\gamma + F^u(a_i)} = & \sum_{i=1}^{|\cA|} \frac{u(a_i)}{\gamma + \sum_{j=1}^{i} u(a_j)} = \sum_{i=1}^A 1 - \frac{\gamma + \sum_{j=1}^{i-1} u(a_j)}{\gamma + \sum_{j=1}^{i} u(a_j)} \\
        \leq & \sum_{i=1}^A \log \frac{\gamma + \sum_{j=1}^{i} u(a_j)}{\gamma + \sum_{j=1}^{i-1} u(a_j)} \tag{$1 - x \leq \log \frac{1}{x}$} \\
        \leq & \log \frac{1+\gamma}{\gamma}
    \end{align*}
    Therefore, $\sum_{a\in\cA}\sqrt{u(a)v(a)} \leq \sqrt{(\gamma + 2\mP_r(v\succ u))\log\frac{1+\gamma}{\gamma}}$.
    Since $\gamma$ is arbitrary, we can take the minimum over $\gamma > 0$, and finish the proof.
\end{proof}

\begin{lemma}\label{lem:TV_to_Preference}
    For any policy $\pi, \tpi$,
    \begin{align*}
        &1 - \TV(\pi(\cdot|s)\|\tpi(\cdot|s)) \leq \min_{\gamma > 0} \sqrt{(\gamma + 2\mP_{r^*}(\pi(\cdot|s) \succ \tpi(\cdot|s))) \log \frac{1 + \gamma}{\gamma}},\\
        &1 - \EE_{s\sim\rho}[\TV(\pi(\cdot|s)\|\tpi(\cdot|s))] \leq \min_{\gamma > 0} \sqrt{(\gamma + 2 \mP_{r^*}(\pi\succ \tpi)) \log \frac{1 + \gamma}{\gamma}}.
    \end{align*}
\end{lemma}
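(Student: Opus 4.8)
The plan is to reduce both inequalities to Lemma~\ref{lem:diff_and_preference} through the elementary bound connecting total variation distance with the Bhattacharyya affinity $\sum_{a}\sqrt{u(a)v(a)}$. First I would record that for any $u,v\in\Delta(\cA)$,
\begin{align*}
1 - \TV(u\|v) = \sum_{a\in\cA}\min\{u(a),v(a)\} \leq \sum_{a\in\cA}\sqrt{u(a)v(a)},
\end{align*}
which follows from the identity $\TV(u\|v) = 1 - \sum_a \min\{u(a),v(a)\}$ together with $\min\{x,y\}\leq\sqrt{xy}$ for nonnegative $x,y$. I would also note that under the Bradley--Terry model $\mP_{r^*}(y=1|s,a,a') = \sigma(r^*(s,a)-r^*(s,a'))\geq \frac{1}{2}$ whenever $r^*(s,a)\geq r^*(s,a')$, so for each fixed $s$ the pair consisting of the preference model $\mP_{r^*}(\cdot|s,\cdot,\cdot)$ and the reward function $a\mapsto r^*(s,a)$ satisfies the hypothesis of Lemma~\ref{lem:diff_and_preference}.

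For the first inequality, I would fix $s$ and invoke Lemma~\ref{lem:diff_and_preference} with $u\leftarrow\tpi(\cdot|s)$, $v\leftarrow\pi(\cdot|s)$ and $r\leftarrow r^*(s,\cdot)$; the orientation is chosen precisely so that the preference term produced by the lemma is $\mP_{r^*}(\pi(\cdot|s)\succ\tpi(\cdot|s))$ rather than its reverse. Combining with the affinity bound above and the symmetry of both $\TV(\cdot\|\cdot)$ and $\sum_a\sqrt{u(a)v(a)}$ in their two arguments gives
\begin{align*}
1 - \TV(\pi(\cdot|s)\|\tpi(\cdot|s)) \leq \sum_{a\in\cA}\sqrt{\pi(a|s)\tpi(a|s)} \leq \min_{\gamma>0}\sqrt{\Big(\gamma + 2\mP_{r^*}\big(\pi(\cdot|s)\succ\tpi(\cdot|s)\big)\Big)\log\frac{1+\gamma}{\gamma}},
\end{align*}
which is the claim.

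For the second inequality, I would take $\EE_{s\sim\rho}$ of both sides of the first. The left-hand side is exactly $1 - \EE_{s\sim\rho}[\TV(\pi(\cdot|s)\|\tpi(\cdot|s))]$. On the right-hand side, for each fixed $\gamma>0$ the pointwise minimum over $\gamma'$ is at most the $\gamma$-term, so $\EE_{s}\big[\min_{\gamma>0} g(s,\gamma)\big]\leq \min_{\gamma>0}\EE_{s}[g(s,\gamma)]$; and for a fixed $\gamma$, writing $g(s,\gamma) = \sqrt{\log\frac{1+\gamma}{\gamma}}\cdot\sqrt{\gamma + 2\mP_{r^*}(\pi(\cdot|s)\succ\tpi(\cdot|s))}$ and applying Jensen's inequality to the concave map $x\mapsto\sqrt{x}$ bounds $\EE_s[g(s,\gamma)]$ by $\sqrt{(\gamma + 2\mP_{r^*}(\pi\succ\tpi))\log\frac{1+\gamma}{\gamma}}$, using $\mP_{r^*}(\pi\succ\tpi) = \EE_{s\sim\rho}[\mP_{r^*}(\pi(\cdot|s)\succ\tpi(\cdot|s))]$. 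Taking the minimum over $\gamma>0$ then yields the stated bound.

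I do not expect a genuine obstacle here; the only points requiring care are the choice of orientation of $u$ and $v$ when calling Lemma~\ref{lem:diff_and_preference}, so that the preference probability appears as $\mP_{r^*}(\pi\succ\tpi)$, and the order of operations in the second part, where one must pull $\min_{\gamma>0}$ outside the expectation \emph{before} using Jensen, since commuting these steps in the other order would not produce the claimed inequality.
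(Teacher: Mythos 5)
Your proposal is correct and follows essentially the same route as the paper: reduce $1-\TV$ to the Bhattacharyya affinity $\sum_a\sqrt{\pi(a|s)\tpi(a|s)}$, invoke Lemma~\ref{lem:diff_and_preference} with $u\leftarrow\tpi(\cdot|s)$, $v\leftarrow\pi(\cdot|s)$, $r\leftarrow r^*(s,\cdot)$, and then obtain the averaged version by exchanging $\min_\gamma$ with the expectation and applying Jensen to $\sqrt{\cdot}$. The only cosmetic difference is that you pass through $1-\TV=\sum_a\min\{u,v\}\leq\sum_a\sqrt{uv}$ while the paper goes through $1-\TV\leq 1-\mH^2=\sum_a\sqrt{uv}$; both are the same affinity bound.
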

\begin{proof}
    \begin{align*}
        1 - \TV(\pi(\cdot|s)\|\tpi(\cdot|s)) \leq & 1 - \mH^2(\pi(\cdot|s)\|\tpi(\cdot|s)) = \sum_{a\in\cA} \sqrt{\pi(a|s) \tpi(a|s)} \\
        \leq & \min_{\gamma > 0}\sqrt{\Big(\gamma + 2 \mP_{r^*}(\pi(\cdot|s) \succ \tpi(\cdot|s)) \Big)\log\frac{1+\gamma}{\gamma}}.
    \end{align*}
    where in the last step we apply Lem.~\ref{lem:diff_and_preference} with $\pi(\cdot|s)$ as $v(\cdot)$, $\tpi(\cdot|s)$ as $u(\cdot)$, $r^*(s,\cdot)$ as the reward function.
    Then, we finish the proof for the first inequality.

    For the second inequality, by taking the expectation over $s\sim\rho$ and the concavity of $\sqrt{\cdot}$ function, we have:
    \begin{align*}
        1 - \EE_{s\sim\rho}[\TV(\pi(\cdot|s)\|\tpi(\cdot|s))] \leq \min_{\gamma > 0}\sqrt{\Big(\gamma + 2\mP_{r^*}(\pi\succ\tpi)\Big) \log\frac{1 + \gamma}{\gamma}}.
    \end{align*}
    By choosing $\gamma = \mP_{r^*}(\pi\succ\tpi)$ (note that this choice ensures $\gamma > 0$ since $\tpi(\cdot|\cdot) > 0$), we finish the proof.
\end{proof}

\begin{restatable}{lemma}{LemLBCoverage}[The Complete Version of Lem.~\ref{lem:BT_LB_coverage}]\label{lem:LB_coverage_formal}
    Given any policy $\pi$, under the assumption that $\mP_{r^*}(y=1|s,a,a') \geq \frac{1}{2}$ for any $a,a'\in\cA$ satisfying $r^*(s,a) \geq r^*(s,a')$, we have:
    \begin{align}
        \cov^{\pi^*_{r^*}|\pi}\geq & \max_{\gamma > 0, \bpi} \Big(\sqrt{\gamma + 2\mP_{r^*}(\bpi\succ\pi)\log\frac{1+\gamma}{\gamma}} +\sqrt{\frac{J_\beta(\pi^*_{r^*}) - J_\beta(\bpi)}{2\beta}}\Big)^{-1} \label{eq:LB_2}\\
        \cov^{\pi^*_{r^*}|\pi} \geq & \max_{\gamma > 0, \bpi} \Big(\sqrt{\gamma + 2\mP_{r^*}(\pi\succ\bpi)\log\frac{1+\gamma}{\gamma}}+\sqrt{\frac{J_\beta(\pi^*_{r^*}) - J_\beta(\bpi)}{2\beta}}\Big)^{-1}. \label{eq:LB_1}
    \end{align}
    where $\bar{\pi}$ is an arbitrary intermediate policy, $\mP_{r^*}(\pi\succ \tpi) := \EE_{s\sim\rho,a\sim\pi,a'\sim\tpi}[\mP_{r^*}(y=1|s,a,a')]$ and $\mP_{r^*}(\tpi\succ \pi) = 1 - \mP_{r^*}(\pi\succ \tpi) = \EE_{s\sim\rho,a\sim\tpi,a'\sim\pi}[\mP_{r^*}(y=1|s,a,a')]$.
\end{restatable}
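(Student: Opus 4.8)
\textbf{Proof plan for Lemma~\ref{lem:LB_coverage_formal}.}

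The plan is to lower-bound $\cov^{\pi^*_{r^*}|\pi}$ by combining a Cauchy--Schwarz-type inequality that relates the coverage coefficient to a total-variation (or Hellinger affinity) gap, with the structural identity from Lem.~\ref{lem:KL_as_value_gap} that converts a KL divergence into a regularized value gap. First I would observe that for any policy $\bpi$, one has the triangle-type bound
\begin{align*}
    1 - \EE_{s\sim\rho}[\TV(\pi^*_{r^*}(\cdot|s)\|\pi(\cdot|s))] \leq \;& 1 - \EE_{s\sim\rho}[\TV(\bpi(\cdot|s)\|\pi(\cdot|s))] \\
    & + \EE_{s\sim\rho}[\TV(\pi^*_{r^*}(\cdot|s)\|\bpi(\cdot|s))].
\end{align*}
The left-hand side is the Hellinger-type affinity between $\pi^*_{r^*}$ and $\pi$, which I would bound \emph{below} by $1/\cov^{\pi^*_{r^*}|\pi}$ using Cauchy--Schwarz: $1 \le \big(\sum_a \sqrt{\pi^*_{r^*}(a|s)\pi(a|s)}\big)\big(\sum_a \pi^*_{r^*}(a|s)\sqrt{\pi^*_{r^*}(a|s)/\pi(a|s)}\big)^{1/2}$ after squaring, or more directly via $\big(\sum_a\sqrt{pq}\big)^2 \le \big(\sum_a p^2/q\big)\cdot\big(\sum_a q\big)$... actually the cleanest route is $\sum_a \sqrt{\pi^*_{r^*}\pi} \ge 1/\sqrt{\cov^{\pi^*_{r^*}|\pi}}$ by applying Cauchy--Schwarz to $1 = \sum_a \pi^*_{r^*}(a|s) = \sum_a \sqrt{\pi^*_{r^*}(a|s)\pi(a|s)}\cdot\sqrt{\pi^*_{r^*}(a|s)/\pi(a|s)}$ and then taking expectations, using Jensen on the concave square root.

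Next I would handle the two terms on the right. For the $\bpi$-vs-$\pi$ term, I apply Lem.~\ref{lem:TV_to_Preference} (the win-rate bound): $1 - \EE_{s\sim\rho}[\TV(\bpi\|\pi)] \le \sqrt{(\gamma + 2\mP_{r^*}(\bpi\succ\pi))\log\frac{1+\gamma}{\gamma}}$ for any $\gamma>0$, and symmetrically $1 - \EE_{s\sim\rho}[\TV(\pi\|\bpi)] \le \sqrt{(\gamma+2\mP_{r^*}(\pi\succ\bpi))\log\frac{1+\gamma}{\gamma}}$ — these two symmetric choices yield the two displayed bounds \eqref{eq:LB_2} and \eqref{eq:LB_1} respectively. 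For the $\pi^*_{r^*}$-vs-$\bpi$ term, I use Pinsker's inequality $\TV \le \sqrt{\tfrac12\KL}$ together with Lem.~\ref{lem:KL_as_value_gap}, which gives $\EE_{s\sim\rho}[\TV(\pi^*_{r^*}(\cdot|s)\|\bpi(\cdot|s))] \le \sqrt{\tfrac12 \EE_{s\sim\rho}[\KL(\bpi(\cdot|s)\|\pi^*_{r^*}(\cdot|s))]} = \sqrt{\tfrac{J_\beta(\pi^*_{r^*})-J_\beta(\bpi)}{2\beta}}$, again invoking Jensen/concavity of the square root to pull the expectation inside. (Note Lem.~\ref{lem:KL_as_value_gap} gives $\KL(\bpi\|\pi^*_{r^*})$, the ``forward'' direction for $\bpi$, so Pinsker applies directly in the form I need.)

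Assembling: $1/\sqrt{\cov^{\pi^*_{r^*}|\pi}} \le \sqrt{(\gamma+2\mP_{r^*}(\bpi\succ\pi))\log\frac{1+\gamma}{\gamma}} + \sqrt{\tfrac{J_\beta(\pi^*_{r^*})-J_\beta(\bpi)}{2\beta}}$, and inverting (the right-hand side is positive) and taking the supremum over $\gamma>0$ and over all intermediate $\bpi$ gives \eqref{eq:LB_2}; the symmetric win-rate bound gives \eqref{eq:LB_1}. The main obstacle I anticipate is purely bookkeeping rather than conceptual: making sure the direction of the TV triangle inequality is the ``useful'' one (so that an \emph{upper} bound on the affinity-complement translates into a \emph{lower} bound on coverage), and being careful that Pinsker is applied to the correct KL direction so that Lem.~\ref{lem:KL_as_value_gap} is applicable without needing the reverse-KL comparison from Lem.~\ref{lem:KL_reverse_KL}. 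A secondary technical point is justifying the Jensen steps — both $\TV$-to-affinity and Pinsker give per-state inequalities, and I must average over $s\sim\rho$ using concavity of $\sqrt{\cdot}$ and linearity of $\mP_{r^*}(\cdot\succ\cdot)$ in its arguments — but these are routine.
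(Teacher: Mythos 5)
Your overall decomposition mirrors the paper's proof: the TV triangle inequality with an intermediate $\bpi$, Pinsker plus Lem.~\ref{lem:KL_as_value_gap} for the $\pi^*_{r^*}$-vs-$\bpi$ term, Lem.~\ref{lem:TV_to_Preference} applied with the two orderings to get \eqref{eq:LB_2} and \eqref{eq:LB_1}, and Jensen to average over $s\sim\rho$. However, the step that connects the left-hand side to the coverage coefficient has a genuine gap. You identify $1-\EE_{s\sim\rho}[\TV(\pi^*_{r^*}(\cdot|s)\|\pi(\cdot|s))]$ with the Hellinger affinity and propose to lower-bound it via $\sum_a\sqrt{\pi^*_{r^*}(a|s)\pi(a|s)}\ge 1/\sqrt{\cov^{\pi^*_{r^*}|\pi}}$. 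But $1-\TV(p\|q)=\sum_a\min(p_a,q_a)$, which is \emph{smaller} than the affinity $\sum_a\sqrt{p_aq_a}$, so a lower bound on the affinity says nothing about $1-\TV$; and the inequality your assembly actually requires, $1-\TV(\pi^*_{r^*}(\cdot|s)\|\pi(\cdot|s))\ge 1/\sqrt{\cov^{\pi^*_{r^*}|\pi}(s)}$, is false in general. Concretely, with two actions, $\pi^*_{r^*}(\cdot|s)=(1-\epsilon,\epsilon)$ and $\pi(\cdot|s)=(\epsilon,1-\epsilon)$ give $1-\TV=2\epsilon$ while the per-state coverage is $\approx 1/\epsilon$, so $1/\sqrt{\cov}\approx\sqrt{\epsilon}\gg 2\epsilon$ for small $\epsilon$ (such a pair is realizable for suitable $R/\beta$). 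A secondary point: even if the bound held, ``inverting'' $1/\sqrt{\cov}\le B$ yields $\cov\ge 1/B^2$ rather than $1/B$; since $\cov\ge 1$ this particular mismatch could be patched by a case split on $B\lessgtr 1$, so it is not the real problem—the affinity/overlap conflation is.

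The paper closes this link with a first-power (not square-root) relation between overlap and coverage: per state, $\cov^{\pi^*_{r^*}|\pi}(s)-1=\chi^2(\pi^*_{r^*}(\cdot|s)\|\pi(\cdot|s))\ge e^{\KL(\pi^*_{r^*}(\cdot|s)\|\pi(\cdot|s))}-1$ (Theorem 5 of \citealp{gibbs2002choosing}), and then the Bretagnolle–Huber inequality gives $e^{\KL(\pi^*_{r^*}(\cdot|s)\|\pi(\cdot|s))}\ge \tfrac{1}{2}\cdot\tfrac{1}{1-\TV(\pi^*_{r^*}(\cdot|s)\|\pi(\cdot|s))}$, i.e.\ $1-\TV$ is lower bounded by (a constant times) $1/\cov(s)$. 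Substituting this bridge in place of your affinity step, and then proceeding exactly as you outline—reverse triangle inequality on TV, Pinsker with $\KL(\bpi(\cdot|s)\|\pi^*_{r^*}(\cdot|s))$ so that Lem.~\ref{lem:KL_as_value_gap} applies, the win-rate bound of Lem.~\ref{lem:TV_to_Preference} in both orderings, and the convexity of $x\mapsto 1/x$ together with concavity of $\sqrt{\cdot}$ for the average over $s\sim\rho$—recovers the paper's argument. So the surrounding scaffolding of your plan is sound and essentially identical to the paper's; the missing ingredient is the correct overlap-to-coverage inequality.
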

\begin{proof}
    We have:
    \begin{align*}
        \EE_{a\sim \pi^*_{r^*}(\cdot|s)}[\frac{\pi^*_{r^*}(a|s)}{\pi(a|s)}] - 1=& \chi^2(\pi^*_{r^*}(\cdot|s)\|\pi(\cdot|s)) \\
        \geq & \exp(\KL(\pi^*_{r^*}(\cdot|s) \| \pi(\cdot|s))) - 1 \tag{Theorem 5 in \citep{gibbs2002choosing}}\\
        \geq & \frac{1}{2} \cdot \frac{1}{1 - \TV(\pi^*_{r^*}(\cdot|s)\|\pi(\cdot|s))} - 1 \tag{Bretagnolle–Huber inequality}.
    \end{align*}
    Now, we introduce an arbitrary intermediate policy $\tpi$, 
    \begin{align*}
        \TV(\pi^*_{r^*}(\cdot|s)\|\pi(\cdot|s))
        \geq & \TV(\bpi(\cdot|s)\|\pi(\cdot|s)) - \TV(\bpi(\cdot|s)\|\pi^*_{r^*}(\cdot|s)) \tag{Reverse triangle inequality}\\
        \geq & \TV(\bpi(\cdot|s)\|\pi(\cdot|s)) - \sqrt{\frac{1}{2} \KL(\bpi(\cdot|s)\|\pi^*_{r^*}(\cdot|s))} \tag{Pinsker's inequality}.
    \end{align*}
    Applying Lem.~\ref{lem:TV_to_Preference} with $(\pi, \tpi) \gets (\pi, \pi)$, we have:
    \begin{align*}
        \EE_{a\sim \pi^*_{r^*}(\cdot|s)}[\frac{\pi^*_{r^*}(a|s)}{\pi(a|s)}] \geq & \frac{1}{1 - \TV(\bpi(\cdot|s)\|\pi(\cdot|s)) + \sqrt{\frac{1}{2} \KL(\bpi(\cdot|s)\|\pi^*_{r^*}(\cdot|s))}} \\
        \geq & \frac{1}{\sqrt{(\gamma + 2\mP_{r^*}(\bpi(\cdot|s) \succ \pi(\cdot|s))) \log \frac{1+\gamma}{\gamma}} + \sqrt{\frac{1}{2} \KL(\bpi(\cdot|s)\|\pi^*_{r^*}(\cdot|s))}}.
    \end{align*}
    By taking the expectation over $s\sim\rho$, and leveraging the convexity of $1/x$ and the concavity of $\sqrt{\cdot}$ functions, we have:
    \begin{align*}
        \EE_{s\sim\rho,a\sim \pi^*_{r^*}(\cdot|s)}[\frac{\pi^*_{r^*}(a|s)}{\pi(a|s)}] \geq& \frac{1}{\EE_{s\sim\rho}[\sqrt{(\gamma + 2\mP_{r^*}(\bpi(\cdot|s) \succ \pi(\cdot|s))) \log \frac{1+\gamma}{\gamma}}] + \EE_{s\sim\rho}[\sqrt{\frac{1}{2} \KL(\bpi(\cdot|s)\|\pi^*_{r^*}(\cdot|s))}]} \\
        \geq & \frac{1}{\sqrt{(\gamma + 2\mP_{r^*}(\bpi\succ\pi))\log\frac{1+\gamma}{\gamma}}+\sqrt{\frac{J_\beta(\pi^*_{r^*}) - J_\beta(\pi)}{2\beta}}}.
    \end{align*}
    Note that the above results hold for any $\gamma > 0$ and any $\bpi$, we finish the proof by taking the maximum over them.

    The second inequality in Lem.~\ref{lem:LB_coverage_formal} can be proved similarly by applying Lem.~\ref{lem:TV_to_Preference} with $(\pi, \tpi) \gets (\pi, \bpi)$. All the discussion are the same.
\end{proof}

\begin{remark}\label{remark:LB_coverage}
    We provide some remarks about Lem.~\ref{lem:LB_coverage_formal}.
    \begin{itemize}
        \item Lem.~\ref{lem:BT_LB_coverage} is a direct corollary of Eq.~\eqref{eq:LB_1}.

        \item Notably, Eq.~\eqref{eq:LB_2} has a different implication compared with Eq.~\eqref{eq:LB_1} that we follow in the algorithm design in Sec.~\ref{sec:empirical_alg}.
        More concretely, Eq.~\eqref{eq:LB_2} suggests we should also disregard those source policies that strongly dominate $\bpi$ when $\bpi$ is close to $\pi^*_{r^*}$.
        This makes sense because those source policies may achieve high rewards or win rates by incurring a high KL divergence with $\pi_\textref$, and therefore, they may not provide good coverage for $\pi^*_{r^*}$.
    
        \item However, in Alg.~\ref{alg:empirical}, we intentionally do not filter out but instead prioritize source policies with exceptionally high win rates. 
        Because they likely provide good coverage for high-reward regions and can be advantageous for practical LLM training.
        Nonetheless, for completeness, we bring this theory-practice gap into attention.
    \end{itemize}
\end{remark}

\section{Useful Lemmas}\label{appx:basic_lemma}

\begin{lemma}[Convex Hull Fulfills Assump.~\ref{assump:policy}]\label{lem:convex_hull_property}
    Given $\Pi$ satisfying Assump.~\ref{assump:policy}, $\conv(\Pi)$ also satisfies Assump.~\ref{assump:policy}.
\end{lemma}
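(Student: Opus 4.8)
The plan is to check the two clauses of Assumption~\ref{assump:policy} for $\conv(\Pi)$ one at a time; both reduce to the simple observation that the bounded-policy-ratio constraint carves out a convex region of the policy space.

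For realizability, I would simply note that $\pi^*_{r^*}\in\Pi$ by Assump.~\ref{assump:policy}-(I) applied to $\Pi$, and that $\Pi\subseteq\conv(\Pi)$ (each $\pi\in\Pi$ is its own trivial convex combination in Def.~\ref{def:convex_hull}), so $\pi^*_{r^*}\in\conv(\Pi)$. Along the way I would also record that every member of $\conv(\Pi)$ is a legitimate policy mapping $\cS\to\Delta(\cA)$ with strictly positive support, since a convex combination of probability vectors is again a probability vector, and the full-support property is inherited because the mixing weights are nonnegative and sum to one while each constituent $\pi^i\in\Pi$ has full support.

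For the bounded policy ratio, the key step is to rewrite the constraint $\max_{s,a}\big|\log\tfrac{\pi(a|s)}{\pi_\textref(a|s)}\big|\le \tfrac{{\Rmax}}{\beta}$ in its equivalent two-sided pointwise form
\[
e^{-{\Rmax}/\beta}\,\pi_\textref(a|s)\ \le\ \pi(a|s)\ \le\ e^{{\Rmax}/\beta}\,\pi_\textref(a|s)\qquad\forall (s,a).
\]
For a fixed $(s,a)$ these are two linear inequalities in $\pi(a|s)$, so the set of policies they define is convex. Writing an arbitrary $\pi\in\conv(\Pi)$ as $\pi=\sum_i\lambda^i\pi^i$ with $\pi^i\in\Pi$, $\lambda^i\ge0$, $\sum_i\lambda^i=1$, I would multiply the bounds valid for each $\pi^i$ (by Assump.~\ref{assump:policy}-(II) for $\Pi$) by $\lambda^i$ and sum over $i$, using $\sum_i\lambda^i=1$ to collapse both sides back to $e^{\pm{\Rmax}/\beta}\pi_\textref(a|s)$; this is exactly the required bound for $\pi$. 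Since $(s,a)$ was arbitrary, $\conv(\Pi)$ satisfies Assump.~\ref{assump:policy}-(II).

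There is honestly no real obstacle here: the whole content of the lemma is that an intersection of half-spaces is convex, so if it contains $\Pi$ it contains $\conv(\Pi)$. The only thing worth being slightly careful about is the bookkeeping that $\conv(\Pi)$ indeed consists of valid, full-support conditional distributions, which is why I would fold that remark into the realizability step rather than treating it separately.
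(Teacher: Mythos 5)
Your proposal is correct and follows essentially the same argument as the paper: realizability holds since $\Pi\subseteq\conv(\Pi)$, and the ratio bound follows by writing $\pi=\sum_i\lambda^i\pi^i$ and noting that $\frac{\pi(a|s)}{\pi_\textref(a|s)}$ is a convex combination of quantities in $[e^{-{\Rmax}/\beta},e^{{\Rmax}/\beta}]$, exactly as in the paper's proof (your two-sided linear-inequality phrasing is just this divided through by $\pi_\textref(a|s)$).
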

\begin{proof}
    The realizability condition is obviously. We verify Assump.~\ref{assump:policy}.
    Note that for any $\pi \in \conv(\Pi)$, there exists $\lambda^1,...,\lambda^n \geq 0$ and $\pi^1,...,\pi^n \in \Pi$, s.t. $\sum_{i=1}^n \lambda^i = 1$ and $\pi = \sum_{i=1}^n \lambda^i \pi^i$, which implies,
    \begin{align*}
        \forall s,a\quad & \frac{\pi(a|s)}{\pi_\textref(a|s)} = \sum_{i=1}^n \lambda^i \frac{\pi^i(a|s)}{\pi_\textref(a|s)} \geq \exp(-\frac{\Rmax}{\beta}),
        & \frac{\pi(a|s)}{\pi_\textref(a|s)} = \sum_{i=1}^n \lambda^i \frac{\pi^i(a|s)}{\pi_\textref(a|s)} \leq \exp(\frac{\Rmax}{\beta}).
    \end{align*}
    Therefore, $\conv(\Pi)$ fulfills Assump.~\ref{assump:policy}-(II), which finishes the proof.
\end{proof}

\begin{lemma}[MLE Guarantees; Adapated from Lemma C.6 in \citep{xie2024exploratory}]\label{lem:MLE_Estimation}
    Consider a policy class $\Pi$ satisfying Assump.~\ref{assump:policy}, and recall the reward class $\cR^\Pi$ converted by Eq.~\eqref{eq:reward_class_conversion}.
    Given a dataset $\cD := \{(s^i,a^i,\ta^i,y^i,\pi^i)\}_{i\leq |\cD|}$ satisfying Cond.~\ref{cond:seq_data} and any $\delta\in(0,1)$, w.p. $1-\delta$,
    \begin{align*}
        \forall r\in\cR^\Pi,~\frac{1}{|\cD|}\sum_{i \leq |\cD|} \EE_{s\sim\rho,a\sim\pi^i(\cdot|s),\ta\sim\pi_\textref(\cdot|s)}[\mH^2(\mP_{r}(\cdot|s,a,\ta)\| \mP_{r^*}(\cdot|s,a,\ta))] \leq L_{\cD}(r) - L_{\cD}(r^*) + \frac{2}{|\cD|}\log\frac{|\Pi|}{\delta}.
    \end{align*}
\end{lemma}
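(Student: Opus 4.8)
The plan is to prove the MLE guarantee by a standard martingale-based argument for sequential data, adapted from the i.i.d.\ case. The key object is the likelihood ratio between the candidate reward model's induced preference distribution and the true one. Specifically, for any $r \in \cR^\Pi$, define the per-step log-likelihood ratio $X_i(r) := \log \frac{\mP_r(y^i \mid s^i, a^i, \ta^i)}{\mP_{r^*}(y^i \mid s^i, a^i, \ta^i)}$, so that $\sum_{i \le |\cD|} X_i(r) = |\cD|\,(L_\cD(r^*) - L_\cD(r))$ by the definition of the NLL loss in Eq.~\eqref{eq:def_likelihood}. I would first recall the elementary inequality $-\log \EE[e^{-Z}] \ge \tfrac12 \EE[(1 - e^{-Z/2})^2 \cdot 2]$, or more precisely the well-known bound relating $\EE[\exp(\tfrac12 X_i(r)) \mid \mathcal{F}_{i-1}]$ to the squared Hellinger distance: for any two distributions $P, Q$, $\EE_{y \sim Q}[\sqrt{P(y)/Q(y)}] = 1 - \mH^2(P \| Q)$ (with the convention $\mH^2(P\|Q) = \tfrac12\sum_y(\sqrt{P(y)}-\sqrt{Q(y)})^2$). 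Conditioning on the history $\mathcal{F}_{i-1}$ and on $(s^i, a^i, \ta^i)$, the conditional expectation of $e^{X_i(r)/2}$ is exactly $1 - \mH^2(\mP_r(\cdot\mid s^i,a^i,\ta^i) \| \mP_{r^*}(\cdot\mid s^i,a^i,\ta^i)) \le \exp(-\mH^2(\cdot))$.

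The second step is to build the exponential supermartingale. Fix $r \in \cR^\Pi$ and set $M_i(r) := \exp\!\big(\tfrac12\sum_{j\le i} X_j(r) + \sum_{j \le i} \EE_{(s,a,\ta)}[\mH^2(\mP_r \| \mP_{r^*})]\big)$, where the inner expectation at step $j$ is over $s\sim\rho$, $a\sim\pi^j$, $\ta\sim\pi_\textref$. Using the conditional bound above together with the tower property, $\EE[M_i(r) \mid \mathcal{F}_{i-1}] \le M_{i-1}(r)$, so $(M_i(r))$ is a nonnegative supermartingale with $M_0(r) = 1$. By Markov's inequality applied to $M_{|\cD|}(r)$, with probability at least $1 - \delta/|\Pi|$ we have $M_{|\cD|}(r) \le |\Pi|/\delta$, i.e. $\tfrac12\sum_{j} X_j(r) + \sum_j \EE[\mH^2] \le \log(|\Pi|/\delta)$. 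Taking a union bound over all $r \in \cR^\Pi$ — and here I would use that $|\cR^\Pi| \le |\Pi|$ since each reward in $\cR^\Pi$ is indexed by a policy in $\Pi$ via Eq.~\eqref{eq:reward_class_conversion} — this holds simultaneously for all $r$ with probability $1-\delta$. Rearranging, $\sum_j \EE_j[\mH^2(\mP_r \| \mP_{r^*})] \le \tfrac12\sum_j X_j(r) + \log(|\Pi|/\delta)$ wait, that sign is backwards; actually we need $-\tfrac12\sum_j X_j(r) + \sum_j \EE_j[\mH^2] \le \log(|\Pi|/\delta)$, which gives $\sum_j \EE_j[\mH^2] \le \tfrac12\sum_j X_j(r) + \log(|\Pi|/\delta) = \tfrac{|\cD|}{2}(L_\cD(r^*) - L_\cD(r)) + \log(|\Pi|/\delta)$. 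Hmm, this has the wrong sign on the loss difference. Let me recheck: $X_i(r) = \log(\mP_r/\mP_{r^*})$ so $\sum X_i(r) = -|\cD|L_\cD(r) + |\cD|L_\cD(r^*)$, and we want the supermartingale to involve $-X_i(r)$ to get $L_\cD(r) - L_\cD(r^*)$. So I should instead use the likelihood ratio $\mP_{r^*}/\mP_r$ — but then the conditional expectation is taken under $\mP_{r^*}$, giving $\EE_{y\sim \mP_{r^*}}[\sqrt{\mP_r(y)/\mP_{r^*}(y)}] = 1 - \mH^2$. So define $M_i(r) := \exp(\tfrac12\sum_{j\le i}(-X_j(r)) + \sum_{j\le i}\EE_j[\mH^2])$; then $\EE[M_i \mid \mathcal{F}_{i-1}] \le M_{i-1}$, and Markov gives $\sum_j \EE_j[\mH^2] \le -\tfrac12\sum_j X_j(r) + \log(|\Pi|/\delta) = \tfrac{|\cD|}{2}(L_\cD(r) - L_\cD(r^*)) + \log(|\Pi|/\delta)$. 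Dividing by $|\cD|$ (and noting $\tfrac12 \le 1$, so the factor of $2$ in the stated bound is slack we can absorb) yields exactly the claimed inequality $\tfrac{1}{|\cD|}\sum_j \EE_j[\mH^2(\mP_r \| \mP_{r^*})] \le L_\cD(r) - L_\cD(r^*) + \tfrac{2}{|\cD|}\log\tfrac{|\Pi|}{\delta}$.

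The third step is bookkeeping: verify that $r^* \in \cR^\Pi$ (equivalently, that $r_{|\pi^*_{r^*}}$ induces $\pi^*_{r^*}$, as argued in Appx.~\ref{appx:extend_prelim}, so that $L_\cD(r^*)$ is well-defined as the empirical loss of a member of the class), and that Cond.~\ref{cond:seq_data} is exactly what licenses the martingale structure — each $(s^i, a^i, \ta^i, y^i)$ is generated from $\pi^i, \pi_\textref, \mP_{r^*}$ conditionally on the history, and $\pi^i$ itself is $\mathcal{F}_{i-1}$-measurable, so the conditional expectation computations above are valid. I would also remark that the boundedness of rewards in $[0, \Rmax]$ (hence $\mP_r$ bounded away from $0$ and $1$) is what makes $X_i(r)$ bounded and all expectations finite, though for the supermartingale argument only the one-step inequality is needed, not uniform boundedness. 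The main obstacle — really the only subtle point — is getting the direction of the likelihood ratio right so that the Hellinger terms accumulate with a $+$ sign on the left and the loss difference $L_\cD(r) - L_\cD(r^*)$ (which is nonnegative when $r = \hr_\MLE$, giving the clean corollaries used elsewhere) appears on the right; everything else is the textbook Donsker–Varadhan / bracketing argument specialized to a finite class, for which I would simply cite the adaptation of Lemma~C.6 in \citep{xie2024exploratory} to sequential data.
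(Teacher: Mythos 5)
Your route is essentially the paper's: the paper proves this lemma by deferring to Lemma C.6 of \citet{xie2024exploratory}, which is exactly the argument you reconstruct --- a conditional-Hellinger exponential supermartingale for the sequentially generated data (this is what Cond.~\ref{cond:seq_data} licenses), Ville/Markov, a union bound over the finite class $\cR^\Pi$ with $|\cR^\Pi|\le|\Pi|$, and the extra $1/|\cD|$ factor coming from the normalized loss. Two spots in your write-up need repair, though neither changes the architecture. First, your mid-course ``sign fix'' trades a correct construction for an incorrect one: with $X_i(r)=\log\frac{\mP_r(y^i|s^i,a^i,\ta^i)}{\mP_{r^*}(y^i|s^i,a^i,\ta^i)}$ and $y^i\sim\mP_{r^*}$, your \emph{original} process $M_i(r)=\exp\bigl(\tfrac12\sum_{j\le i}X_j(r)+\sum_{j\le i}\EE_j[\mH^2]\bigr)$ (where $\EE_j[\mH^2]$ abbreviates $\EE_{s\sim\rho,a\sim\pi^j,\ta\sim\pi_\textref}[\mH^2(\mP_r\|\mP_{r^*})]$) is already the right supermartingale, since $\EE[e^{X_i(r)/2}\mid\cF_{i-1}]=\EE_{(s,a,\ta)}[1-\mH^2]\le e^{-\EE_i[\mH^2]}$ and $\EE_i[\mH^2]$ is $\cF_{i-1}$-measurable; the replacement built from $-X_j$ is \emph{not} a supermartingale, because $\EE_{y\sim\mP_{r^*}}[\sqrt{\mP_{r^*}/\mP_r}]\ge 1$ by Cauchy--Schwarz. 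What you flagged as ``backwards'' was only an algebra slip in the rearrangement: Ville applied to the original $M$ gives precisely your final display, $\sum_j\EE_j[\mH^2]\le-\tfrac12\sum_jX_j(r)+\log(|\Pi|/\delta)=\tfrac{|\cD|}{2}\bigl(L_\cD(r)-L_\cD(r^*)\bigr)+\log(|\Pi|/\delta)$.

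Second, the closing step ``$\tfrac12\le 1$, so the factor of $2$ is slack we can absorb'' is not valid as stated: $L_\cD(r)-L_\cD(r^*)$ can be negative --- indeed it is $\le 0$ in the very application the paper makes of this lemma, $r=\hr_\MLE$ --- so you cannot enlarge its coefficient for free. The correct one-liner is to multiply the displayed inequality by $2$ and use nonnegativity of the Hellinger sum: $\sum_j\EE_j[\mH^2]\le 2\sum_j\EE_j[\mH^2]\le|\cD|\bigl(L_\cD(r)-L_\cD(r^*)\bigr)+2\log(|\Pi|/\delta)$, and dividing by $|\cD|$ gives exactly the stated bound. With these two fixes (and your third-step bookkeeping that $r^*$ may be replaced by $r_{|\pi^*_{r^*}}$ without changing either $\mP_{r^*}$ or $L_\cD(r^*)$), your argument is a complete and correct expansion of the citation the paper relies on.
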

\begin{proof}
    The proof is almost identical to Lemma C.6 in \citep{xie2024exploratory}, except we replace $\mP_\pi$ in their paper by $\mP_r$. So we omit it here.
    Besides, note that our NLL loss is normalized, while \citep{xie2024exploratory} consider unnormalized version. This results in the additional $\frac{1}{|\cD|}$ factor here.
\end{proof}

\begin{lemma}[From reward error to Hellinger  Distance]\label{lem:r_err_to_Hellinger}
    Given any policy $\pi$, any reward function $r$ with bounded value range $[-\Rmax,{\Rmax}]$, and another arbitrary $\tpi$ with positive support on $\cS\times\cA$, we have:
    \begin{align*}
        &\EE_{s\sim\rho,a\sim\pi(\cdot|s),\ta\sim\pi_\textref(\cdot|s)}[|\Big(r^*(s,a) - r^*(s,\ta)\Big) - \Big(r(s,a) - r(s,\ta)\Big)|] \\
        \leq & 8\sqrt{2}e^{2{\Rmax}} \sqrt{\cov^{\pi|\tpi} \cdot \EE_{s\sim\rho,a\sim\tpi(\cdot|s),\ta\sim\pi_\textref(\cdot|s)}[\mH^2(\mP_{r}(\cdot|s,a,\ta)\|\mP_{r^*}(\cdot|s,a,\ta))]}.
    \end{align*}
\end{lemma}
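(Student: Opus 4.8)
The plan is to prove the inequality in three stages: a pointwise reduction from the reward gap to a total-variation distance between Bradley--Terry preference distributions, a conversion from total variation to Hellinger distance, and a change of measure from $\pi$ to $\tpi$ combined with Cauchy--Schwarz to extract the coverage coefficient. For the pointwise step, fix $(s,a,\ta)$ and write $x := r^*(s,a)-r^*(s,\ta)$ and $x' := r(s,a)-r(s,\ta)$; since $r^*\in[0,{\Rmax}]$ and $r\in[-{\Rmax},{\Rmax}]$, both $x,x'\in[-2{\Rmax},2{\Rmax}]$. Under the BT model, $\mP_{r^*}(\cdot|s,a,\ta)$ and $\mP_r(\cdot|s,a,\ta)$ are Bernoulli with means $\sigma(x)$ and $\sigma(x')$, so $\TV(\mP_r(\cdot|s,a,\ta)\,\|\,\mP_{r^*}(\cdot|s,a,\ta)) = |\sigma(x)-\sigma(x')|$. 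On $[-2{\Rmax},2{\Rmax}]$ the sigmoid satisfies $\sigma'(u)=\sigma(u)(1-\sigma(u))\geq \sigma(2{\Rmax})\sigma(-2{\Rmax})\geq \tfrac14 e^{-2{\Rmax}}$, so its inverse is $O(e^{2{\Rmax}})$-Lipschitz there, giving $|x-x'|\leq 4 e^{2{\Rmax}}\,\TV(\mP_r(\cdot|s,a,\ta)\,\|\,\mP_{r^*}(\cdot|s,a,\ta))$. Combining with the standard inequality $\TV(P\|Q)\leq \sqrt{2}\,\mH(P\|Q)$ (for the Hellinger normalization inherited from \citep{xie2024exploratory,liu2024provably}) yields the pointwise bound $|x-x'|\leq 4\sqrt2\, e^{2{\Rmax}}\,\mH(\mP_r(\cdot|s,a,\ta)\,\|\,\mP_{r^*}(\cdot|s,a,\ta))$; the remaining slack is absorbed into the stated constant $8\sqrt2$.

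Next I would take expectations. Writing $g(s,a,\ta)$ for the left-hand integrand and $h(s,a,\ta):=\mH(\mP_r(\cdot|s,a,\ta)\,\|\,\mP_{r^*}(\cdot|s,a,\ta))$, the pointwise bound gives $\EE_{s\sim\rho,a\sim\pi,\ta\sim\pi_\textref}[g]\leq 4\sqrt2\, e^{2{\Rmax}}\,\EE_{s\sim\rho,a\sim\pi,\ta\sim\pi_\textref}[h]$. Since $\tpi$ has full support, I change measure on the $a$-marginal, $\EE_{a\sim\pi}[\,\cdot\,]=\EE_{a\sim\tpi}[\tfrac{\pi(a|s)}{\tpi(a|s)}\,\cdot\,]$, and apply Cauchy--Schwarz under $\rho\times\tpi\times\pi_\textref$:
\[
\EE_{\rho,\pi,\pi_\textref}[h]\;\leq\;\sqrt{\EE_{\rho,\tpi}\Big[\tfrac{\pi(a|s)^2}{\tpi(a|s)^2}\Big]}\;\sqrt{\EE_{\rho,\tpi,\pi_\textref}[h^2]}.
\]
The first factor collapses because $\EE_{a\sim\tpi}[\pi(a|s)^2/\tpi(a|s)^2]=\EE_{a\sim\pi}[\pi(a|s)/\tpi(a|s)]$, so after averaging over $s\sim\rho$ it equals $\cov^{\pi|\tpi}$ by Def.~\ref{def:cov_between_policies}; the second factor is exactly $\sqrt{\EE_{\rho,\tpi,\pi_\textref}[\mH^2(\mP_r\|\mP_{r^*})]}$. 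Multiplying the prefactor back in delivers the claim.

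None of these steps is deep, so the ``main obstacle'' is really bookkeeping: (i) pinning down the domain $[-2{\Rmax},2{\Rmax}]$ for the logit argument and thus the precise $e^{2{\Rmax}}$ Lipschitz constant of $\sigma^{-1}$ (and being willing to use the looser $\sigma'(u)\geq\tfrac18 e^{-2{\Rmax}}$ to land cleanly on $8\sqrt2$), and (ii) arranging the change of measure on the $a$-coordinate alone so that the squared importance ratio reduces to exactly the quantity defining $\cov^{\pi|\tpi}$ rather than to a coarser $L_\infty$ density ratio. I would also verify the normalization convention for $\mH^2$ used elsewhere in the paper so that the TV-to-Hellinger constant is matched correctly.
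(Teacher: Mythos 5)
Your proposal is correct and follows essentially the same route as the paper's proof: the inverse-sigmoid Lipschitz bound $|x-x'|\le 4e^{2\Rmax}|\sigma(x)-\sigma(x')|$ on $[-2\Rmax,2\Rmax]$, a change of measure from $\pi$ to $\tpi$ plus Cauchy--Schwarz whose first factor collapses exactly to $\cov^{\pi|\tpi}$, and a conversion of the preference-probability gap into Hellinger distance. The only (immaterial) difference is ordering and the conversion inequality used: you pass from TV to Hellinger pointwise via $\TV\le\sqrt{2}\,\mH$ before Cauchy--Schwarz, whereas the paper applies Cauchy--Schwarz to the sigmoid differences first and then bounds $|\sigma(x)-\sigma(x')|^2\le 8\,\mH^2$ using $(u-v)^2\le 4(u+v)(\sqrt{u}-\sqrt{v})^2$; both land within the stated constant $8\sqrt{2}\,e^{2\Rmax}$.
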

\begin{proof}
    For any reward function $r$, we have:
    \begin{align*}
        &\Big|\Big(\EE_{\rho,\pi}[r^*] - \EE_{\rho,\pi_\textref}[r^*]\Big) - \Big(\EE_{\rho,\pi}[r] - \EE_{\rho,\pi_\textref}[r]\Big)\Big| \\
        \leq & \EE_{s\sim\rho,a\sim\pi(\cdot|s),\ta\sim\pi_\textref(\cdot|s)}[|\Big(r^*(s,a) - r^*(s,\ta)\Big) - \Big(r(s,a) - r(s,\ta)\Big)|] \\
        \leq & 4e^{2{\Rmax}} \EE_{s\sim\rho,a\sim\pi(\cdot|s),\ta\sim\pi_\textref(\cdot|s)}[|\sigma\Big(r^*(s,a) - r^*(s,\ta)\Big) - \sigma\Big(r(s,a) - r(s,\ta)\Big)|] \tag{Lem.~\ref{lem:sigmoid} with $C = 2\Rmax$} \\
        = & 4e^{2{\Rmax}} \sum_{s,a,\ta} \sqrt{\rho(s) \pi_\textref(\ta|s)} \frac{\pi(a|s)}{\sqrt{\tpi(a|s)}} \\
        &\quad\quad\quad\quad\qquad\qquad \cdot \sqrt{\rho(s) \pi_\textref(\ta|s)}\sqrt{\tpi(a|s)}\Big|\sigma\Big(r^*(s,a) - r^*(s,\ta)\Big) - \sigma\Big(r(s,a) - r(s,\ta)\Big)\Big| \\
        \leq & 4e^{2{\Rmax}} \sqrt{\sum_{s,a,\ta} \rho(s) \pi_\textref(\ta|s) \frac{\pi^2(a|s)}{\tpi(a|s)}} \\
        &\quad\quad\quad\quad\qquad\qquad \sqrt{\EE_{s\sim\rho,a\sim\tpi(\cdot|s),\ta\sim\pi_\textref(\cdot|s)}[\Big|\sigma\Big(r^*(s,a) - r^*(s,\ta)\Big) - \sigma\Big(r(s,a) - r(s,\ta)\Big)\Big|^2]} \tag{Cauchy–Schwarz inequality} \\
        =& 4e^{2{\Rmax}} \sqrt{\cov^{\pi|\tpi}} \cdot \sqrt{\EE_{s\sim\rho,a\sim\tpi(\cdot|s),\ta\sim\pi_\textref(\cdot|s)}[\Big|\sigma\Big(r^*(s,a) - r^*(s,\ta)\Big) - \sigma\Big(r(s,a) - r(s,\ta)\Big)\Big|^2]} \numberthis\label{eq:eq_ref_2}.
    \end{align*}
    Note that, 
    \begin{align*}
        &\EE_{s\sim\rho,a\sim\tpi(\cdot|s),\ta\sim\pi_\textref(\cdot|s)}[\Big|\sigma\Big(r^*(s,a) - r^*(s,\ta)\Big) - \sigma\Big(r(s,a) - r(s,\ta)\Big)\Big|^2]\\
        \leq & 8 \EE_{s\sim\rho,a\sim\tpi(\cdot|s),\ta\sim\pi_\textref(\cdot|s)}[|\sqrt{\sigma(r^*(s,a) - r^*(s,\ta))} - \sqrt{\sigma(r(s,a) - r(s,\ta))}|^2] \tag{$(x-y)^2 \leq 4(x+y)(\sqrt{x} - \sqrt{y})^2$}\\
        \leq & 8 \EE_{s\sim\rho,a\sim\tpi(\cdot|s),\ta\sim\pi_\textref(\cdot|s)}[\mH^2(\mP_{r}(\cdot|s,a,\ta)\|\mP_{r^*}(\cdot|s,a,\ta))]\numberthis\label{eq:eq_ref_1}.
    \end{align*}
    By plugging into Eq.~\eqref{eq:eq_ref_2}, we finish the proof.
\end{proof}

\begin{lemma}[Sigmoid Function]\label{lem:sigmoid}
    Given $x,y \in [-C, C]$ for some $C > 0$,
    \begin{align*}
        |x - y| \leq 4\exp(C)|\sigma(x) - \sigma(y)|.
    \end{align*}
\end{lemma}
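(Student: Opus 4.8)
The plan is to prove the bound via the mean value theorem together with a lower bound on $\sigma'$ over the interval $[-C,C]$. Recall that $\sigma$ is smooth on $\mathbb{R}$ with $\sigma'(t) = \sigma(t)\bigl(1-\sigma(t)\bigr) = \frac{e^{-t}}{(1+e^{-t})^2} > 0$. Hence $\sigma$ is strictly increasing, and for any $x,y \in [-C,C]$ the mean value theorem gives some $\xi$ strictly between $x$ and $y$ (so $\xi \in [-C,C]$) with $\sigma(x)-\sigma(y) = \sigma'(\xi)(x-y)$. Rearranging, $|x-y| = |\sigma(x)-\sigma(y)| / \sigma'(\xi) \le |\sigma(x)-\sigma(y)| \big/ \bigl(\inf_{\xi\in[-C,C]} \sigma'(\xi)\bigr)$, so it suffices to show $\inf_{\xi\in[-C,C]} \sigma'(\xi) \ge \tfrac{1}{4} e^{-C}$.

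To bound this infimum, substitute $u = e^{-\xi}$, so that $\sigma'(\xi) = \frac{u}{(1+u)^2}$. As a function of $u>0$ this is invariant under $u \mapsto 1/u$ (since $\frac{1/u}{(1+1/u)^2} = \frac{u}{(1+u)^2}$) and is unimodal with its maximum $\tfrac14$ at $u=1$; therefore, over the interval $u \in [e^{-C}, e^{C}]$ corresponding to $\xi \in [-C,C]$, its minimum is attained at the two (equal-valued) endpoints and equals $\frac{e^{-C}}{(1+e^{-C})^2}$. Finally, $C>0$ forces $e^{-C} \le 1$, hence $(1+e^{-C})^2 \le 4$, so $\frac{e^{-C}}{(1+e^{-C})^2} \ge \frac{e^{-C}}{4} = \frac{1}{4e^{C}}$.

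Combining the two displays, $|x-y| \le 4 e^{C}\,|\sigma(x)-\sigma(y)|$, which is the desired inequality. There is no substantial obstacle here; the only point needing a word of justification is the endpoint-minimality of $\sigma'$ on $[-C,C]$, which follows at once from the $u \leftrightarrow 1/u$ symmetry and unimodality of $\frac{u}{(1+u)^2}$. (An alternative, avoiding the symmetry argument entirely, is to note $\sigma'(\xi) = \sigma(\xi)(1-\sigma(\xi))$ with $\sigma(\xi) \in \bigl[\sigma(-C), \sigma(C)\bigr] = \bigl[\tfrac{1}{1+e^{C}}, \tfrac{1}{1+e^{-C}}\bigr]$ and then minimize $p(1-p)$ over that sub-interval of $[0,1]$, again attained at an endpoint.)
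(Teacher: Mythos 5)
Your proof is correct and follows essentially the same route as the paper's: identify an intermediate point where the derivative equals the difference quotient (mean value theorem), then lower bound $\sigma'$ on $[-C,C]$ by its value at the endpoints via the symmetry of $\sigma'(\xi)=\sigma(\xi)(1-\sigma(\xi))$, giving $\sigma'\ge \tfrac{1}{4}e^{-C}$ and hence the factor $4e^{C}$. Your parenthetical alternative (minimizing $p(1-p)$ over $[\sigma(-C),\sigma(C)]$) is in fact exactly the paper's argument.
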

\begin{proof}
    Without loss of generality, we assume $x \geq y$. Because $\sigma(\cdot)$ is a monotonically increasing function and it is continuous, we know there exists $z \in [y, x]$ s.t. 
    \begin{align*}
        \frac{\sigma(x) - \sigma(y)}{x - y} = \sigma'(z) = \sigma(z)(1 - \sigma(z)) = \sigma(z) \sigma(-z).
    \end{align*}
    Because $x,y \in [-C,C]$, we have $\sigma(-C) \leq \sigma(z) \leq \sigma(C)$.
    Note that the axis of symmetry of function $f(a) = a(1-a)$ is $1/2$ and $\frac{1}{2} - \sigma(C) = \sigma(-C) - \frac{1}{2}$. Therefore,
    \begin{align*}
        |x - y| =& \frac{1}{\sigma'(z)}|\sigma(x) - \sigma(y)| \leq (1 + \exp(C))(1 + \exp(-C)) \cdot |\sigma(x) - \sigma(y)| \\
        \leq & 2(1+\exp(C))|\sigma(x) - \sigma(y)| \leq 4\exp(C)|\sigma(x) - \sigma(y)|
    \end{align*}
\end{proof}

\section{Experiment Details and Additional Results}\label{appx:experiment}

\subsection{Details in Experiment Setup}
\paragraph{Setup of $r^*$}
Due to the high cost of collecting real human feedback, we use preferences generated by Llama3-8B \citep{dubey2024llama} to simulate the ground-truth human annotations.
More concretely, we adopt \texttt{sfairXC/FsfairX-LLaMA3-RM-v0.1} \citep{dong2405rlhf} as the true reward model $r^*$, which is distilled from \texttt{meta-llama/Meta-Llama-3-8B-Instruct}.
This reward model can be queried with prompt-response pairs and returns reward scores for each of them.

\paragraph{Best-of-N as an Approximation of $\pi^*_{r^w}$}
We recall that we consider 4 source tasks for transfer learning, including, (a) ROUGE-Lsum score \citep{lin2004rouge}, (b) BERTScore \citep{zhang2019bertscore}, (c) T5-base (250M) $\pi_{\text{base}}$, (d) T5-large (770M) $\pi_{\text{large}}$.

To reduce computational complexity, instead of explicitly training the optimal policies $\{\pi^*_{r^w}\}_{w\in[W]}$ associated with each source reward model, we use Best-of-N (BoN) approach, a.k.a. rejection sampling, to approximate the responses generated by $\pi^*_{r^w}$.
Specifically, we generate $\texttt{N}$\footnote{To distinguish with $N$ used to denote the block size in Alg.~\ref{alg:empirical}, we use $\texttt{N}$ to denote the size of BoN.} responses by the online learning policy $\pi^k_\base$ in Alg.~\ref{alg:empirical}, rank them according to the reward model, and select the top-ranked ones.

Furthermore, even for source LLM policies (3) and (4), we find that transferring from BoN-selected responses generated by $\pi^k_\base$ actually outperforms directly using the responses generated by T5-base/large.
We hypothesize that it is because the responses by T5-base/large usually have quite low probability of being generated by the online learning policy, leading to a distribution shift that complicates learning.
In contrast, BoN-selected responses maintain non-trivial probability of being sampled, without significant distributional mismatch.

Next, we elaborate the BoN process with more details.
In our experiment, we choose $\texttt{N}=32$.
For source reward models (1) and (2), we generate $\texttt{N}$ responses, and we compute the ROUGE-Lsum/BERTScore between the generated response and the human-provided summary in XSum dataset as the reward value.
For source policies (3) and (4), motivated by the closed-form solution in Eq.~\eqref{eq:closed_form}, given any prompt $s$ and response $a$, we infer the log-probability of T5-base/large predicting $a$ given $s$ as the reward score, i.e. $\log \pi_{\text{base}}(a|s)$ or $\log \pi_{\text{large}}(a|s)$.
In another word, we interpret T5-base/large as the optimal policies fine-tuned from uniform distribution to align with some reward models, which we treated as source rewards for transfer learning.

\paragraph{Training Details}
Our training setup is based on and adapted from \citep{xiong2024iterative,xie2024exploratory}.
We run for 3 iterations ($K=3$), and in each iteration, we sample a training dataset of size 10k (i.e., the block size $N=$10k). For each prompt, we collect 8 responses as follows.
Firstly, we generate $\texttt{N} + 4$ responses by the online learning policy $\pi^k_\base$.
The initial $\texttt{N}$ responses are used for Best-of-N (BoN) selection, where we choose a source reward model via the UCB strategy in Alg.~\ref{alg:empirical}, and then pick the top 4 from $\texttt{N}$ responses with the highest source rewards.
These 4 responses are merged with the remaining 4 responses and we get a total of 8 responses.
After that, we query $r^*$ to label the reward for those responses, and record the ones with the highest and lowest rewards to serve as positive and negative samples for $\DPO$ training.

In contrast to the procedure presented in Alg.~\ref{alg:empirical}, we utilize 8 responses for each prompt.
Therefore, the win rates are computed in a relatively different way.
Specially, we set $y^{k,n} = 1$ if the response achieving the highest reward comes from the 4 responses selected by the BoN step, and $y^{k,n} = 0$ otherwise.
The updates of the win rates estimation and the computation of UCB bonus terms align with Alg.~\ref{alg:empirical}, except that we set $\hat{\text{WR}}^{\pi^k_\base} = 0.55$ instead of 0.5.
This adjustment establishes a higher threshold for enabling transfer learning, requiring source tasks to outperform the baseline policy $\pi^k_\base$ by a larger margin before being considered.
We believe it enhances overall performance.

Regarding other hyperparameters during the training, the learning rate is 5e-5 with a cosine annealing schedule.
Training is conducted on 4 H-100 GPUs with total batch size 64.
We set the constant parts in the UCB bonus $c \sqrt{\log\frac{1}{\delta}} = 1.0$ in practice, considering the value range of win rates is [0, 1].

\paragraph{Evaluation Details}
During the evaluation phase, we randomly sample 10k prompts from XSum test dataset without repetition. 
For each prompt, we generate one response for each of the policies being compared, and query their reward values from $r^*$ (i.e., the \texttt{sfairXC}\texttt{/FsfairX-LLaMA3-RM-v0.1} reward model).
The win rate is then estimated as the frequency of that one policy generates a response with higher rewards than the other across the 10k prompts.

\subsection{Additional Experiment Results}\label{appx:additional_results}

\paragraph{Results under Other Choices for $\text{Alg}_{\text{PO}}$ in Alg.~\ref{alg:empirical}}
In the following, we report the results with two alternative instaniations of $\text{Alg}_{\text{PO}}$: by optimizing the XPO loss \citep{xie2024exploratory} or the IPO loss \citep{azar2024general}.
All the training setups are the same as the experiments where $\text{Alg}_{\text{PO}}$ is DPO, except that we choose a smaller learning rate 1e-5 for $\text{Alg}_{\text{PO}}$ is IPO.

\begin{remark}
    Different from DPO and IPO, XPO is an online algorithm itself and in their original design, the pairs of online data are generated by an online exploration policy and another fixed base policy, respectively.
    However, empirically, \citet{xie2024exploratory} follow the iterative-DPO and utilize the same online learning policy to generate pairs of online data.
    This exactly aligns with the no transfer baseline we compete with---instantiating $\text{Alg}_{\text{PO}}$ with XPO in Alg.~\ref{alg:empirical} and setting $W=0$, which we refer as iterative-XPO in this paper.
\end{remark}

\begin{table}[h]
    \begin{subtable}{0.52\textwidth}
    \begin{tabular}{cccc}
        \hline
                & \makecell{Without \\ Transfer} & \makecell{Purely Exploit \\ ROUGE-Lsum} & \makecell{Purely Exploit \\ T5-Large} \\
                \hline
         Iter 1 &  $52.3\pm1.0$ & $50.4\pm1.6$ & $49.9\pm0.4$\\
         Iter 2 &  $55.2\pm1.4$ & $52.3\pm0.3$ & $50.1\pm0.3$\\
         Iter 3 &  $55.3\pm1.1$ & $51.8\pm0.5$ & $50.3\pm0.5$\\\hline
    \end{tabular}
    \caption{IPO as $\text{Alg}_{\text{PO}}$ in Alg.~\ref{alg:empirical}}
    \label{tab:IPO}
    \end{subtable}
    \begin{subtable}{0.52\textwidth}
    \begin{tabular}{cccc}
        \hline
                & \makecell{Without \\ Transfer} & \makecell{Purely Exploit \\ ROUGE-Lsum} & \makecell{Purely Exploit \\ T5-Large} \\
                \hline
         Iter 1 &  $52.3\pm 1.1$ & $53.4\pm0.8$ & $50.2\pm0.3$\\
         Iter 2 &  $51.6\pm1.3$ & $54.7\pm1.6$ & $49.1\pm1.3$\\
         Iter 3 &  $52.2\pm1.6$ & $53.8\pm2.9$ & $49.2\pm1.1$\\\hline
    \end{tabular}
    \caption{XPO as $\text{Alg}_{\text{PO}}$ in Alg.~\ref{alg:empirical}}
    \label{tab:XPO}
    \end{subtable}
    \caption{Similar to Table~\ref{tab:experiment}, we report the win rates (\%) of the policies trained by empirical $\TPO$ (Alg.~\ref{alg:empirical}) competed with 3 baselines, presented across 3 columns. {Baseline (I)}: without transfer, i.e., iterative-IPO or iterative-XPO. {Baseline (II)}:  purely utilizing ROUGE-LSum (the lowest-quality source task) in transfer learning. {Baseline (III)}: purely utilizing T5-Large (the highest-quality source task) in transfer learning. Results are averaged with 3 random seeds and 95\% confidence levels are reported.}
\end{table}

\paragraph{Investigation on Source Task Selection}
Fig.~\ref{fig:selection_details} provides further investigations on the source task selection process.
For each iteration $k=1,2,3$, we count the number of times that $\pi^{k,n}$ is occupied by different transfer policies $\{\pi^*_{r^w}\}_{w\in[W]}$ or the online learning policy $\pi^k_\base$ (i.e. without transfer), and provide the results on the top sub-figure in Fig.~\ref{fig:selection_details}.
Besides, in the bottom sub-figure, we report the win rates $\mP_{r^*}(\pi \succ \pi^k_\base)$ for all $\pi \in \{\pi^*_{r^w}\}_{w\in[W]} \cup \{\pi^k_\base\}$.
As illustrated, the UCB sub-routine efficiently explores and identify the source task with the highest win rates against the learning policy $\pi^k_\base$.

Notably, as the improvement of $\pi^k_\base$ over the three iterations, we can observe the transition from transfer learning by leveraging high-quality source tasks to standard online learning.
In other words, our method can automatically switch back to online learning and avoid being restricted by source reward models.

\begin{figure}[t]
    \centering
    \includegraphics[scale=0.6]{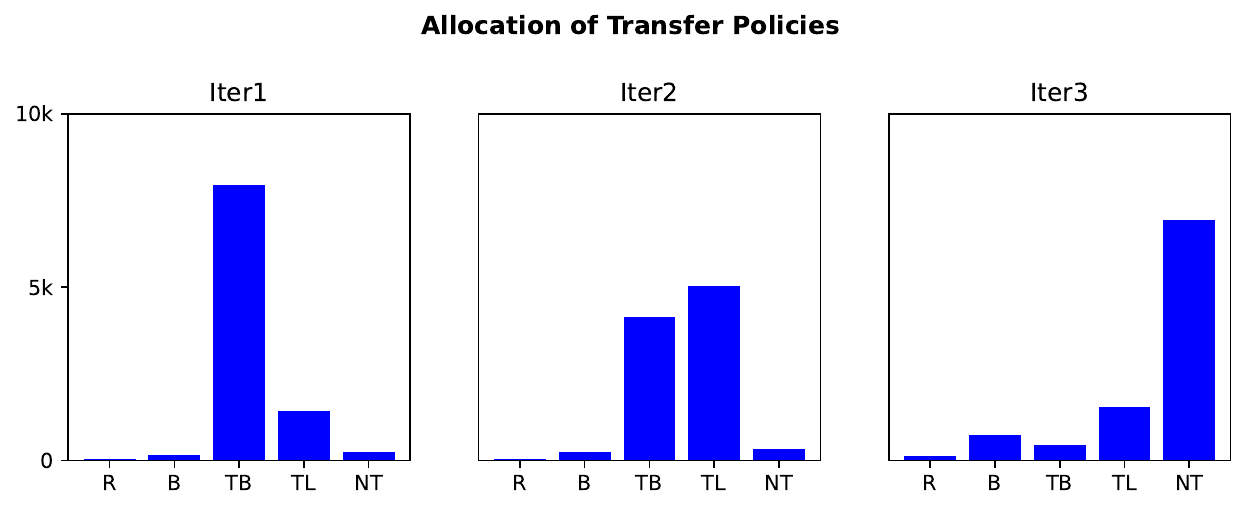}
    \includegraphics[scale=0.6]{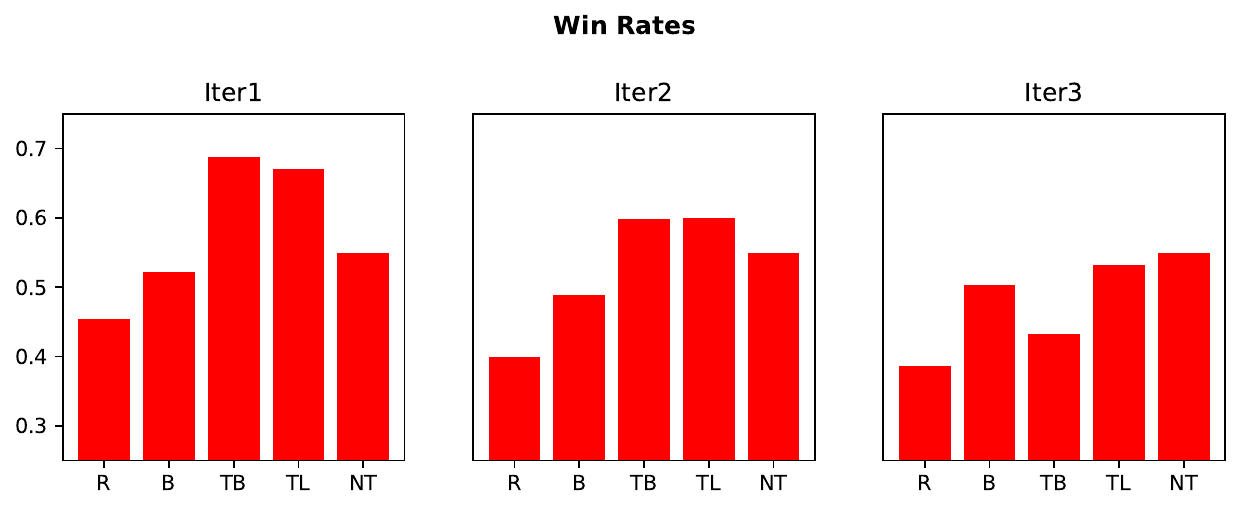}
    \caption{Deeper investigation on the source reward models selection process. We report the allocation of transfer budgets on each source tasks averaged over 3 trials (top figure) and the win rates $\mP_{r^*}(\cdot\succ\pi^k_\base)$ (bottom figure) for iterations $k=1,2,3$.
    Due to space limit, we use abbreviation rather than the full name of source tasks.
    \text{R}, \text{B}, \text{TB}, \text{TL} and \text{NT} stand for ROUGE-Lsum, BERTScore, T5-Base, T5-Large and No Transfer, respectively.
    }\label{fig:selection_details}
\end{figure}

\end{document}